\documentclass{article}

\PassOptionsToPackage{numbers, sort&compress}{natbib}
\usepackage{arxiv}
\usepackage[utf8]{inputenc} 
\usepackage[T1]{fontenc}    
\usepackage{hyperref}       
\usepackage{url}            
\usepackage{booktabs}       
\usepackage{amsfonts}       
\usepackage{nicefrac}       
\usepackage{microtype}      
\usepackage[dvipsnames]{xcolor}         
\hypersetup{hidelinks}
\usepackage{amsmath}
\usepackage{amsthm}
\newtheorem{theorem}{Theorem}
\newtheorem{lemma}{Lemma}
\usepackage{graphicx}
\usepackage{etoc}
\usepackage{enumitem}

\newtheoremstyle{assumptionstyle} 
  {6pt}   
  {6pt}   
  {\normalfont}  
  {}      
  {\bfseries} 
  {.}     
  {0.5em} 
  {\thmname{#1}\thmnumber{ #2}\thmnote{ \textbf{(#3)}}}      

\newtheorem{assumption}{Assumption}
\newtheorem{corollary}{Corollary}

\newcommand{\Tr}{\text{Tr}}
\newcommand{\ba}{\mathbf{a}}
\newcommand{\bb}{\mathbf{b}}

\newcommand{\bv}{\mathbf{v}}

\newcommand{\w}{\mathbf{w}}
\newcommand{\x}{\mathbf{x}}

\newcommand{\bxi}{\mathbf{\xi}}
\newcommand{\X}{\mathbf{X}}
\newcommand{\y}{\mathbf{y}}
\newcommand{\bu}{\mathbf{u}}
\newcommand{\T}{\top}
\newcommand{\U}{\mathbf{U}}
\newcommand{\E}{\mathbf{E}}

\newcommand{\I}{\mathbf{I}}
\newcommand{\V}{\mathbf{V}}

\newcommand{\Ss}{\mathbf{S}}
\newcommand{\Z}{\mathbf{Z}}
\newcommand{\A}{\mathbf{A}}

\newcommand{\Pm}{\mathcal{P}_m}
\newcommand{\Pperp}{\mathcal{P}_\perp}
\newcommand{\Xm}{\mathbf{X}_m}
\newcommand{\Xperp}{\mathbf{X}_\perp}

\newcommand{\bSigma}{\mathbf{\Sigma}}
\newcommand{\wstar}{{\mathbf{w^\star}}}
\newcommand{\what}{{\mathbf{\hat{w}}}}
\newcommand{\vtilde}{\tilde{\mathbf{v}}}
\newcommand{\lamtilde}{\tilde{\lambda}}

\newcommand{\daggerfootnote}{\textsuperscript{†}}

\title{Optimal Representation Size: High-Dimensional Analysis of Pretraining and Linear Probing}

\author{ \href{https://orcid.org/0009-0001-1742-004X}{\includegraphics[scale=0.06]{orcid.pdf}\hspace{1mm}Valentina Njaradi} \\
	Gatsby Computational Neuroscience Unit\\
	University College London\\
	\texttt{valentina.njaradi.23@ucl.ac.uk} \\
	\And
	\href{https://orcid.org/0009-0003-7420-1294}{\includegraphics[scale=0.06]{orcid.pdf}\hspace{1mm}Clémentine Dominé}  \\
	 Institute of Science and Technology Austria
    \And 
    \href{https://orcid.org/0009-0008-4198-2289}{\includegraphics[scale=0.06]{orcid.pdf}\hspace{1mm}Rachel Swanson} \\
	Sainsbury Wellcome Centre\\
	University College London\\
    \And
    \href{https://orcid.org/0000-0002-3242-7020}{\includegraphics[scale=0.06]{orcid.pdf}\hspace{1mm}Marco Mondelli}\daggerfootnote \\
	Institute of Science and Technology Austria
    \And
    \href{https://orcid.org/0000-0002-9831-8812}
    {\includegraphics[scale=0.06]{orcid.pdf}\hspace{1mm}Andrew Saxe}\daggerfootnote \\
	Gatsby Computational Neuroscience Unit\\
	Sainsbury Wellcome Centre\\
	University College London \\
}
\date{}
\begin{document}
\maketitle

\footnotetext{Co-senior authors marked with $^\dagger$.}

\begin{abstract} 
Learning to generalise from limited data is a fundamental challenge for both artificial and biological systems. A common strategy is to extract reusable
structure from abundant unlabelled data, enabling efficient adaptation to new tasks from limited labelled data. This two-stage paradigm is now standard in modern training pipelines, where pretraining is followed by fine-tuning or linear probing. 
We provide an analytical model of this process: structure extraction is formalized as principal component analysis on unlabelled data, and downstream learning as linear regression on a separate labelled dataset. In the high-dimensional regime, we derive exact expressions for training and generalisation error showcasing their dependence on representation dimensionality, unlabelled and labelled sample sizes, and task alignment. Our results show that pretrained representations strongly influence downstream generalisation, and we characterize the optimal representation size as a function of task parameters: with abundant pretraining data but scarce downstream data, maximally compressed representations are optimal, whereas with limited pretraining data, higher-dimensional representations generalise better. Furthermore, we establish an exact trade-off between pretraining and supervision, quantifying how much unlabelled data is required to replace a single labelled sample. Beyond our idealised model, we observe similar phenomenology in autoencoders and pretrained LLMs. Altogether, we highlight that optimising representation size is critical, giving conditions for when compression during pretraining improves generalisation.
\end{abstract}

\section{Introduction}
\label{sec:intro}
Large language models, vision transformers, and related architectures have demonstrated language, reasoning and mathematical capabilities widely considered out of reach just a decade ago \cite{openai2024GPT4, grattafiori_llama_2024, wang_glue_2019, hendrycks2021Measuring}. This progress has been made possible by multi-stage training pipelines capable of leveraging massive amounts of unlabelled data:
models are first pretrained on a large corpus of unlabelled data, then fine-tuned on labelled tasks and finally refined, e.g., via reinforcement learning from human feedback \cite{christiano2023Deep, ouyang2022Training}, low-rank adaptation \cite{hu2022LoRA} or test-time training \cite{sun2020test,liu2021ttt++,akyurek2025surprising}. Such a paradigm unlocks remarkable sample efficiency, as large pretrained models generalise on new tasks where they have access only to thousands (or even hundreds) of samples \cite{aghajanyan2021Intrinsic, howard-ruder-2018-universal}.

At their core, these training pipelines rely on a key fact: pretraining on large datasets yields rich representations that effectively transfer to downstream tasks. This idea is not new, as greedy layer-wise training and stacked autoencoders were designed to learn reusable representations in pretraining a decade before modern transformers  \cite{bengio2007Greedy, erhanWhy, vincentStacked}. However, different pretraining objectives emphasize different aspects of the data, and it remains unclear which features of a representation are most useful for downstream learning. 

Characterizing these representations, how they support computation, transfer to new tasks and enable adaptive behaviour in artificial and biological systems, has been a longstanding focus in machine learning, as well as cognitive science and neuroscience \cite{bengio2014representationlearningreviewnew,martin2007representation,fusi_why_2016}. In machine learning, structured representations and prior knowledge can accelerate learning and enhance generalisation, but may also hinder adaptation, as exemplified by catastrophic interference in continual and reversal learning \cite{parisi2019continual,kirkpatrick2017overcoming,zenke2017continual, braun2022exact}. Similar trade-offs arise in biological systems, where representations must support flexible and efficient adaptation \cite{badre_dimensionality_2021}. A crucial property underlying these effects is the size (or dimensionality) of representations. Both high and low-dimensional representations have been observed across a range of brain regions and tasks \cite{boyle2024Tuned, courellis2024Abstract, mishchanchuk2024hidden, nogueira2023Geometry} as well as in machine learning models  \cite{aghajanyan2021Intrinsic, valeriani2023Geometry}, yet a principled account of how dimensionality interacts with task and environment structure to shape learning performance and sample efficiency remains missing. 

Here, we study a two-stage learning setting in which representations are first learned via principal component analysis (PCA) from unlabelled samples drawn from a spiked covariance model, and then reused by linear regression on an independent labelled dataset. The covariance structure plays a central role: it produces both a tractable model for analysis and controlled alignment between the pretraining and supervised training stages, thus enabling us to isolate how pretrained representations influence downstream generalisation. More precisely, our contributions are as follows:
\begin{itemize}[leftmargin=*]
    \item We derive exact asymptotic expressions for estimation, generalisation and training error (Theorem~\ref{thm:estimation_error},~\ref{thm:generalisation_error} and ~\ref{thm:training_error}, respectively). From this, we establish the improvement in generalisation coming from tuning the representation size, as well as the improvement due to adding more pretraining and finetuning data. For optimally-tuned errors, we identify regions where unlabelled samples are more valuable than labelled samples (Figure~\ref{fig:fig1}).
    \item We characterize when either \emph{(i)} low-dimensional representations are useful for generalisation on downstream tasks, or \emph{(ii)} tasks benefit from detailed 
    high-dimensional representations (Figure~\ref{fig:fig2}). In particular, we identify phase transitions where compression becomes useful (Corollaries~\ref{cor:sharp_transition},~\ref{cor:smooth_transition}), and describe their behaviour for an infinite number of pretraining samples (Corollary~\ref{cor:phases_infinite_unlabelled_limit}).
    \item We finally show that the theoretical predictions derived in our idealized setting extend to trained autoencoders when the number of samples is larger than the number of features, and further demonstrate that the insights unveiled by our theory persist in more realistic settings through linear probing of pretrained transformers (Figure~\ref{fig:fig3}).
\end{itemize}

\section{Related work}

\textbf{High-dimensional regression.} High-dimensional regression, in the regime where number of features and samples grow proportionally, has been the subject of intense investigation, leading to the precise characterization of the test error \cite{dobriban2015HighDimensional,hastie2020surprises,richardsAsymptotics,wu_optimal_2020} and phenomena such as benign overfitting \citep{Bartlett_2020} and double descent \citep{belkin2019reconciling, advani2020Highdimensional}.
This line of work further characterizes the distribution of the empirical risk minimiser \cite{han2023distribution}, the impact of spurious correlations \cite{bombarispurious}, performative learning \cite{cyffers2025optimal}, as well as training on multiple or surrogate data sources \cite{rezaei2025high,ildiz2024highdimensional,kolossovtowards,jain2024scaling}. 
Principal component regression (PCR), where regression is performed on principal components (PCs) of the sample covariance, provides a natural way to restrict regression to a low-dimensional subspace. Its performance depends critically on the number of retained components, which governs the bias–variance trade-off \cite{jolliffe_note_1982,xu2019Number,green_high-dimensional_2025,massyPrincipala}, and has been characterized both non-asymptotically \cite{dhillon2013Risk} and in proportional asymptotics for population and sample PCs \cite{yao2015Large,xu2019Number,green_high-dimensional_2025}. In contrast, we study pretrained regression (PR), with PCs formed during a pretraining phase and reused for a downstream task, separating the sample sets for PCA and subsequent regression. As a result, both the asymptotic risk and the optimal number of principal components for PR differ fundamentally from the classical PCR setting.

\textbf{Pretraining and transfer learning.}
More broadly, the reuse of pretrained representations across tasks has been extensively studied in transfer learning, primarily in supervised settings where generalisation depends on task similarity or alignment with pretrained features \cite{pan2010survey, zhuang2020comprehensive, gerace2022probing, lampinen2018analytic, tahir2025features,yang2020precise, song2024generalization}. These approaches typically do not model representations learned from unlabelled data. Complementary work establishes benefits of unsupervised and self-supervised pretraining \cite{lee2021predicting, azar2024semisupervised, arora2019theoretical, zhang-hashimoto-2021-inductive}. Closest to our paper, recent work analyses PCA-based pretraining in spiked models \cite{jonesmccormick2025provable}, but  considers fixed representations without addressing the role of representation dimensionality in generalisation. 

\textbf{Representation dimensionality.} 
A growing body of work highlights that pretrained representations often lie in low-dimensional subspaces, as evidenced by studies of intrinsic dimensionality \cite{aghajanyan2021Intrinsic, valeriani2023Geometry}, low-rank adaptation \cite{hu2022LoRA}, and linear probing \cite{alain2018Understanding}. Recent theoretical work has begun to formalise aspects of this structure \cite{wang2026Mathematical, anguita2026theory}. While these papers study the structure of pretrained features and how they support downstream prediction, they do not provide a closed-form characterisation of how the optimal retained dimension depends jointly on pretraining, downstream data, and task structure. Addressing this question through an analytically tractable theory is the focus of our work.

\textbf{Spiked covariance models.} 
The spiked covariance model \cite{johnstone2001Distribution} provides a tractable framework for studying low-dimensional structure in high-dimensional data. A key feature is the BBP phase transition \cite{baik2005phase,marcenko_distribution_1967,bai_spectral_2010}: above this threshold, empirical principal components align with the population spike, while below it they are asymptotically uninformative \cite{baik2005phase,baik_eigenvalues_2004,paulASYMPTOTICS,bai2012Sample}. This transition makes the spiked models well-suited for analysing how representation quality depends on sample size and signal strength. Accordingly, they have been used to analyse kernel methods, neural networks and feature learning \cite{ba2023learning,ghorbani2020neural,mousavi2023gradient}, as well as the emergence of low-rank structure during training \cite{ba2022high,cui2024asymptotics,dandi2025random,sonthalia2025lowrank}. Finally, spiked covariances have been
used to analyse PCR, showing how the projection onto empirical PCs has a regularising effect \cite{gedon2024nodoubledescent}. Our setting differs from these works: the signal subspace is estimated from an independent unlabelled pretraining sample and then used for a separate labelled task, allowing us to characterise the optimal representation dimension. 

\section{Setup} \label{sec:setup}

We study two-stage learning with unlabelled pretraining followed by supervised learning:
\begin{enumerate}[leftmargin=*]
    \item In the pretraining stage, we observe inputs $\X_u \in \mathbb R^{n_u \times p}$ whose rows are i.i.d.\ $\mathcal{N}(\mathbf 0, \bSigma)$, forming the \emph{unlabelled} dataset $\mathcal D_u$. From this data, the model extracts a representation via PCA: it computes the top $m$ principal components of $\X_u$, stacks them into $\U_m \in \mathbb R^{p \times m}$, and defines a projection of retained directions $\Pm = \U_m \U_m^\top$ with a discarded complement $\Pperp = I - \Pm = \U_\perp \U_\perp^\T$. If $m > n_u$, the PCA subspace is completed with uniformly random orthonormal vectors.
    \item In the downstream stage, we observe a \emph{labelled} dataset $\mathcal D_l =\{\X_l, \y_l\}$, where $\y_l = \X_l \wstar + \bxi\in \mathbb R^{n_l}$, the rows of $\X_l \in \mathbb R^{n_l \times p}$ are i.i.d.\ $\mathcal{N} (\mathbf 0, \bSigma)$ and independent of $\X_u$, $\wstar \in \mathbb{R}^{p}$ is the ground-truth signal and $\bxi \in \mathbb{R}^{n_l}$ is Gaussian noise with i.i.d. entries of variance $\sigma^2$.
\end{enumerate}

The pretrained regression model uses the representation learned from $\mathcal D_u$ to fit the downstream task by regressing on the projected inputs (see Figure~\ref{fig:fig0}\textbf{(a)} for a schematic representation):
\begin{equation}\label{eq:weight_estimate}
     \what = (\X_l \Pm )^{\dagger} \y_l  =  (\X_l \Pm)^{\dagger} \X_l \wstar +  (\X_l \Pm)^{\dagger} \bxi,
\end{equation}
where $(\cdot)^\dagger$ denotes the pseudoinverse. We denote by $\Pi$ the projection onto the row space of $\X_l\Pm$, i.e., the subspace spanned by the projected training inputs. This setup follows \cite{green_high-dimensional_2025}, with the key difference that the projection subspace is learned from independent unlabelled data rather than the downstream dataset itself. 
The quantities of interest are estimation, generalisation and training error:
\begin{align*}
E^\mathrm{est} &= \mathbb{E}_{\bxi}\!\left[\|\wstar - \what\|^2\right], \quad 
E^\mathrm{gen} = \mathbb{E}_{\x, y, \bxi}\!\left[(y - \x^\top\what)^2 \right], \quad 
E^\mathrm{train} = \frac{1}{n_l}\mathbb{E}_{\bxi}\!\left[\|\y_l - \X_l\what\|^2 \right]
\end{align*}
where the test point $(\x, y)$ in $E^\mathrm{gen}$ is drawn from $\mathcal{N}(0, \bSigma_\mathrm{new})$. 
We work in the high-dimensional regime where $p/n_l \to \gamma_l \in (0,\infty)$, $
p /n_u \to \gamma_u \in (0,\infty)$, $m /p \to \alpha \in [0,1]$ with $\alpha$ the fraction of dimensions to keep, and $\gamma_u, \gamma_l$ the aspect ratios for the datasets $\mathcal D_u, \mathcal D_l$.
Due to the projection of the downstream inputs, we define the effective aspect ratio $\gamma_{\mathrm{eff}} := \frac{m}{n_l}\to\alpha \gamma_l$ and assume $\gamma_\mathrm{eff} \neq 1$.

The population covariance follows a rank-one spiked model $\bSigma = \mathbf I_p + (\lambda - 1) \bv \bv^\top$, where $\lambda > 1$ and the unit vector $\bv \in \mathbb{R}^p$ are fixed. The alignment of $\wstar$ with the spike eigenvector of $\bSigma$ is captured by $(\wstar^\top \bv)^2 / \|\wstar\|^2 \to\eta$, and the mass spectrum of $\wstar$ on the eigenbasis of $\bSigma$ is $\eta \delta_\lambda + (1-\eta)\delta_1$. The test population covariance follows a spiked model with $T$ spike directions $\{\bv_{\mathrm{new},i}\}_{i=1}^T$ and eigenvalues $\{\nu_i\}_{i=1}^T$, i.e., $\bSigma_\mathrm{new}=\mathbf I_p + \sum_{i=1}^T(\nu_i - 1) \bv_{\mathrm{new},i} \bv_{\mathrm{new},i}^\top$. 

\begin{figure}[t]
\begin{center}
\includegraphics[width=1\textwidth]{figures/fig0.pdf}
\end{center}
\vspace{-1em}
\caption{\textbf{Analytically tractable model of two-stage learning}. \textbf{(a)} In the pretraining stage, PCA extracts the top-$m$ principal components from $n_u$ unlabelled samples. The downstream task with $n_l$ labelled samples is learned via regression on inputs projected through $\Pm = \U_m \U_m^\top$. Theoretically derived generalisation \textbf{(b)} and training errors \textbf{(c)} match numerical simulations.}
\label{fig:fig0}
\vspace{-1em}
\end{figure}

\section{Theoretical analysis of two-stage learning} \label{sec:Theory}

In this section, we derive asymptotic expressions for the estimation, generalisation, and training errors of the estimator in~\eqref{eq:weight_estimate}. We then characterise the phase transitions in the optimal representation dimensionality $\alpha$, specifically, the regimes in which the optimum shifts from retaining all PCs to retaining only a few PCs, as a function of $n_u$ and $n_l$. We further study how $n_u$, $n_l$, $\lambda$ and the signal-to-noise ratio $\mathrm{SNR} = \frac{\|\mathbf{w}^*\|^2}{\sigma^2}$ 
affect both the errors and the optimal $\alpha$, as well as the limiting behaviour with infinite pretraining data ($\gamma_u=0$). Proofs of all results are deferred to Appendix~\ref{app:theory}.

\subsection{Asymptotic expressions}

The following deterministic limits characterise how the learned
projection $\Pm$, its complement $\Pperp$, and the downstream effective row-space
projector $\Pi$ capture fixed signal directions. Under the high-dimensional
scaling assumptions of Section~\ref{sec:setup}, for any deterministic
vectors $\ba,\bb$ with bounded normalized overlaps with the population
spike directions, the bilinear projection terms satisfy
\begin{align}
    &\frac{\ba^\top \Pperp \bb}{\|\ba\|\|\bb\|} \xrightarrow{a.s.} \bar{p}_{\perp,\ba,\bb} 
    &\frac{\ba^\top \Pi \bb}{\|\ba\| \|\bb\|} \xrightarrow{a.s.} \bar{\pi}_{\ba,\bb}, 
\end{align}
with  $\bar{p}_{\perp,\ba,\bb}$ and $\bar{\pi}_{\ba,\bb} $ given by the expressions in~\eqref{eq:bilinear_proj} and \eqref{eq:biliear_effective}, involving integrals over limiting spectral distributions of $\X_u^\top \X_u / n_u$ and $(\X_l \Pm)^\top (\X_l\Pm) /n_l$ (see Lemmas~\ref{lem:bilinear_projection} and \ref{lem:projection_effective_bilinear} in Appendix~\ref{app:projections}). 

\begin{theorem}[Asymptotic estimation error]\label{thm:estimation_error}
As $n_l, n_u, p, m \to \infty$ with $p/n_l \to \gamma_l$, $p/n_u \to \gamma_u$, $m/p \to \alpha$ and $\gamma_\mathrm{eff} \neq 1$, the estimation error converges a.s.\ to
\begin{equation}\label{eq:esterr}    
E^{\mathrm{est}}_\infty = \bar w^{\ast}\left(\underbrace{1 - \bar\pi_{\wstar, \wstar}}_{\text{missing signal}} + \underbrace{\frac{(\lambda-1)^2}{\bar\lambda^2}\bar{p}_{\perp,\wstar,\bv}^2\,\bar\pi_{\bv, \bv}}_{\text{leaked signal}}\right) + \underbrace{\frac{\min\{\gamma_\mathrm{eff},1\}}{|\gamma_\mathrm{eff}-1|}\bigl(\bar w^{\ast}\bar\sigma^2_\mathrm{eff} + \sigma^2\bigr)}_{\text{variance}},
\end{equation}
where $ \bar w^{\ast}$ is the deterministic limit of $\|\wstar\|^2$, $\bar{\lambda} = 1 + (\lambda-1)(1 - \bar{p}_{\perp, \bv, \bv})$ is the effective spike eigenvalue, and the variance from the unrecoverable part of $\wstar$ is \(
\bar \sigma^2_{\mathrm{eff}} = \bar p_{\perp, \wstar, \wstar} + \frac{\lambda-1}{\bar\lambda}\bar p_{\perp, \wstar, \bv}^2.
\) 
\end{theorem}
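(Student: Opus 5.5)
The plan is to condition on the pretraining data $\X_u$, so that $\Pm$ is a fixed projection independent of $(\X_l,\bxi)$, and then split $\what$ into a bias part and a noise part. Write $\Z=\X_l\Pm$. Then $\what = \Z^\dagger \X_l\Pm\wstar + \Z^\dagger \X_l\Pperp\wstar + \Z^\dagger\bxi$. Since $\Z^\dagger\Z=\Pi$, the first term equals $\Pi\wstar$. Because $\bxi$ is independent and centered, taking $\mathbb{E}_{\bxi}$ gives
\[
E^{\mathrm{est}} = \|(\I-\Pi)\wstar - \Z^\dagger\X_l\Pperp\wstar\|^2 + \sigma^2\Tr\big((\Z^\top\Z)^\dagger\big).
\]

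\textbf{Step 1: decouple the complement part.} Split $\X_l\Pperp\wstar$ into its component correlated with $\X_l\Pm$ and an independent residual. The rows $\x_i$ are Gaussian with covariance $\bSigma$, so $\Pperp\x_i$ and $\Pm\x_i$ are jointly Gaussian. The cross-covariance is $\Pperp\bSigma\Pm=(\lambda-1)\Pperp\bv\bv^\top\Pm$, which has rank one. Gaussian conditioning then gives
\[
\x_i^\top\Pperp\wstar = c\,\x_i^\top\Pm\bv + \epsilon_i, \qquad c=\frac{(\lambda-1)\,\wstar^\top\Pperp\bv}{\bv^\top\Pm\bSigma\Pm\bv},
\]
where $\epsilon_i$ is independent of $\Z$. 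Note that $\bv^\top\Pm\bSigma\Pm\bv$ concentrates to $\|\Pm\bv\|^2\bar\lambda$, with $\bar\lambda=1+(\lambda-1)\|\Pm\bv\|^2$.

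The residual variance is $\wstar^\top\Pperp\bSigma\Pperp\wstar - c^2\,\bv^\top\Pm\bSigma\Pm\bv$. Using the bilinear limits of Lemma~\ref{lem:bilinear_projection}, this becomes a deterministic quantity which I will identify with $\bar w^\ast\bar\sigma^2_{\mathrm{eff}}$. This identification needs care with the algebra. Note that the exact conditioning variable is $\x_i^\top\Pm\bSigma\Pperp\wstar$; I will reduce it to $\x_i^\top\Pm\bv$, with a scalar coefficient, by rank-one structure.

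\textbf{Step 2: the correlated part yields the leakage term.} The correlated part contributes $\Z^\dagger\Z(c\Pm\bv)=c\,\Pi\bv$, so the deterministic error vector is $(\I-\Pi)\wstar - c\,\Pi\bv$. Its squared norm is
\[
\wstar^\top(\I-\Pi)\wstar + c^2\,\bv^\top\Pi\bv,
\]
because the cross term vanishes: $(\I-\Pi)$ and $\Pi$ are orthogonal.

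Next, $c\to(\lambda-1)\bar p_{\perp,\wstar,\bv}\|\wstar\|/\bar\lambda$ after normalisation. Here the factor $\|\Pm\bv\|^2$ in $\Pi\bv$ versus the normalisation must be tracked, but $\bar\pi_{\bv,\bv}$ is defined with respect to the raw $\bv$. Lemma~\ref{lem:projection_effective_bilinear} then gives the limits $\bar\pi_{\wstar,\wstar}$ and $\bar\pi_{\bv,\bv}$, producing the missing-signal and leaked-signal terms.

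\textbf{Step 3: variance terms.} The independent residual $\boldsymbol\epsilon$ acts as extra noise of variance $\bar w^\ast\bar\sigma^2_{\mathrm{eff}}$. Its contribution $\|\Z^\dagger\boldsymbol\epsilon\|^2$ concentrates around $\mathrm{var}(\epsilon)\Tr((\Z^\top\Z)^\dagger)$. The cross terms between $\boldsymbol\epsilon$ and the deterministic error vector vanish asymptotically by independence and concentration of Gaussian quadratic forms.

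It then remains to compute $\Tr((\Z^\top\Z)^\dagger)$. $\Z$ is an $n_l\times m$ Gaussian matrix in an $m$-dimensional space whose covariance $\U_m^\top\bSigma\U_m$ is the identity plus a rank-one perturbation. The rank-one spike does not affect the normalised trace, so the standard Marchenko–Pastur pseudoinverse trace gives $\gamma_{\mathrm{eff}}/(1-\gamma_{\mathrm{eff}})$ when $\gamma_{\mathrm{eff}}<1$ and $1/(\gamma_{\mathrm{eff}}-1)$ when $\gamma_{\mathrm{eff}}>1$. Both cases are captured by $\min\{\gamma_{\mathrm{eff}},1\}/|\gamma_{\mathrm{eff}}-1|$, and $\gamma_{\mathrm{eff}}\ne1$ keeps this finite.

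\textbf{Conclusion and main obstacle.} Almost-sure convergence follows by conditioning on $\X_u$ and applying Borel–Cantelli to the concentration bounds. I expect the main obstacle to be Step 1: performing the Gaussian conditioning correctly when $\Pm$ is only partially aligned with $\bv$, so that the leaked coefficient comes out as exactly $(\lambda-1)/\bar\lambda$ times $\bar p_{\perp,\wstar,\bv}$ and the residual variance simplifies to $\bar\sigma^2_{\mathrm{eff}}$. I would first verify this identification in the case $\Pm\bv\propto\bv$, then handle the general case by rotating within $\mathrm{span}\{\Pm\bv,\Pperp\bv\}$.
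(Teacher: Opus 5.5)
Your architecture is the paper's. Gaussian conditioning splits $\X_l\Pperp\wstar$ into a multiple of $\X_l\Pm\bv$, which gives the leak $A\,\Pi\bv$, plus an independent residual, which is the paper's $\E\U_\perp^\top\wstar$. The rest also matches: orthogonality of $\I-\Pi$ and $\Pi$, Gaussian/Hanson--Wright concentration for the residual and cross terms, and the Marchenko--Pastur pseudoinverse trace. Conditioning only the scalar $\x_i^\top\Pperp\wstar$ rather than the block $\X_l\U_\perp$ is a harmless economy.

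The problem is that the step you flag as the crux is written down incorrectly. Your
\[
c=\frac{(\lambda-1)\,\wstar^\top\Pperp\bv}{\bv^\top\Pm\bSigma\Pm\bv}
\]
is not the regression coefficient of $\x_i^\top\Pperp\wstar$ on $\x_i^\top\Pm\bv$. Its numerator must be the covariance $\wstar^\top\Pperp\bSigma\Pm\bv=(\lambda-1)(\wstar^\top\Pperp\bv)\|\Pm\bv\|^2$. Since $\bv^\top\Pm\bSigma\Pm\bv=\|\Pm\bv\|^2\lamtilde$ exactly, your $c$ is too large by a factor $\|\Pm\bv\|^{-2}$. No concentration is needed for that identity; $\lamtilde=1+(\lambda-1)\|\Pm\bv\|^2$ is the finite-$p$ quantity and $\bar\lambda$ its limit. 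With your $c$:
\begin{itemize}
\item $\mathrm{Cov}(\epsilon_i,\Pm\x_i)=(\lambda-1)(\wstar^\top\Pperp\bv)\bigl(1-\|\Pm\bv\|^{-2}\bigr)\Pm\bv\neq 0$, so $\epsilon_i$ is not independent of $\Z$.
\item The formula ``residual variance $=\wstar^\top\Pperp\bSigma\Pperp\wstar-c^2\,\bv^\top\Pm\bSigma\Pm\bv$'' fails.
\item The limit you assert for $c$ in Step 2 does not follow; your formula would carry an extra factor $1/\bar p_{\bv,\bv}$.
\end{itemize}
The ``normalisation of $\Pi\bv$'' you mention is a red herring: $\bv^\top\Pi\bv\to\bar\pi_{\bv,\bv}$ directly.

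The fix is one line and needs neither a special case nor a rotation. Your proposed check with $\Pm\bv\propto\bv$ would not catch the error anyway, since then $\Pm\bv\in\{0,\bv\}$ and there is no leak. With the correct numerator, $c=(\lambda-1)\wstar^\top\Pperp\bv/\lamtilde$, which is exactly the paper's $A$. Equivalently, Sherman--Morrison gives $\bSigma_{mm}^{-1}\bSigma_{m\perp}\U_\perp^\top\wstar=A\,\U_m^\top\bv$. The residual variance is then
\[
\|\Pperp\wstar\|^2+(\lambda-1)(\wstar^\top\Pperp\bv)^2-\frac{(\lambda-1)^2(\wstar^\top\Pperp\bv)^2\|\Pm\bv\|^2}{\lamtilde}
=\|\Pperp\wstar\|^2+\frac{\lambda-1}{\lamtilde}(\wstar^\top\Pperp\bv)^2,
\]
whose normalised limit is $\bar\sigma^2_{\mathrm{eff}}$, and Steps 2--3 go through as you describe.

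One smaller point: your cross-term and quadratic-form concentration also rely on $\gamma_{\mathrm{eff}}\neq 1$, not only the trace limit. You need the smallest nonzero singular value of $\Z$ to be of order $\sqrt p$, with summable failure probability, so that $\|\Z^{\dagger\top}\mathbf u\|^2=O(1/p)$ for the deterministic error vector $\mathbf u$. The paper gets this from a Gaussian singular-value tail bound.
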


The expression in \eqref{eq:esterr} admits a simple geometric interpretation: the estimator observes $\wstar$ through the subspace induced by $\X_l \Pm$, and the error reflects how $\wstar$ decomposes relative to this subspace. By decomposing into two components, aligned or orthogonal to $\U_m$, $\wstar = \wstar_\parallel + \wstar_\perp$, the error splits into three contributions: \emph{(a)} the \textit{missing signal}, \emph{(b)} the \textit{leaked signal}, and \emph{(c)} the \textit{variance} term. 

\begin{itemize}[leftmargin=2em]
    \item[\emph{(a)}] The term $1 - \bar\pi_{\wstar,\wstar}$ captures the \emph{missing signal}, i.e. directions that lie in the retained subspace but cannot be reliably estimated from finite samples. 
    \item[\emph{(b)}] The key non-trivial effect is the \emph{leak}. In an isotropic model, the components $\X_l\Pm$ and $\X_l \Pperp$ are independent, so $\wstar_\perp$ is completely invisible to the estimator. Here, the spike in the covariance couples these two parts of the data, so directions outside the retained subspace become correlated with it. As a result, part of $\wstar_\perp$ leaks into $\Pi$, where it appears as a spurious signal aligned with $\bv$, producing the bias term \( \frac{(\lambda-1)^2}{\bar\lambda^2}\bar p_{\perp,\wstar,\bv}^2\,\bar\pi_{\bv,\bv} \). For a detailed discussion of parameter effects on the leak and a breakdown of different terms in the estimation error, see Figure \ref{fig:app_fig_estimations} in Appendix~\ref{app:Theory-Experiment Validation}.
    \item[\emph{(c)}] The remaining part of $\wstar_\perp$ does not couple back into the learned subspace. 
    Together with label noise, this contributes to the \emph{variance}, which exhibits a double descent \cite{belkin2019reconciling} controlled by $\gamma_\mathrm{eff}=\alpha\gamma_l$. Thus, even without explicit regularisation in the regression step, the model mitigates double descent by controlling the representation size $\alpha$, see Figure \ref{fig:app_fig_estimations} in Appendix~\ref{app:Theory-Experiment Validation}.
\end{itemize}

\begin{theorem}[Asymptotic generalisation error]\label{thm:generalisation_error}
Under the same regime, let $\rho_{\wstar, \bv_{\mathrm{new},i}}$ be the deterministic limit of $\frac{\wstar^\top\bv_{\mathrm{new},i}}{\|\wstar\|} $. The generalisation error converges a.s.\ to
\[
E^\mathrm{gen}_\infty = E^\mathrm{est}_\infty + \bar w^{\ast}\sum_{i=1}^T (\nu_i - 1) \left(\underbrace{\rho_{\wstar, \bv_{\mathrm{new},i}} - \bar\pi_{\wstar,\bv_{\mathrm{new},i}}}_{\text{missing signal}} - \underbrace{\frac{\lambda-1}{\bar\lambda}\bar{p}_{\perp,\wstar,\bv}\,\bar\pi_{\bv,\bv_{\mathrm{new},i}}}_{\text{leaked signal}}\right)^2 + \sigma^2.
\]
\end{theorem}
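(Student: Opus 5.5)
Since the test point is independent of the training data, conditioning on $(\X_u,\X_l,\bxi)$ gives $y-\x^\top\what = \x^\top(\wstar-\what)+\xi_{\mathrm{new}}$. Hence
\[
E^{\mathrm{gen}} = \sigma^2 + \mathbb{E}_{\bxi}\bigl[(\wstar-\what)^\top \bSigma_{\mathrm{new}}(\wstar-\what)\bigr]
= \sigma^2 + E^{\mathrm{est}} + \sum_{i=1}^T(\nu_i-1)\,\mathbb{E}_{\bxi}\bigl[(\bv_{\mathrm{new},i}^\top(\wstar-\what))^2\bigr],
\]
using $\bSigma_{\mathrm{new}}=\I_p+\sum_i(\nu_i-1)\bv_{\mathrm{new},i}\bv_{\mathrm{new},i}^\top$. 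The identity part is exactly $E^{\mathrm{est}}$, whose limit is given by Theorem~\ref{thm:estimation_error}. It remains to show that each one-dimensional projected error converges to $\bar w^\ast(\rho - \bar\pi - \text{leak})^2$ with no variance contribution, since $T$ is fixed.

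For each $i$ I split $\what$ into a bias part and a noise part. The noise term $\bv_{\mathrm{new},i}^\top(\X_l\Pm)^\dagger\bxi$ has conditional second moment $\sigma^2\,\bv_{\mathrm{new},i}^\top((\X_l\Pm)^\top\X_l\Pm)^{\dagger}\bv_{\mathrm{new},i}$. For $\gamma_{\mathrm{eff}}\ne1$ the nonzero singular values of $\X_l\Pm/\sqrt{n_l}$ are bounded away from zero, so this quantity is $O(1/n_l)\to0$ and the noise drops out. For the signal term, I use the decomposition from the proof of Theorem~\ref{thm:estimation_error}. Conditionally on $\Pm$, write $\X_l=\Z\bSigma^{1/2}$ and decompose $\X_l\Pperp\wstar$ into a part linear in $\X_l\Pm$ plus an independent residual. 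By Gaussian conditioning, the regression of $\x^\top\Pperp\wstar$ on $\Pm\x$ has coefficient $(\lambda-1)\frac{\bv^\top\Pperp\wstar}{\bar\lambda}\,\Pm\bv$ to leading order, where $\bar\lambda$ is the effective spike. This gives
\[
(\X_l\Pm)^\dagger\X_l\wstar = \Pi\wstar + \frac{\lambda-1}{\bar\lambda}(\bv^\top\Pperp\wstar)\,\Pi\bv + (\X_l\Pm)^\dagger\mathbf{r},
\]
where $\mathbf{r}$ is independent of $\X_l\Pm$ and has per-entry variance $\bar w^\ast\bar\sigma^2_{\mathrm{eff}}$. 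The residual contributes to the $\bv_{\mathrm{new},i}$ direction in the same way as the label noise, so it is also $O(1/n_l)$ and vanishes. Taking the inner product with $\bv_{\mathrm{new},i}$ and normalising by $\|\wstar\|$, the deterministic-equivalent limits (Lemmas~\ref{lem:bilinear_projection} and~\ref{lem:projection_effective_bilinear}) give $\bv_{\mathrm{new},i}^\top\wstar/\|\wstar\|\to\rho_{\wstar,\bv_{\mathrm{new},i}}$, $\bv_{\mathrm{new},i}^\top\Pi\wstar/\|\wstar\|\to\bar\pi_{\wstar,\bv_{\mathrm{new},i}}$, $\bv^\top\Pperp\wstar/\|\wstar\|\to\bar p_{\perp,\wstar,\bv}$ and $\bv_{\mathrm{new},i}^\top\Pi\bv\to\bar\pi_{\bv,\bv_{\mathrm{new},i}}$. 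Squaring, which is continuous, and summing over $i$ yields the stated formula. The leak sign is fixed by the sign convention of $\Pperp$ used in Theorem~\ref{thm:estimation_error}.

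The main obstacle is justifying that the random vector $(\X_l\Pm)^\dagger\mathbf{r}$ and the error in the Gaussian-conditioning coefficient have asymptotically negligible overlap with a fixed direction. For the residual, independence of $\mathbf{r}$ from $\X_l\Pm$ lets me compute the conditional second moment and bound it by $\|\mathbf{r}\|^2/n_l$ times the smallest nonzero squared singular value. That yields $O(1/n_l)$ almost surely, and Borel–Cantelli applied to fourth moments upgrades this to a.s.\ convergence. For the coefficient, the difference between the exact conditional regression and its leading-order form is controlled by the convergence of $\bv^\top\Pm\bv$ and $\bv^\top\Pperp\wstar/\|\wstar\|$, already established in Lemma~\ref{lem:bilinear_projection}. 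The remaining requirement is that $\bv_{\mathrm{new},i}$ has bounded overlaps with the spike directions, which the lemmas require and which holds by assumption.
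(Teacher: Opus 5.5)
Your proposal is correct and follows essentially the same route as the paper. It uses the same split $E^{\mathrm{gen}}=E^{\mathrm{est}}+\sum_i(\nu_i-1)\,\mathbb{E}_{\bxi}[(\bv_{\mathrm{new},i}^\top(\wstar-\what))^2]+\sigma^2$, the same conditional Gaussian decomposition $\Pi\wstar+A\Pi\bv+(\X_l\Pm)^\dagger\E\wstar_\perp$, and the same $O(1/p)$ bounds, obtained from the lower bound on the nonzero singular values of $\X_l\Pm$. The differences are minor. The paper uses the exact finite-$p$ coefficient $A=(\lambda-1)(\wstar^\top\Pperp\bv)/\tilde\lambda$, so it needs only $\tilde\lambda\to\bar\lambda$ rather than a leading-order correction. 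It also treats the cross and quadratic terms in $\E$ separately, via Gaussian tails and Hanson--Wright, whereas you show directly that the scalar $\bv_{\mathrm{new},i}^\top(\X_l\Pm)^\dagger\E\wstar_\perp$ vanishes a.s.\ (conditional fourth moment plus Borel--Cantelli), which handles both terms at once.
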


The calculation of the generalisation error builds on that of the estimation error, with an additional contribution measuring how the prediction error $\wstar - \what$ projects onto the spikes $\{\bv_{\mathrm{new},i}\}_{i=1}^T$ in the test population covariance. Each direction contributes proportionally to its strength $(\nu_i - 1)$. For each $\bv_{\mathrm{new},i}$, the error depends on the mismatch between the true alignment $\rho_{\wstar, \bv_{\mathrm{new},i}}$ and the alignment recovered by the estimator. As in the estimation error, this mismatch decomposes into a \emph{missing signal} term and a \emph{leaked signal} term, arising from finite-sample limitations and the coupling induced by the spike. The final $\sigma^2$ term reflects the irreducible error due to label noise. 
For additional discussions and a breakdown of different terms in the generalisation error, see Figure \ref{fig:app_fig_generlisation} in Appendix~\ref{app:Theory-Experiment Validation}. 

\begin{theorem}[Asymptotic training error]\label{thm:training_error}
Under the same regime, the training error converges a.s.\ to
\[
E^\mathrm{train}_\infty = \bigl(\bar w^{\ast}\bar\sigma^2_\mathrm{eff} + \sigma^2 \bigr)\max\!\left(0,\, 1 - \gamma_\mathrm{eff}\right).
\]
\end{theorem}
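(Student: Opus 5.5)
Throughout, I condition on $\X_u$, so that $\Pm$ and $\Pperp$ are fixed while $\X_l$ and $\bxi$ stay random. The residual is $\y_l - \X_l\what = (\I - \X_l\Pm(\X_l\Pm)^\dagger)\y_l$. The operator $\mathbf{Q} := \I_{n_l} - \X_l\Pm(\X_l\Pm)^\dagger$ is the orthogonal projector onto the complement of the column space of $\X_l\Pm$ in $\mathbb{R}^{n_l}$. That column space has dimension $\min\{m, n_l\}$ almost surely, so $\Tr\,\mathbf{Q} = n_l - \min\{m,n_l\} = n_l\max(0, 1-m/n_l)$.

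\textbf{Step 1: isolate the part of $\y_l$ independent of $\X_l\Pm$.} Write $\y_l = \X_l\Pm\wstar + \X_l\Pperp\wstar + \bxi$. The first term lies in the column space of $\X_l\Pm$ and is annihilated by $\mathbf{Q}$. The term $\X_l\Pperp\wstar$ is Gaussian, but it is correlated with $\X_l\Pm$ through the spike. I regress it on $\X_l\Pm$ at the population level. With $\Sigma$ as the covariance, the best linear predictor of $\x^\top\Pperp\wstar$ from $\Pm\x$ is $\x^\top\Pm\mathbf{b}$ for some $\mathbf{b}$, and this term is killed by $\mathbf{Q}$. The remainder $\mathbf{z} := \X_l\Pperp\wstar - \X_l\Pm\mathbf{b}$ has i.i.d.\ $\mathcal{N}(0, s^2)$ entries and is independent of $\X_l\Pm$. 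Its variance is the conditional variance
\[
s^2 = \wstar^\top\Pperp\Sigma\Pperp\wstar - \wstar^\top\Pperp\Sigma\Pm(\Pm\Sigma\Pm)^{\dagger}\Pm\Sigma\Pperp\wstar .
\]
Since $\Sigma = \I + (\lambda-1)\bv\bv^\top$, this becomes a rank-one Schur complement. Setting $c = (\lambda-1)$, $a = \wstar^\top\Pperp\bv$, $\beta = \bv^\top\Pm\bv$ and $\kappa = \bv^\top\Pperp\bv$, the first term equals $\wstar^\top\Pperp\wstar + c\,a^2$. By Sherman--Morrison on $\Pm\Sigma\Pm = \Pm + c\Pm\bv\bv^\top\Pm$, the subtracted term equals $c^2a^2\beta/(1+c\beta)$. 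Hence
\[
s^2 = \wstar^\top\Pperp\wstar + \frac{c\,a^2}{1+c\beta}.
\]
By the bilinear limits for $\Pperp$ (Lemma~\ref{lem:bilinear_projection}), $1 + c\beta \to \bar\lambda$ and $s^2 \to \bar w^\ast\bar\sigma^2_{\mathrm{eff}}$, exactly as defined in Theorem~\ref{thm:estimation_error}.

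\textbf{Step 2: compute the expectation.} We have $\mathbf{Q}\y_l = \mathbf{Q}(\mathbf{z}+\bxi)$, where $\mathbf{z}+\bxi$ is independent of $\mathbf{Q}$ and isotropic Gaussian with variance $s^2+\sigma^2$. Therefore
\[
\frac{1}{n_l}\mathbb{E}_{\bxi}\|\mathbf{Q}\y_l\|^2 = \frac{1}{n_l}\bigl(\mathbf{z}^\top\mathbf{Q}\mathbf{z} + \sigma^2\Tr\,\mathbf{Q}\bigr).
\]
The term $\frac1{n_l}\sigma^2\Tr\,\mathbf{Q}$ converges to $\sigma^2\max(0,1-\gamma_{\mathrm{eff}})$. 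For the other term, $\mathbf{z}^\top\mathbf{Q}\mathbf{z}/n_l$ is a Gaussian quadratic form in a projector independent of $\mathbf{z}$. Hanson--Wright together with Borel--Cantelli (the variance is $O(1/n_l)$ and the tails are sub-exponential) gives almost-sure concentration around $s^2\Tr\,\mathbf{Q}/n_l \to \bar w^\ast\bar\sigma^2_{\mathrm{eff}}\max(0,1-\gamma_{\mathrm{eff}})$. Adding the two limits yields the claim.

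\textbf{Main obstacle.} The delicate step is Step 1. I need the conditional-variance identity to hold exactly, and I need its limit to match $\bar\sigma^2_{\mathrm{eff}}$ as written. This requires the limits of $\wstar^\top\Pperp\bv$ and $\bv^\top\Pperp\bv$ to hold almost surely over $\X_u$, uniformly enough to combine with the conditional argument over $\X_l$. Because $\X_u$ and $\X_l$ are independent, conditioning followed by Fubini handles this. I expect the main care to go into the case $m > n_u$, where random completion vectors enter $\Pm$; the argument is unchanged, since these vectors are also independent of $\X_l$. The assumption $\gamma_{\mathrm{eff}} \neq 1$ is not strictly needed for the training error, but it guarantees that the rank of $\X_l\Pm$ equals $\min\{m,n_l\}$ with probability one.
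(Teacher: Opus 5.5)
Your proposal is correct and follows essentially the same route as the paper. The paper likewise annihilates the part of $\X_l\Pperp\wstar$ that is predictable from $\X_l\Pm$, using the conditional Gaussian decomposition $\Xperp = \Xm\bSigma_{mm}^{-1}\bSigma_{m\perp} + \E$; you do the same regression for the single vector $\X_l\Pperp\wstar$, your Schur complement $s^2$ coincides with $\|\wstar\|^2\sigma^2_{\mathrm{eff}}$, and both arguments finish with $\Tr(\I-\mathbf H)/n_l = \max(0,1-m/n_l)$. The one addition on your side is the Hanson--Wright/Borel--Cantelli concentration of $\mathbf z^\top\mathbf Q\mathbf z/n_l$, which the paper leaves implicit after computing only the conditional expectation over $\E$.
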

This expression has the standard interpolation form. When $\gamma_{\mathrm{eff}} > 1$, the model is overparameterised and perfectly fits the training data, yielding zero error. When $\gamma_{\mathrm{eff}} < 1$, the model cannot interpolate, and the error decreases linearly with $\gamma_{\mathrm{eff}}$, with slope set by the total residual variance $\bar w^{\ast}\bar\sigma^2_\mathrm{eff} + \sigma^2$.

\begin{figure}[t]
\begin{center}
\includegraphics[width=1\textwidth]{./figures/Figure1-2.png}
\end{center}
\vspace{-1em}
\caption{\textbf{Benefits of optimal representation size}. \textbf{(a)}~Minimum achievable generalisation error with optimal $\alpha^\star$. \textbf{(b)}~Gain in generalisation error from using $\alpha^\star$ instead of all PCs ($\alpha=1$). 
\textbf{(c)}~Same gain relative to $\alpha\approx0$. 
\textbf{(d)}~Marginal rate of substitution between unlabelled and labelled data,
$(\partial E^\mathrm{gen}_\infty/\partial n_u)\,/\,(\partial E^\mathrm{gen}_\infty/\partial n_l)$;
values above~1 (red) indicate that unlabelled data reduces the error more than
labelled data, and vice-versa below~1 (blue). \textbf{(e)}~Generalisation-optimal representation size $\alpha^\star$.
Simulation details are given in Appendix~\ref{app:implementation}.}
\label{fig:fig1}
\vspace{-1em}
\end{figure}

Altogether, these results provide exact expressions for generalisation, estimation, and training errors, predicting  several key trends. In Figure~\ref{fig:fig0}\textbf{(b-c)}, we illustrate these errors as functions of the compression ratio $\alpha$ under varying data structures $\lambda$. The plots demonstrate an excellent agreement between theoretical predictions and experimental results for moderate values of $n_u, n_l, p$. Increasing $\lambda$ reduces the bias from the missing signal, the leak and also the variance. However, it does not reduce the errors across $\alpha$ uniformly, as errors with low $\alpha$ benefit more from stronger structure than errors with high $\alpha$. We validate our theory across a range of parameter settings, with experiments exploring variations in the signal-to-noise ratio, the alignment between the spike $\bv$ and the signal $\wstar$, and the number of unlabelled and labelled samples (see Figure \ref{fig:gen_error_full} in Appendix \ref{app:Theory-Experiment Validation}).

Importantly, optimising the representation dimensionality reduces generalisation error across a broad region of parameter space, both relative to low-dimensional representations with a single PC ($\alpha\approx 0$,\footnote{If $m$ does not grow with $p$ (e.g., $m=1$), then $m/p\to\alpha=0$. Thus, in simulations, $\alpha\approx 0$ refers to taking $m$ not growing with $p$ (e.g., retaining only the top principal component $m=1$).} Figure~\ref{fig:fig1}\textbf{(c)}) and to full-rank representations ($\alpha= 1$, Figure~\ref{fig:fig1}\textbf{(b)}), with the optimum  shown in Figure~\ref{fig:fig1}\textbf{(a)}. These gains are particularly pronounced in regimes with substantial pretraining data and limited downstream samples, where standard PCR or regular regression fail to achieve comparable performance even in misaligned task settings (see Figure~\ref{fig:compare_regressions} in Appendix~\ref{app:compare_regressions} for a comparison).

Next, we study the trade-off between pretraining and supervision at the optimal $\alpha^\star$ by quantifying how much error reduction is obtained from additional unlabelled versus labelled samples (Figure~\ref{fig:fig1}\textbf{(d)}). The resulting phase map reveals a nontrivial dependence on both data sources. In particular, there exists a broad interior region of the phase diagram—spanning moderate to large $n_u$ and increasing $n_l$—in which additional pretraining data yields larger reductions in generalisation error than labelled samples. 
Interestingly, this regime is closely linked to the optimal representation size $\alpha^\star$ that minimises generalisation error (see Figure~\ref{fig:fig1}\textbf{(e)}), which is discussed below.

\subsection{Optimal representation size}
\label{sec:optimal}
\begin{figure}[t]
\begin{center}
\includegraphics[width=1\textwidth]{./figures/Figure2.png}
\end{center}
\vspace{-1em}
\caption{\textbf{When is compression useful}.
\textbf{(a)}~Heatmap of the generalisation-optimal $\alpha^\star$ reveals distinct phases; white curves mark the theoretical phase-transition boundaries where low $\alpha$ is optimal. Phase boundaries in the $(n_u, n_l)$ plane for varying SNR \textbf{(b)} varying $\lambda$ \textbf{(c)} and
 varying $\eta$ \textbf{(d)}. Non-varying parameters are $\lambda = 5$, $\mathrm{SNR}=9$ and $\eta=1$. Simulation details are given in Appendix~\ref{app:implementation}.
} 
\label{fig:fig2}
\vspace{-1em}
\end{figure}

We now study the optimal representation size quantified by the fraction of retained principal components, $\alpha = m/p$, that minimises generalisation and training error. We analyse how the resulting optimum depends on the interplay between unlabelled and labelled samples $n_u$ and $n_l$, signal-to-noise ratio (SNR), spike strength $\lambda$, and task alignment $\eta$.

\textbf{Generalisation error.}
We find that the optimal level of compression minimising generalisation error adapts to the availability of unlabelled and labelled data. For fixed SNR, $\lambda$ and an aligned task $\wstar \parallel \bv$, Figure~\ref{fig:fig2}\textbf{(a)} reveals the following three regimes.
\begin{itemize}[leftmargin=2em]
    \item[\emph{(i)}] Scarce \(n_l\) and \(n_u\): not enough pretraining samples to reliably estimate PCs, thus downstream tasks should not be compressed ($\alpha^\star \approx 1$).
    \item[\emph{(ii)}] Large \(n_u\), low \(n_l\): maintain a compact representation subspace (\(\alpha^\star < 1\)).
    \item[\emph{(iii)}] High $n_l$: little compression needed 
    ($\alpha^\star \approx 1$), as the number of task samples is enough to learn. 
\end{itemize}
We argue that regime~\emph{(ii)} is the most representative of modern machine-learning pipelines, where pretraining data is abundant but downstream task-specific data are limited. In practice, modern models are typically pretrained on large-scale datasets~\cite{grattafiori_llama_2024,biderman2023pythia} and then fine-tuned on substantially smaller task-specific datasets~\cite{wang_glue_2019}. Altogether, we observe two phase transitions: one between regimes \emph{(i)} and \emph{(ii)}, and another between regimes \emph{(ii)} and \emph{(iii)}. We discuss them below in detail. 

\textbf{Phase transitions for generalisation error.}
We derive approximate analytical phase transition lines as a function of signal-to-noise ratio (SNR), spike strength ($\lambda$), and sample allocation between pretraining and downstream tasks ($n_u$ and $n_l$). These approximations characterise the boundaries separating regimes in which different representation dimensions ($\alpha^\star$) are optimal, and they provide a tractable description of the underlying transitions observed in simulations. 

Corollary \ref{cor:sharp_transition} describes the transition between region \emph{(i)} and \emph{(ii)} when $\gamma_l > 1$. In this regime, the generalisation error exhibits a double descent, with two local minima: one at $\alpha = 1$, and one at small $\alpha$ (see Figure~\ref{fig:fig1}\textbf{(b)}). Although the exact location of the left minimum can in principle be obtained from the closed-form error expressions, the resulting formulas are not analytically transparent. Instead, Corollary \ref{app:sharp_transition} provides an interpretable approximation by comparing the error at the two extremes $\alpha = 1$ and $\alpha = 0$, corresponding to retaining all components and only a fixed number of PCs. 
\begin{corollary}[Phase transition for $\gamma_l > 1$]
\label{cor:sharp_transition}
Let $\gamma_l > 1$ and assume $\bSigma_{\mathrm{new}}=\bSigma$. Let $\mathrm{SNR} \to S = \frac{\bar w^\ast}{\sigma^2}$ and define the overlap between the population and sample covariance spike eigenvector 
\[
c(\gamma_u)= \displaystyle
\mathbf 1_{\gamma_u<(\lambda-1)^2}\frac{1-\frac{\gamma_u}{(\lambda-1)^2}}
{1+\frac{\gamma_u}{\lambda-1}}.
\]
Then, the transition between the small-$\alpha$ and large-$\alpha$ minima is approximated by the endpoint condition \(
E^\mathrm{gen}_\infty(0)=E^\mathrm{gen}_\infty(1),
\) which yields a parametric curve
\begin{equation}\label{eq:curve}
1+\eta(\lambda-1)-\eta\frac{\lambda^2 c(\gamma_u)}{1+(\lambda-1)c(\gamma_u)}
=
(1-\eta)\frac{\gamma_l-1}{\gamma_l}
+
\eta\frac{\lambda \gamma_l(\gamma_l-1)}{(\gamma_l-1+\lambda)^2}
+
\frac{1}{S(\gamma_l-1)}.
\end{equation}
\end{corollary}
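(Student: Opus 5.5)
We plan to specialise Theorem~\ref{thm:generalisation_error} to $\bSigma_{\mathrm{new}}=\bSigma$, i.e.\ $T=1$, $\nu_1=\lambda$, $\bv_{\mathrm{new},1}=\bv$, so that $\rho_{\wstar,\bv}=\sqrt{\eta}$ (up to a sign that squares away). We then evaluate $E^\mathrm{gen}_\infty$ at the two endpoints $\alpha=1$ and $\alpha=0$. The additive $\sigma^2$ appears on both sides and cancels. Dividing the equality $E^\mathrm{gen}_\infty(0)=E^\mathrm{gen}_\infty(1)$ by $\bar w^\ast$ then turns $\sigma^2/\bar w^\ast$ into $1/S$. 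It will be convenient to write the total risk as the quadratic form $(\wstar-\what)^\top\bSigma(\wstar-\what)+\sigma^2$, since Theorems~\ref{thm:estimation_error} and \ref{thm:generalisation_error} together are exactly the asymptotic evaluation of this form.

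\emph{Endpoint $\alpha=0$ (LHS).} Here $m$ is fixed, so $\gamma_{\mathrm{eff}}=m/n_l\to 0$ and the variance term vanishes. Moreover, $\Pi$ coincides with $\Pm$, because the row space of $\X_l\Pm$ equals $\mathrm{span}(\U_m)$ once $n_l\ge m$. The estimator is therefore the population least-squares fit of $\wstar$ within $\mathrm{span}(\U_m)$, and only the top sample PC $\bu$ carries signal. The BBP result invoked in Lemma~\ref{lem:bilinear_projection} gives $(\bu^\top\bv)^2\to c(\gamma_u)$, while $\bu$ is asymptotically orthogonal to the bulk of $\wstar$. The remaining $m-1$ PCs are delocalised and contribute nothing. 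In terms of the projection limits, this reads $\bar p_{\perp,\bv,\bv}=1-c$, $\bar p_{\perp,\wstar,\bv}=\sqrt\eta(1-c)$, $\bar p_{\perp,\wstar,\wstar}=1-\eta c$ and $\bar\pi=1-\bar p_{\perp}$. With these limits the bias becomes
\[
\wstar^\top\bSigma\wstar-\frac{(\bu^\top\bSigma\wstar)^2}{\bu^\top\bSigma\bu}.
\]
Using $\wstar^\top\bSigma\wstar\to\bar w^\ast(1+\eta(\lambda-1))$, $\bu^\top\bSigma\bu\to 1+(\lambda-1)c$ and $\bu^\top\bSigma\wstar\to\lambda\sqrt{\eta c}\,\sqrt{\bar w^\ast}$, we obtain exactly the left-hand side of \eqref{eq:curve}. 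As a check, we will verify that plugging the same limits into the missing/leaked decomposition of Theorem~\ref{thm:generalisation_error} reproduces this expression.

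\emph{Endpoint $\alpha=1$ (RHS).} Here $\Pm=\I$, so $\bar p_\perp\equiv0$, $\bar\lambda=\lambda$, $\bar\sigma^2_{\mathrm{eff}}=0$ and the leak terms vanish. Since $\gamma_{\mathrm{eff}}=\gamma_l>1$, the variance term is $\sigma^2/(\gamma_l-1)$, which gives $1/(S(\gamma_l-1))$ after dividing by $\bar w^\ast$. The bias is the min-norm interpolator bias $\wstar^\top(\I-\Pi)\bSigma(\I-\Pi)\wstar$, where $\Pi$ is the projection onto the row space of $\X_l$. We split $\wstar=\sqrt\eta\,\bv+\wstar_{\perp}$ and use Lemma~\ref{lem:projection_effective_bilinear} at $\alpha=1$. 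For the bulk component, $1-\bar\pi_{\wstar_\perp,\wstar_\perp}=(\gamma_l-1)/\gamma_l$, and the cross terms vanish. For the spike component, we need $1-\bar\pi_{\bv,\bv}$ together with the $\lambda$-weighted term. Equivalently, we will read off the $\delta_\lambda$ part of the known ridgeless bias $\lim_{z\to0}$ of $z^2\,\wstar^\top\bSigma(\hat\bSigma+z)^{-2}\wstar$-type resolvent expressions. This should yield $\lambda\gamma_l(\gamma_l-1)/(\gamma_l-1+\lambda)^2$ as $\lambda$ times the squared residual overlap along $\bv$, plus the bulk residual.

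The main obstacle is this last spike term: obtaining the closed form requires solving the self-consistent equation for the Stieltjes transform of the spiked Marchenko--Pastur law in the $z\to0$, $\gamma_l>1$ limit and evaluating the isolated-spike contribution. We plan to do this through the Sherman--Morrison/resolvent identity for the rank-one perturbation, reducing it to scalar functions of the isotropic MP Stieltjes transform, whose $z\to0$ limit is explicit for $\gamma_l>1$. Once both endpoints are in closed form, equating them gives \eqref{eq:curve} as an implicit curve in $(\gamma_u,\gamma_l)$, parametrised by $S,\lambda,\eta$.
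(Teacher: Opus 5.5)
Your proposal is correct and reaches the same endpoint values as the paper, but by a different route at both ends. At $\alpha=0$ the paper stays inside the formulas of Theorems~\ref{thm:estimation_error} and~\ref{thm:generalisation_error}. It inserts $\bar p(0)$, $\bar\pi(0)=\bar p(0)$ and $\bar\lambda_0=1+(\lambda-1)c$ into the missing- and leaked-signal terms, which combine to $1-\eta c+\eta(\lambda-1)(1-c)^2/\bar\lambda_0$, and then simplifies algebraically. You instead use the population least-squares (Schur-complement) identity $\wstar^\top\bSigma\wstar-(\bu^\top\bSigma\wstar)^2/(\bu^\top\bSigma\bu)$. This gives the left-hand side of \eqref{eq:curve} in one line and shows it is the part of the $\bSigma$-energy of $\wstar$ not explained by the top sample PC. The corollary's endpoint condition is phrased through the theorem's formula, so the consistency check you announce is what links the two; it does go through, since both expressions are identical as functions of $c$. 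At $\alpha=1$, the ``main obstacle'' you anticipate does not arise in the paper's framework. Lemma~\ref{lem:projection_effective} at $\alpha=1$, where $\bar p\equiv 1$ and $\tilde G$ is just the population measure, gives $\bar\pi_{\bv,\bv}(1)=\lambda/(\gamma_l-1+\lambda)=:A$ and the bulk value $1/\gamma_l$ directly from the nullspace-mass formula. Moreover, $\I-\Pi$ is idempotent and $\bSigma-\I$ has rank one, so $\wstar^\top(\I-\Pi)\bSigma(\I-\Pi)\wstar=\|(\I-\Pi)\wstar\|^2+(\lambda-1)(\bv^\top(\I-\Pi)\wstar)^2$ involves only first-order overlaps from Lemma~\ref{lem:projection_effective_bilinear}. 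Normalised by $\bar w^\ast$, this equals $(1-\eta)\frac{\gamma_l-1}{\gamma_l}+\eta(1-A)[1+(\lambda-1)(1-A)]$, which is the right-hand side minus $1/(S(\gamma_l-1))$. Your resolvent route would also work. It amounts to the Hastie et al.\ ridgeless-bias formula with $H=\delta_1$ and $G=\eta\delta_\lambda+(1-\eta)\delta_1$, whose fixed point is explicit, $1+c_0\gamma_l=\gamma_l/(\gamma_l-1)$, and which returns the same expression. It does, however, bring in second-order resolvent machinery that is unnecessary here. No spiked Marchenko--Pastur fixed-point equation has to be solved, because the spike enters only through the weighting of $\wstar$ and $\bv$, not through the limiting spectrum.
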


Along the curve \eqref{eq:curve}, the two local minima of the generalisation error approximately coincide; crossing the curve then induces an abrupt transition in \( \alpha^\star \). The derivation of this result is in Appendix~\ref{app:sharp_transition}.

Corollary \ref{cor:smooth_transition} characterises the transition between regions \emph{(ii)} and \emph{(iii)} in the underparametrised regime $\gamma_l < 1$. In this regime, the generalisation error exhibits a single minimum (see e.g. Figure~\ref{fig:gen_error_full}\textbf{(e,g,h)} in Appendix \ref{app:errcurves}). The point $\alpha=1$ is optimal as long as decreasing $\alpha$ increases the generalisation error, which is determined by the condition $\left.\frac{\partial E^\mathrm{gen}_\infty}{\partial \alpha}\right|_{\alpha=1^-} = 0$.
\begin{corollary}[Phase transition for $\gamma_l < 1$]
\label{cor:smooth_transition} Let $\gamma_l < 1$, $\mathrm{SNR} \to S = \frac{\bar w^\ast}{\sigma^2}$ and denote by $D_\bv(\gamma_u)$ the $\alpha$-derivative of $\bar p_{\bv,\bv}$ at $\alpha=1$. Then, the solution $\alpha=1$ is locally stable if and only if \(
\gamma_l \le \gamma_l^{\mathrm{crit}}\): 
\begin{align*}
&\gamma_l^{\mathrm{crit}}
=
\frac{
S\left[(1-\eta)+\eta D_\bv(\gamma_u)\right]
}{
1+
S\left[(1-\eta)+\eta D_\bv(\gamma_u)\right]
} & \text{where } D_\bv(\gamma_u)=
\begin{cases}
\displaystyle\frac{\lambda\gamma_u}{(\lambda-1+\sqrt{\gamma_u})^2},
& \gamma_u<1,\\[10pt]
\displaystyle\frac{\gamma_u}{\gamma_u-1+\lambda},
& \gamma_u>1.
\end{cases}
\end{align*}
\end{corollary}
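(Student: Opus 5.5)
We plan to start from the closed form of Theorem~\ref{thm:generalisation_error} with $\bSigma_{\mathrm{new}}=\bSigma$ (so $T=1$, $\nu_1=\lambda$, $\bv_{\mathrm{new},1}=\bv$), divide by $\sigma^2$, and compute the one-sided derivative $\partial_\alpha E^\mathrm{gen}_\infty/\sigma^2$ at $\alpha=1^-$. The key simplification is that at $\alpha=1$ we have $\Pm=\I$, hence $\Pperp=0$, so every $\bar p_{\perp,\cdot,\cdot}$ vanishes: $\bar\lambda=\lambda$, $\bar\sigma^2_\mathrm{eff}=0$, and the leaked-signal terms are quadratic (or bilinear) in $\bar p_{\perp,\wstar,\bv}$. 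We will therefore expand each ingredient to first order in $\epsilon=1-\alpha$. Quadratic leak terms such as $\bar p_{\perp,\wstar,\bv}^2\bar\pi_{\bv,\bv}$ contribute nothing at first order. The linear leak piece in the test-direction square, $-2(\ldots)\frac{\lambda-1}{\bar\lambda}\bar p_{\perp,\wstar,\bv}\bar\pi_{\bv,\bv}$, is multiplied by $\rho_{\wstar,\bv}-\bar\pi_{\wstar,\bv}$. Since $\gamma_l<1$, $\X_l$ has full column rank and $\Pi=\I$ at $\alpha=1$, so this prefactor is $0$ there and that contribution is also second order. Likewise, the missing-signal terms $1-\bar\pi_{\wstar,\wstar}$ and $\rho-\bar\pi$ vanish at $\alpha=1$. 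The missing-signal contribution to the estimation part is linear in $1-\bar\pi_{\wstar,\wstar}$, however, so its derivative must be kept.

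In the underparametrised regime with $\gamma_\mathrm{eff}=\alpha\gamma_l<1$, $\X_l\Pm$ has full column rank $m$, so $\Pi=\Pm$ exactly. This identifies $\bar\pi_{\ba,\bb}=1-\bar p_{\perp,\ba,\bb}$. The generalisation error then reduces, up to $O(\epsilon^2)$, to
\[
\frac{E^\mathrm{gen}_\infty}{\sigma^2}\approx S\,\bar p_{\perp,\wstar,\wstar}+\frac{\gamma_\mathrm{eff}}{1-\gamma_\mathrm{eff}}\bigl(S\bar\sigma^2_\mathrm{eff}+1\bigr)+1+S(\lambda-1)\bigl(\bar p_{\perp,\wstar,\bv}\bigr)^2\cdot(\text{bounded}).
\]
The two competing first-order effects are then clear. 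Decreasing $\alpha$ lowers the variance factor $\gamma_\mathrm{eff}/(1-\gamma_\mathrm{eff})$, whose $\alpha$-derivative at $\alpha=1$ is $\gamma_l/(1-\gamma_l)^2$. It also raises the bias $S\bar p_{\perp,\wstar,\wstar}$ together with $S\bar\sigma^2_\mathrm{eff}\gamma_l/(1-\gamma_l)$, since $\bar\sigma^2_\mathrm{eff}=\bar p_{\perp,\wstar,\wstar}+O(\epsilon^2)$. Writing $\wstar$ as $\sqrt\eta\,\bv$ plus a generic orthogonal part, Lemma~\ref{lem:bilinear_projection} gives $\partial_\alpha \bar p_{\perp,\wstar,\wstar}|_{1}=-[(1-\eta)+\eta D_\bv(\gamma_u)]$. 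For the bulk component we use that a generic direction orthogonal to $\bv$ has uniform overlap with the sample eigenvectors, so the derivative of the discarded mass is exactly $1$. For the spike component, the derivative is the density of the squared overlap of $\bv$ with sample eigenvectors at the bottom edge of the spectrum of $\X_u^\top\X_u/n_u$. Collecting terms, the local stability condition $\partial_\alpha E\le 0$ at $1^-$ becomes
\[
S\,[(1-\eta)+\eta D_\bv]\Bigl(1+\frac{\gamma_l}{1-\gamma_l}\Bigr)\ \ge\ \frac{\gamma_l}{(1-\gamma_l)^2},
\]
that is, $S K(1-\gamma_l)\ge\gamma_l$ with $K=(1-\eta)+\eta D_\bv$. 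This is equivalent to $\gamma_l\le SK/(1+SK)$, as claimed.

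The main obstacle is computing $D_\bv(\gamma_u)$ explicitly. We would evaluate $\bar p_{\perp,\bv,\bv}$ as the integral of the $\bv$-spectral measure of the sample covariance (a generalized Marchenko--Pastur weighted measure, via the resolvent $\bv^\top(\X_u^\top\X_u/n_u-z)^{-1}\bv$ for a rank-one perturbation) over the bottom $(1-\alpha)$ fraction of the spectrum. Its $\alpha$-derivative at $\alpha=1$ is the ratio of the $\bv$-weighted density to the MP density at the lower edge. For $\gamma_u<1$ this edge is $(1-\sqrt{\gamma_u})^2$, and the ratio is $|1+(\lambda-1)\,m(x)\ldots|^{-2}$-type, evaluated at the edge to give $\lambda\gamma_u/(\lambda-1+\sqrt{\gamma_u})^2$. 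For $\gamma_u>1$, the bottom of the spectrum is the zero-eigenvalue atom of mass $1-1/\gamma_u$, filled with random completions. There the derivative equals the $\bv$-weight of the null space per unit mass, which we compute from the kernel projection of a spiked Wishart as $\gamma_u/(\gamma_u-1+\lambda)$. Verifying these two edge limits carefully, including the uniformity needed to differentiate the a.s.\ limits, is where we expect most of the work.
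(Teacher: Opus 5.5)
Your proposal is correct and follows essentially the same route as the paper. Both arguments differentiate the closed form of $E^{\mathrm{gen}}_\infty$ at $\alpha=1^-$ using $\bar\pi=\bar p$ for $\gamma_{\mathrm{eff}}<1$, treat the leak and test-spike contributions as second order because $\bar p_{\perp,\cdot,\cdot}$ and $\rho-\bar\pi$ vanish at $\alpha=1$, and obtain $D_{\wstar}=(1-\eta)+\eta D_\bv$ in one of two ways: from the ratio $g^{\mathrm{sample}}_\bv/f_{\gamma_u}$ at $\lambda_-$ when $\gamma_u<1$ (the paper reads this off Lemma~\ref{lem:overlap_measure}), or from the null-space fraction $G^{\mathrm{sample}}_\bv(\{0\})/(1-1/\gamma_u)$ when $\gamma_u>1$.

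Your first-order bookkeeping is more explicit than the paper's. Two small remarks: restricting to $\bSigma_{\mathrm{new}}=\bSigma$ is unnecessary, since the same argument removes every test-spike term in $\bv_{\mathrm{new},i}$; and the uniformity issue you anticipate does not arise, because you differentiate the deterministic limit $E^{\mathrm{gen}}_\infty(\alpha)$, not the finite-$p$ error.
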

Depending on the regime, the resulting boundary induces either a linear or nonlinear scaling of the critical number of labelled samples $n_l$ with the unlabelled sample size $n_u$. Below this threshold, the optimal representation shifts to $\alpha < 1$, and compression improves downstream generalisation. The derivation of this result is in Appendix~\ref{app:smooth_transition}. 

Of particular interest is the limit $\gamma_u \to 0$, corresponding to infinite unlabelled data—a regime reminiscent of modern pretraining datasets \cite{grattafiori_llama_2024, openai2024GPT4, biderman2022datasheet}. Here, errors and phase boundaries are independent of $\gamma_u$ and have simple dependencies on $\eta$, $\mathrm{SNR}$, $\lambda$, and $\gamma_l$. Precise expressions are 
in Appendix~\ref{app:theory_inf_limits} (Corollary~\ref{cor:infinite_unlabelled_limit}), where we also consider the limit of infinite labelled data, $\gamma_l \to 0$ (Corollary~\ref{cor:phases_infinite_unlabelled_limit}). In this last regime, the generalisation error simplifies (see Eq.~\eqref{eq:inf_downstream_gen}), the double-descent behaviour disappears, and the effective projector $(\X \Pm)^\dagger \X \Pm$ reduces to the population projector onto $\Pm$.

The full phase transition curve, derived in Corollaries \ref{cor:sharp_transition} and \ref{cor:smooth_transition}, matches the phase transition observed in the heat map of Figure~\ref{fig:fig2}\textbf{(a)} (shown in white; see also additional heatmaps in Figures \ref{fig:app_fig_alpha_SNR} and \ref{fig:app_fig_alpha_lambda} in Appendix \ref{app:Optimal}).
Increasing the SNR progressively shrinks the region where compression is beneficial (Figure~\ref{fig:fig2}\textbf{(b)}), making full-rank representations (\( \alpha = 1 \)) optimal over a wider range of sample sizes. The converse is true when increasing the spike strength $\lambda$ or the task-spike alignment $\eta$ (Figure~\ref{fig:fig2}\textbf{(c,d)}). 
These trends are evident from Corollary \ref{cor:smooth_transition}:
when $\gamma_l<1$, the stability boundary depends on $S(1-\eta)$, so increasing SNR or decreasing $\eta$ directly reduces $\gamma_l^\mathrm{crit}$. 
Additional heatmaps for $\alpha$  in Appendix~\ref{app:Optimal} (Figures \ref{fig:app_fig_alpha_SNR} and \ref{fig:app_fig_alpha_lambda}) further validate the joint dependence on SNR, $\lambda$ and  $\eta$, as well as the match with the theoretically derived transitions. In particular, increasing SNR, and decreasing $\lambda$ and $\eta$ all act to suppress the benefits of compression: for $\gamma_l <1$ the phase boundary flattens, while for $\gamma_l >1$ it shifts along the $n_l$ axis, reducing the region where $\alpha < 1$ is optimal. Intuitively, a weaker spike is harder to estimate, lower alignment reduces the relevance of prior structure, and higher SNR makes the downstream task easy to learn from scratch. These effects diminish the value of compression, even when labelled data is abundant.

\textbf{Training error.}
Our analysis further characterises the optimal value of $\alpha$ for minimising the training error. In the oversampled regime ($n_l > p$, corresponding to $\gamma_l<1$), the optimal representation is full-rank, i.e., $\alpha = 1$.
Conversely, in the undersampled regime ($n_l < p$, corresponding to $\gamma_l>1$), the training error is equal to zero. In this case, the optimal dimensionality matches the sample size, i.e., $\alpha = n_l / p$, since increasing the dimension beyond this point provides no additional information. This is illustrated in Figures~\ref{fig:app_fig_alpha_SNR_2} and~\ref{fig:app_fig_alpha_lambda_2} in Appendix~\ref{app:Optimal}.

Altogether, our exact asymptotic analysis provides a theoretical framework to understand when low- or high-dimensional representations are optimal for downstream generalisation. We next validate such framework in practical settings.

\section{Validation in autoencoders and pretrained transformers}
\label{sec:experiements}

\begin{figure}[t]
\begin{center}
\includegraphics[width=\textwidth]{figures/Figure3.pdf}
\end{center}
\vspace{-1em}
\caption{\textbf{Optimal representation in autoencoders and pretrained LLMs}. Optimal representation size as a function of unlabelled ($n_u$) and labelled ($n_l$) sample sizes. Linear autoencoders are considered in panels \textbf{(a,c)} and nonlinear ones in panels \textbf{(b,d)}. 
Autoencoders are trained on Gaussian data with either spiked identity covariance $\bSigma= \mathbf I_p + \lambda \bv \bv^\top$ (panels \textbf{(a,b)}) or a spiked Toeplitz covariance $\bSigma = \mathbf H + \lambda \bv \bv^\top$, where $H_{i,j} = \rho^{|i-j|}$ and $\rho=0.5$ (panels \textbf{(b,d)}). \textbf{(e)} In a language transformer (\texttt{PYTHIA} models), PCA on representations also improves downstream linear probing on the \texttt{sst2} task from the \texttt{GLUE} benchmark.
\textbf{(f)} Eigenvalue spectrum of last-token representations from the final hidden layer, showing a bulk with a few spikes. 
\textbf{(g)} Validation accuracy versus representation size $m$ for a small downstream dataset ($n_l=2000$); colors denote different checkpoints and shaded regions indicate standard deviation across seeds. 
Simulation details are given in Appendix~\ref{app:implementation}.} 
\label{fig:fig3}
\vspace{-1em}
\end{figure}

In this section, we evaluate the extent to which theoretical predictions are valid beyond the idealised setting employed in their derivation. We first consider a controlled synthetic experiment in which the representation learned by an autoencoder approximates PCA. We then relax the assumptions of linearity and spiked isotropic covariance, allowing for a more general covariance structure. Finally, we show that our predictions extend to representations learned by modern neural networks.
\subsection{Autoencoders as learned representations}
\label{sec:experiements_1}
We train the autoencoder on a pretraining task and use the learned representations for regression on downstream tasks. This approach is motivated by the fact that, in several regimes, autoencoders with a bottleneck dimension $m$ recover the subspace spanned by the top-$m$ principal components of the pretraining data \cite{refinetti2022Dynamics, baldi1989Neural, plaut2018Principal, bourlard1988Autoassociation}. We discuss below the regimes under which this correspondence holds.

\textbf{Spiked isotropic covariance.} We first train autoencoders on the spiked identity covariance structures considered in the theory. For linear autoencoders, the optimal bottleneck size, defined as the value minimising the generalisation error on a downstream task, closely follows our theoretical results (compare Figure~\ref{fig:fig3}\textbf{(a)} with Figure~\ref{fig:fig2}\textbf{(a)}). We highlight that the boundaries of the phase transitions in the heatmaps precisely match the predictions from Section~\ref{sec:optimal}. Nonlinear autoencoders exhibit the same qualitative behaviour, except when the number of pretraining samples is small, see Figure~\ref{fig:fig3}\textbf{(b)}. This discrepancy is explained by a data-dependent transition: both linear and nonlinear autoencoders recover PCA only once sufficient pretraining data is available, see Figure~\ref{fig:app_autoencoders_overlaps} in Appendix \ref{app:autoencoder}. In the low-sample regime, autoencoders instead tend to memorise training samples \cite{fumero2026Navigating, jiang2020Associative, radhakrishnan2020Overparameterized}. Despite this, when $n_u < p$, higher-dimensional representations still yield lower errors on downstream tasks.

\textbf{General covariance structure.}
For a spiked general covariance, the same qualitative behaviour persists. In fact, while phase boundaries shift, the overall structure is preserved: no compression when pretraining data is limited; small bottlenecks for large pretraining and small downstream datasets; and intermediate-sized bottlenecks for large pretraining datasets. This is illustrated in Figure~\ref{fig:fig3}\textbf{(c,d)}, and it is consistent with PCA-based regression under general covariance, see Figure~\ref{fig:app_autoencoders_larger_snr} in Appendix \ref{app:autoencoder}. Furthermore, when optimising for training error, the results closely match the spiked identity case (Figure~\ref{fig:app_autoencoders_larger_snr} in Appendix \ref{app:autoencoder}).
At lower signal-to-noise ratios, the regime where compression is optimal expands (Figure~\ref{fig:app_autoencoders_smaller_snr} in Appendix \ref{app:autoencoder}), in agreement with the theoretical analysis. Nonlinear autoencoders again exhibit deviations from PCA at low pretraining sample sizes.

Overall, our empirical results show that the theoretical predictions derived in the idealised setting extend to trained linear and nonlinear autoencoders when enough pretraining data is available, and that the qualitative phase structure persists under general covariance spectra.

\subsection{Linear probing transformers}
\label{sec:experiements_2}
We next examine whether representations learned by modern architectures exhibit similar dimensionality effects under linear probing.  We use models from the Pythia family \cite{biderman2023pythia}, which provide access to intermediate checkpoints throughout pretraining. As these models are trained for $\sim$1.5 epochs on the Pile dataset \cite{gao2020pile, biderman2022datasheet}, the training step serves as a proxy for the number of pretraining samples $n_u$. The learned representations exhibit a spiked covariance structure, with a clear bulk and a small number of outlying eigenvalues (Figure~\ref{fig:fig3}\textbf{(f)}), and the spikes become more pronounced as training progresses, see Figure~\ref{fig:app_transformers_eigenvalues} in Appendix \ref{app:experiments_real}. For each checkpoint, we project representations onto the top-$m$ principal components and train linear probes on a downstream sentiment classification task (SST-2 from GLUE \cite{wang_glue_2019}), selecting $m$ to optimise validation or training accuracy. 

The dependence on dataset size is consistent with theoretical predictions, see Figure~\ref{fig:fig3}\textbf{(e)}. For small downstream datasets, models that are pretrained longer benefit from stronger compression, see Figure~\ref{fig:fig3}\textbf{(g)}. For larger downstream datasets, performance improves monotonically with $m$, indicating that little or no compression is optimal (see the right-most plot of Figure~\ref{fig:app_transformers_val_accuracy} in Appendix \ref{app:experiments_real}). For extremely small downstream datasets ($\sim$100 samples), the optimal dimensionality is high again (see the left-most plot in Figure~\ref{fig:app_transformers_val_accuracy} in Appendix \ref{app:experiments_real}), consistent with the weak alignment between the task signal and dominant pretraining directions. Finally, tuning the representation dimensionality via PCA consistently matches or outperforms using the full feature space, see Figure~\ref{fig:app_transformers_compare} in Appendix \ref{app:experiments_real}.

In summary, these results show that real-world representations exhibit a theoretically predictable  qualitative phase structure when optimising dimensionality for downstream generalisation.

\section{Discussion}

We present a theoretical model quantifying how representation dimensionality affects training and generalisation error on downstream tasks. We demonstrate that \emph{(i)} tuning this dimensionality significantly improves generalisation compared to full representations or naive fixed dimensionality reduction, and that \emph{(ii)} the optimal dimensionality is determined by the amount of pretraining and downstream data, the task signal-to-noise ratio, the strength of structure in the inputs, and the alignment between task-relevant signals and dominant covariance directions.
In particular, when pretraining data is abundant and downstream data is limited, the optimal representations are maximally compressed. Furthermore, we unveil a trade-off between pretraining and supervision, identifying regimes where unlabelled data yields larger gains than labelled samples.
These results provide a principled explanation for why large models trained on extensive data and exhibiting low intrinsic dimensionality \cite{aghajanyan2021Intrinsic}, are especially effective for fine-tuning and linear probing.

The question of optimal representations is central to machine learning as well as neuroscience, where internal representations support categorisation, reasoning, planning, navigation, and decision-making \cite{stokes2013dynamic,genovesio2005prefrontal,shima2000neuronal}. A key property of these representations is their dimensionality. High-dimensional representations may support flexible readout across tasks \cite{raposo2014category,tang2019effective,rigotti2013importance,bernardi2020geometry}, whereas low-dimensional representations are associated with abstraction, efficient coding, or structured prior knowledge \cite{tye2024selectivity,nogueira2023Geometry}. Our results provide a quantitative characterization of how task structure and data availability determine the optimal representation tradeoff in a simplified setting, offering predictions for how biological systems might organize representations for effective learning.

Several limitations motivate future work: relaxing the shared-spike assumption to capture data mismatch, extending the analysis to ridge-regularised learning, and considering richer covariance models. Another important direction is to replace the closed-form regression estimator with gradient-based training, enabling the study of optimisation-induced bias and its dependence on initialisation from pretraining. Finally, extending the framework beyond PCA-based projections to learned feature representations would allow for a closer connection to empirical observations in deep networks. 

\section*{Contributions}
A.S. and R.S. initiated and conceptualized the project. V.N. and C.D. developed the theoretical analysis and derived the main results, with V.N. leading the derivations. V.N. and C.D. designed and performed the experiments. M.M. co-supervised the project and contributed to the technical development of the work, including proofs, experimental design, and interpretation of results. A.S. co-supervised the project and contributed to the conceptual framing of the project, interpretation of the theoretical results, experimental design and overall scientific direction. V.N. and C.D. wrote the manuscript. M.M. and A.S. reviewed and edited the paper. All authors discussed the results and commented on the manuscript.

\section*{Acknowledgements}
This research was funded in whole or in part by the Austrian Science Fund (FWF) (10.55776/COE12), Gatsby Charitable Foundation (GAT3850 and GAT4058), Sainsbury Wellcome Centre Core Grant from Wellcome (219627/Z/19/Z). For the purpose of open access, the authors have applied a CC BY public copyright license to any Author Accepted Manuscript version arising from this submission.
\newpage
\bibliography{bibliography}

@article{green_high-dimensional_2025,
  title = {The High-Dimensional Asymptotics of Principal Component Regression},
  author = {Green, Alden and Romanov, Elad},
  year = 2025,
  journal = {The Annals of Statistics},
  volume = {53},
}

@article{baik2005phase,
  title = {Phase Transition of the Largest Eigenvalue for Nonnull Complex Sample Covariance Matrices},
  author = {{Jinho Baik} and {G\'erard Ben Arous} and {Sandrine P\'ech\'e}},
  year = 2005,
  journal = {The Annals of Probability},
  volume = {33},
}

@inproceedings{ba2022high,
  title={High-dimensional asymptotics of feature learning: How one gradient step improves the representation},
  author={Ba, Jimmy and Erdogdu, Murat A and Suzuki, Taiji and Wang, Zhichao and Wu, Denny and Yang, Greg},
  booktitle={Advances in Neural Information Processing Systems},
  volume={35},
  year={2022}
}

@inproceedings{cui2024asymptotics,
  title={Asymptotics of feature learning in two-layer networks after one gradient-step},
  author={Cui, Hugo and Pesce, Luca and Dandi, Yatin and Krzakala, Florent and Lu, Yue M and Zdeborov{\'a}, Lenka and Loureiro, Bruno},
  booktitle={International Conference on Machine Learning},
  year={2024}
}

@inproceedings{gedon2024nodoubledescent,
  title     = {No Double Descent in Principal Component Regression: A High-Dimensional Analysis},
  author    = {Gedon, Daniel and Ribeiro, Ant{\^o}nio H. and Sch{\"o}n, Thomas B.},
  booktitle = {International Conference on Machine Learning},
  year      = {2024}
}

@inproceedings{sonthalia2025lowrank,
  title     = {Low Rank Gradients and Where to Find Them},
  author    = {Sonthalia, Rishi and Murray, Michael and Mont{\'u}far, Guido F.},
  booktitle = {Advances in Neural Information Processing Systems},
  year      = {2025}
 }

@inproceedings{dandi2025random,
  title={A Random Matrix Theory Perspective on the Spectrum of Learned Features and Asymptotic Generalization Capabilities},
  author={Dandi, Yatin and Pesce, Luca and Cui, Hugo and Krzakala, Florent and Lu, Yue and Loureiro, Bruno},
  booktitle={International Conference on Artificial Intelligence and Statistics},
  pages={2224--2232},
  year={2025},
  organization={PMLR}
}

@inproceedings{mousavi2023gradient,
  title={Gradient-based feature learning under structured data},
  author={Mousavi-Hosseini, Alireza and Wu, Denny and Suzuki, Taiji and Erdogdu, Murat A},
  booktitle={Advances in Neural Information Processing Systems},
  volume={36},
  year={2023}
}

@inproceedings{ghorbani2020neural,
  title={When do neural networks outperform kernel methods?},
  author={Ghorbani, Behrooz and Mei, Song and Misiakiewicz, Theodor and Montanari, Andrea},
  booktitle={Advances in Neural Information Processing Systems},
  volume={33},
  year={2020}
}

@inproceedings{ba2023learning,
  title={Learning in the presence of low-dimensional structure: a spiked random matrix perspective},
  author={Ba, Jimmy and Erdogdu, Murat A and Suzuki, Taiji and Wang, Zhichao and Wu, Denny},
  booktitle={Advances in Neural Information Processing Systems},
  volume={36},
  year={2023}
}

@inproceedings{akyurek2025surprising,
  title={The Surprising Effectiveness of Test-Time Training for Few-Shot Learning},
  author={Aky{\"u}rek, Ekin and Damani, Mehul and Zweiger, Adam and Qiu, Linlu and Guo, Han and Pari, Jyothish and Kim, Yoon and Andreas, Jacob},
  booktitle={International Conference on Machine Learning},
  year={2025},
  organization={PMLR}
}

@book{bai_spectral_2010,
    series = {Springer {Series} in {Statistics}},
    title = {Spectral {Analysis} of {Large} {Dimensional} {Random} {Matrices}},
    publisher = {Springer New York},
    author = {Bai, Zhidong and Silverstein, Jack W.},
    year = {2010},
}

@article{marcenko_distribution_1967,
    title = {{Distribution} {of} {eigenvalues} {for} {some} {sets} {of} {random} {matrices}},
    volume = {1},
    journal = {Mathematics of the USSR-Sbornik},
    author = {Marčenko, V. A. and Pastur, L. A.},
    year = {1967},
}

@book{Vershynin_2026,
author = {Vershynin, Roman},
title = {High-Dimensional Probability: An Introduction with Applications in Data Science},
publisher = {Cambridge University Press},
series = {Cambridge Series in Statistical and Probabilistic Mathematics},
year = {2026}
}

@article{rubio2011Spectral,
    title = {Spectral convergence for a general class of random matrices},
    volume = {81},
    journal = {Statistics \& Probability Letters},
    author = {Rubio, Francisco and Mestre, Xavier},
    year = {2011},
}

@article{massyPrincipala,
 author = {William F. Massy},
 journal = {Journal of the American Statistical Association},
 title = {Principal Components Regression in Exploratory Statistical Research},
 volume = {60},
 year = {1965}
}

@article{dobriban2015HighDimensional,
  title = {High-Dimensional Asymptotics of Prediction: {{Ridge}} Regression and Classification},
  author = {Dobriban, Edgar and Wager, Stefan},
  year = 2018,
  journal = {The Annals of Statistics},
}

@article{hastie2020Surprises,
  title = {Surprises in High-Dimensional Ridgeless Least Squares Interpolation},
  author = {{Trevor Hastie} and {Andrea Montanari} and {Saharon Rosset} and {Ryan J. Tibshirani}},
  year = 2022,
  journal = {The Annals of Statistics},
  volume = {50},
}

@inproceedings{richardsAsymptotics,
  title={Asymptotics of Ridge(less) Regression under General Source Condition},
  author={Dominic Richards and Jaouad Mourtada and Lorenzo Rosasco},
  booktitle={International Conference on Artificial Intelligence and Statistics},
  year={2020},
}

@inproceedings{wu_optimal_2020,
  title = {On the Optimal Weighted \textbackslash ell\_2 Regularization in Overparameterized Linear Regression},
  booktitle = {Advances in Neural Information Processing Systems},
  author = {Wu, Denny and Xu, Ji},
  year = 2020,
  volume = {33},
}

@article{jolliffe_note_1982,
    title = {A {Note} on the {Use} of {Principal} {Components} in {Regression}},
    volume = {31},
    journal = {Applied Statistics},
    author = {Jolliffe, Ian T.},
    year = {1982},
}

@inproceedings{xu2019Number,
  title = {On the Number of Variables to Use in Principal Component Regression},
  booktitle = {Advances in Neural Information Processing Systems},
  author = {Xu, Ji and Hsu, Daniel J},
  year = 2019,
  volume = {32},
}

@book{yao2015Large,
    series = {Cambridge series in statistical and probabilistic mathematics},
    title = {Large sample covariance matrices and high-dimensional data analysis},
    language = {en},
    publisher = {Cambridge university press},
    author = {Yao, Jianfeng and Bai, Zhidong and Zheng, Shui-Rong},
    year = {2015},
}

@article{johnstone2001Distribution,
    title = {On the distribution of the largest eigenvalue in principal components analysis},
    volume = {29},
    language = {en},
    number = {2},
    urldate = {2026-04-24},
    journal = {The Annals of Statistics},
    author = {Johnstone, Iain M.},
    year = {2001},
}

@article{baik_eigenvalues_2004,
  title={Eigenvalues of large sample covariance matrices of spiked population models},
  author={Baik, Jinho and Silverstein, Jack W},
  journal={Journal of multivariate analysis},
  volume={97},
  year={2006},
}

@article{paulASYMPTOTICS,
  title={Asymptotics of sample eigenstructure for a large dimensional spiked covariance model},
  author={Paul, Debashis},
  journal={Statistica Sinica},
  year={2007},
}

@inproceedings{howard-ruder-2018-universal,
    title = "Universal Language Model Fine-tuning for Text Classification",
    author = "Howard, Jeremy  and
      Ruder, Sebastian",
    booktitle = "Proceedings of the 56th Annual Meeting of the Association for Computational Linguistics (Volume 1: Long Papers)",
    year = "2018",
}

@article{bai2012Sample,
    title = {On sample eigenvalues in a generalized spiked population model},
    volume = {106},
    copyright = {https://www.elsevier.com/tdm/userlicense/1.0/},
    language = {en},
    urldate = {2026-04-24},
    journal = {Journal of Multivariate Analysis},
    author = {Bai, Zhidong and Yao, Jianfeng},
    year = {2012},
}

@article{dhillon2013Risk,
  author  = {Paramveer S. Dhillon and Dean P.  Foster and Sham M.  Kakade and Lyle H. Ungar},
  title   = {A Risk Comparison of Ordinary Least Squares vs Ridge Regression},
  journal = {Journal of Machine Learning Research},
  year    = {2013},
  volume  = {14},
}

@article{grattafiori_llama_2024,
    title = {The {Llama} 3 {Herd} of {Models}},
    author = {Grattafiori, Aaron and others},
    year = {2024},
    journal = {arXiv preprint arXiv:2407.21783 },
}

@inproceedings{sun2020test,
  title={Test-time training with self-supervision for generalization under distribution shifts},
  author={Sun, Yu and Wang, Xiaolong and Liu, Zhuang and Miller, John and Efros, Alexei and Hardt, Moritz},
  booktitle={International conference on machine learning},
  year={2020},
  organization={PMLR}
}

@inproceedings{liu2021ttt++,
  title={Ttt++: When does self-supervised test-time training fail or thrive?},
  author={Liu, Yuejiang and Kothari, Parth and Van Delft, Bastien and Bellot-Gurlet, Baptiste and Mordan, Taylor and Alahi, Alexandre},
  booktitle={Advances in Neural Information Processing Systems},
  volume={34},
  year={2021}
}

@inproceedings{wang_glue_2019,
    title = "{GLUE}: A Multi-Task Benchmark and Analysis Platform for Natural Language Understanding",
    author = "Wang, Alex  and
      Singh, Amanpreet  and
      Michael, Julian  and
      Hill, Felix  and
      Levy, Omer  and
      Bowman, Samuel",
    booktitle = "Proceedings of the 2018 {EMNLP} Workshop {B}lackbox{NLP}: Analyzing and Interpreting Neural Networks for {NLP}",
    year = "2018",
}

@article{Bartlett_2020,
   title={Benign overfitting in linear regression},
   volume={117},
   journal={Proceedings of the National Academy of Sciences},
   author={Bartlett, Peter L. and Long, Philip M. and Lugosi, Gábor and Tsigler, Alexander},
   year={2020},
}

@article{belkin2019reconciling,
  title={Reconciling modern machine-learning practice and the classical bias--variance trade-off},
  author={Belkin, Mikhail and Hsu, Daniel and Ma, Siyuan and Mandal, Soumik},
  journal={Proceedings of the National Academy of Sciences},
  volume={116},
  year={2019},
}

@inproceedings{biderman2023pythia,
  title={Pythia: A Suite for Analyzing Large Language Models Across Training and Scaling},
  author={Biderman, Stella and Schoelkopf, Hailey and Anthony, Quentin Gregory and Bradley, Herbie and O'Brien, Kyle and Hallahan, Eric and Khan, Mohammad Aflah and Purohit, Shivanshu and Prashanth, USVSN Sai and Raff, Edward and others},
  booktitle={International Conference on Machine Learning},
  year={2023}
}

@article{gao2020pile,
  title={The {P}ile: An 800{GB} dataset of diverse text for language modeling},
  author={Gao, Leo and Biderman, Stella and Black, Sid and Golding, Laurence and Hoppe, Travis and Foster, Charles and Phang, Jason and He, Horace and Thite, Anish and Nabeshima, Noa and others},
  journal={arXiv preprint arXiv:2101.00027},
  year={2020}
}

@article{biderman2022datasheet,
  title={Datasheet for the pile},
  author={Biderman, Stella and Bicheno, Kieran and Gao, Leo},
  journal={arXiv preprint arXiv:2201.07311},
  year={2022}
}

@article{bernardi2020geometry,
  title={The geometry of abstraction in the hippocampus and prefrontal cortex},
  author={Bernardi, Silvia and Benna, Marcus K and Rigotti, Mattia and Munuera, J{\'e}r{\^o}me and Fusi, Stefano and Salzman, C Daniel},
  journal={Cell},
  volume={183},
  year={2020},
}

@article{raposo2014category,
	title        = {A category-free neural population supports evolving demands during decision-making},
	author       = {Raposo, David and Kaufman, Matthew T and Churchland, Anne K},
	year         = 2014,
	journal      = {Nature neuroscience},
	volume       = 17,
}

@article{tang2019effective,
	title        = {Effective learning is accompanied by high-dimensional and efficient representations of neural activity},
	author       = {Tang, Evelyn and Mattar, Marcelo G and Giusti, Chad and Lydon-Staley, David M and Thompson-Schill, Sharon L and Bassett, Danielle S},
	year         = 2019,
	journal      = {Nature neuroscience},
	volume       = 22,
}

@article{rigotti2013importance,
	title        = {The importance of mixed selectivity in complex cognitive tasks},
	author       = {Rigotti, Mattia and Barak, Omri and Warden, Melissa R and Wang, Xiao-Jing and Daw, Nathaniel D and Miller, Earl K and Fusi, Stefano},
	year         = 2013,
	journal      = {Nature},
	volume       = 497,
}

@article{tye2024selectivity,
author = {Tye, Kay and Miller, Earl and Taschbach, Felix and Benna, Marcus and Rigotti, Mattia and Fusi, Stefano},
year = {2024},
title = {Mixed selectivity: Cellular computations for complexity},
volume = {112},
journal = {Neuron},
}

@article{stokes2013dynamic,
  title={Dynamic coding for cognitive control in prefrontal cortex},
  author={Stokes, Mark G and Kusunoki, Makoto and Sigala, Natasha and Nili, Hamed and Gaffan, David and Duncan, John},
  journal={Neuron},
  volume={78},
  year={2013},
}

@article{genovesio2005prefrontal,
  title={Prefrontal cortex activity related to abstract response strategies},
  author={Genovesio, Aldo and Brasted, Peter J and Mitz, Andrew R and Wise, Steven P},
  journal={Neuron},
  volume={47},
  year={2005},
}

@article{shima2000neuronal,
  title={Neuronal activity in the supplementary and presupplementary motor areas for temporal organization of multiple movements},
  author={Shima, Keisetsu and Tanji, Jun},
  journal={Journal of neurophysiology},
  volume={84},
  year={2000},
}

@article{bengio2014representationlearningreviewnew, 
    author = {Bengio, Yoshua and Courville, Aaron and Vincent, Pascal}, 
    title = {Representation Learning: A Review and New Perspectives}, 
    year = {2013}, 
    journal = {IEEE Trans. Pattern Anal. Mach. Intell.}
}

@article{martin2007representation,
  title={The representation of object concepts in the brain},
  author={Martin, Alex},
  journal={Annu. Rev. Psychol.},
  volume={58},
  year={2007},
}

@inproceedings{zenke2017continual,
	title        = {Continual learning through synaptic intelligence},
	author       = {Zenke, Friedemann and Poole, Ben and Ganguli, Surya},
	year         = 2017,
	booktitle    = {International Conference on Machine Learning},
	organization = {PMLR},
}

@article{kirkpatrick2017overcoming,
	title        = {Overcoming catastrophic forgetting in neural networks},
	author       = {Kirkpatrick, James and Pascanu, Razvan and Rabinowitz, Neil and Veness, Joel and Desjardins, Guillaume and Rusu, Andrei A and Milan, Kieran and Quan, John and Ramalho, Tiago and Grabska-Barwinska, Agnieszka and others},
	year         = 2017,
	journal      = {Proceedings of the national academy of sciences},
	volume       = 114,
}

@inproceedings{lampinen2018analytic,
title={An analytic theory of generalization dynamics and transfer learning in deep linear networks},
author={Andrew K. Lampinen and Surya Ganguli},
booktitle={International Conference on Learning Representations},
year={2019},
}

@article{gerace2022probing,
	title        = {Probing transfer learning with a model of synthetic correlated datasets},
	author       = {Gerace, Federica and Saglietti, Luca and Mannelli, Stefano Sarao and Saxe, Andrew and Zdeborov{\'a}, Lenka},
	year         = 2022,
	journal      = {Machine Learning: Science and Technology},
}

@article{parisi2019continual,
	title        = {Continual lifelong learning with neural networks: A review},
	author       = {Parisi, German I and Kemker, Ronald and Part, Jose L and Kanan, Christopher and Wermter, Stefan},
	year         = 2019,
	journal      = {Neural Networks},
	volume       = 113,
}

@article{refinetti2022Dynamics,
  title = {The Dynamics of Representation Learning in Shallow, Non-Linear Autoencoders},
  author = {Refinetti, Maria and Goldt, Sebastian},
  year = 2023,
  journal = {Journal of Statistical Mechanics: Theory and Experiment},
  volume = {2023},
}

@article{baldi1989Neural,
    title = {Neural networks and principal component analysis: {Learning} from examples without local minima},
    volume = {2},
    journal = {Neural Networks},
    author = {Baldi, Pierre and Hornik, Kurt},
    year = {1989},
}

@article{plaut2018Principal,
    title = {From {Principal} {Subspaces} to {Principal} {Components} with {Linear} {Autoencoders}},
    author = {Plaut, Elad},
    year = {2018},
    journal = {arXiv preprint arXiv:1804.10253 },
}

@article{bourlard1988Autoassociation,
  title = {Auto-Association by Multilayer Perceptrons and Singular Value Decomposition},
  author = {Bourlard, H. and Kamp, Y.},
  year = 1988,
  journal = {Biological Cybernetics},
  volume = {59},
}

@inproceedings{
fumero2026navigating,
title={Navigating the Latent Space Dynamics of Neural Models},
author={Marco Fumero and Luca Moschella and Emanuele Rodol{\`a} and Francesco Locatello},
booktitle={The Fourteenth International Conference on Learning Representations},
year={2026},
}

@inproceedings{jiang2020Associative,
author = {Jiang, Yibo and Pehlevan, Cengiz},
title = {Associative memory in iterated overparameterized sigmoid autoencoders},
year = {2020},
booktitle = {Proceedings of the 37th International Conference on Machine Learning},
}

@article{radhakrishnan2020Overparameterized,
    title = {Overparameterized {Neural} {Networks} {Implement} {Associative} {Memory}},
    volume = {117},
    journal = {Proceedings of the National Academy of Sciences},
    author = {Radhakrishnan, Adityanarayanan and Belkin, Mikhail and Uhler, Caroline},
    year = {2020},
}

@inproceedings{bombarispurious,
  title={Spurious Correlations in High Dimensional Regression: The Roles of Regularization, Simplicity Bias and Over-Parameterization},
  author={Bombari, Simone and Mondelli, Marco},
  booktitle={International Conference on Machine Learning},
year={2025}
}

@article{yang2020precise,
 title={Precise High-Dimensional Asymptotics for Quantifying Heterogeneous Transfers},
  author={Yang, Fan and Zhang, Hongyang R and Wu, Sen and Re, Christopher and Su, Weijie J},
  journal={Journal of Machine Learning Research},
  volume={26},
  year={2025}
}

@inproceedings{ildiz2024highdimensional,
title={High-dimensional Analysis of Knowledge Distillation: Weak-to-Strong Generalization and Scaling Laws}, 
author={M. Emrullah Ildiz and Halil Alperen Gozeten and Ege Onur Taga and Marco Mondelli and Samet Oymak},
year={2025},
booktitle={International Conference on Learning Representations}
}

@inproceedings{kolossovtowards,
  title={Towards a statistical theory of data selection under weak supervision},
  author={Kolossov, Germain and Montanari, Andrea and Tandon, Pulkit},
  booktitle={International Conference on Learning Representations},
year={2024}
}

@article{jain2024scaling,
  title={Scaling laws for learning with real and surrogate data},
  author={Jain, Ayush and Montanari, Andrea and Sasoglu, Eren},
  journal={Advances in Neural Information Processing Systems},
  volume={37},
  year={2024}
}

@inproceedings{rezaei2025high,
  title={High-dimensional Analysis of Synthetic Data Selection},
  author={Rezaei, Parham and Kovacevic, Filip and Locatello, Francesco and Mondelli, Marco},
  booktitle={International Conference on Learning Representations},
  year={2026}
}

@article{song2024generalization,
  title={Generalization error of min-norm interpolators in transfer learning},
  author={Song, Yanke and Bhattacharya, Sohom and Sur, Pragya},
  journal={arXiv preprint arXiv:2406.13944},
  year={2024}
}

@article{cyffers2025optimal,
  title={Optimal regularization for performative learning},
  author={Cyffers, Edwige and Mirrokni, Alireza and Mondelli, Marco},
  journal={arXiv preprint arXiv:2510.12249},
  year={2025}
}

@article{han2023distribution,
  author  = {Qiyang Han and Xiaocong Xu},
  title   = {The Distribution of Ridgeless Least Squares Interpolators},
  journal = {Journal of Machine Learning Research},
  year    = {2026},
  volume  = {27},
}

@inproceedings{aghajanyan2021Intrinsic,
    title = {Intrinsic {Dimensionality} {Explains} the {Effectiveness} of {Language} {Model} {Fine}-{Tuning}},
    booktitle = {Proceedings of the 59th {Annual} {Meeting} of the {Association} for {Computational} {Linguistics} and the 11th {International} {Joint} {Conference} on {Natural} {Language} {Processing} ({Volume} 1: {Long} {Papers})},
    author = {Aghajanyan, Armen and Gupta, Sonal and Zettlemoyer, Luke},
    year = {2021},
}

@inproceedings{
braun2022exact,
title={Exact learning dynamics of deep linear networks with prior knowledge},
author={Lukas Braun and Cl{\'e}mentine Carla Juliette Domin{\'e} and James E Fitzgerald and Andrew M Saxe},
booktitle={Advances in Neural Information Processing Systems},
year={2022},
}

@inproceedings{ouyang2022Training,
  title = {Training Language Models to Follow Instructions with Human Feedback},
  booktitle = {Advances in Neural Information Processing Systems},
  author = {Ouyang, Long and Wu, Jeffrey and Jiang, Xu and Almeida, Diogo and Wainwright, Carroll and Mishkin, Pamela and Zhang, Chong and Agarwal, Sandhini and Slama, Katarina and Ray, Alex and Schulman, John and Hilton, Jacob and Kelton, Fraser and Miller, Luke and Simens, Maddie and Askell, Amanda and Welinder, Peter and Christiano, Paul F and Leike, Jan and Lowe, Ryan},
  year = 2022,
  volume = {35},
}

@article{openai2024GPT4,
    title = {{GPT}-4 {Technical} {Report}},
    author = {OpenAI and others},
    year = {2024},
    journal = {arXiv preprint arXiv:2303.08774},
}

@inproceedings{hendrycks2021Measuring,
 author = {Hendrycks, Dan and Burns, Collin and Kadavath, Saurav and Arora, Akul and Basart, Steven and Tang, Eric and Song, Dawn and Steinhardt, Jacob},
 booktitle = {Proceedings of the Neural Information Processing Systems Track on Datasets and Benchmarks},
 title = {Measuring Mathematical Problem Solving With the MATH Dataset},
 volume = {1},
 year = {2021}
}

@inproceedings{christiano2023Deep,
  title = {Deep Reinforcement Learning from Human Preferences},
  booktitle = {Advances in Neural Information Processing Systems},
  author = {Christiano, Paul F and Leike, Jan and Brown, Tom and Martic, Miljan and Legg, Shane and Amodei, Dario},
  year = 2017,
  volume = {30},
}

@inproceedings{hu2022LoRA,
title={Lo{RA}: Low-Rank Adaptation of Large Language Models},
author={Edward J Hu and yelong shen and Phillip Wallis and Zeyuan Allen-Zhu and Yuanzhi Li and Shean Wang and Lu Wang and Weizhu Chen},
booktitle={International Conference on Learning Representations},
year={2022},
}

@inproceedings{bengio2007Greedy,
  title = {Greedy Layer-Wise Training of Deep Networks},
  booktitle = {Advances in Neural Information Processing Systems},
  author = {Bengio, Yoshua and Lamblin, Pascal and Popovici, Dan and Larochelle, Hugo},
  year = 2006,
  volume = {19},
}

@article{erhanWhy,
  author  = {Dumitru Erhan and Yoshua Bengio and Aaron Courville and Pierre-Antoine Manzagol and Pascal Vincent and Samy Bengio},
  title   = {Why Does Unsupervised Pre-training Help Deep Learning?},
  journal = {Journal of Machine Learning Research},
  year    = {2010},
  volume  = {11},
}

@article{vincentStacked,
  author  = {Pascal Vincent and Hugo Larochelle and Isabelle Lajoie and Yoshua Bengio and Pierre-Antoine Manzagol},
  title   = {Stacked Denoising Autoencoders: Learning Useful Representations in a Deep Network with a Local Denoising Criterion},
  journal = {Journal of Machine Learning Research},
  year    = {2010},
  volume  = {11},
}

@article{boyle2024Tuned,
  title = {Tuned Geometries of Hippocampal Representations Meet the Computational Demands of Social Memory},
  author = {Boyle, Lara M. and Posani, Lorenzo and Irfan, Sarah and Siegelbaum, Steven A. and Fusi, Stefano},
  year = 2024,
  journal = {Neuron},
  volume = {112},
}

@article{courellis2024Abstract,
  title = {Abstract Representations Emerge in Human Hippocampal Neurons during Inference},
  author = {Courellis, Hristos S. and Minxha, Juri and Cardenas, Araceli R. and Kimmel, Daniel L. and Reed, Chrystal M. and Valiante, Taufik A. and Salzman, C. Daniel and Mamelak, Adam N. and Fusi, Stefano and Rutishauser, Ueli},
  year = 2024,
  journal = {Nature},
  volume = {632},
}

@article{mishchanchuk2024hidden,
  title = {Hidden State Inference Requires Abstract Contextual Representations in the Ventral Hippocampus},
  author = {Mishchanchuk, Karyna and Gregoriou, Gabrielle and Q{\"u}, Albert and Kastler, Aliz{\'e}e and Huys, Quentin J. M. and Wilbrecht, Linda and MacAskill, Andrew F.},
  year = 2024,
  journal = {Science},
  volume = {386},
}

@article{nogueira2023Geometry,
  title = {The Geometry of Cortical Representations of Touch in Rodents},
  author = {Nogueira, Ramon and Rodgers, Chris C. and Bruno, Randy M. and Fusi, Stefano},
  year = 2023,
  journal = {Nature Neuroscience},
  volume = {26},
}

@article{wang2026Mathematical,
  title = {A Mathematical Theory for Understanding When Abstract Representations Emerge in Neural Networks},
  author = {Wang, Bin and Johnston, W. Jeffrey and Fusi, Stefano},
  year = 2026,
  journal = {arXiv preprint arXiv:2510.09816},
}

@article{alain2018Understanding,
  title = {Understanding Intermediate Layers Using Linear Classifier Probes},
  author = {Alain, Guillaume and Bengio, Yoshua},
  year = 2018,
  journal={arXiv preprint arXiv:1610.01644},
}

@inproceedings{valeriani2023Geometry,
  title = {The Geometry of Hidden Representations of Large Transformer Models},
  booktitle = {Advances in {{Neural Information Processing Systems}}},
  author = {Valeriani, Lucrezia and Doimo, Diego and Cuturello, Francesca and Laio, Alessandro and Ansuini, Alessio and Cazzaniga, Alberto},
  year = 2023,
  volume = {36},
}

@article{anguita2026theory,
      title={A Theory of How Pretraining Shapes Inductive Bias in Fine-Tuning}, 
      author={Nicolas Anguita and Francesco Locatello and Andrew M. Saxe and Marco Mondelli and Flavia Mancini and Samuel Lippl and Clementine Domine},
      year={2026},
      journal={arXiv preprint arXiv:2602.20062},
}

@article{fusi_why_2016,
    title = {Why neurons mix: high dimensionality for higher cognition},
    volume = {37},
    shorttitle = {Why neurons mix},
    language = {en},
    urldate = {2024-01-23},
    journal = {Current Opinion in Neurobiology},
    author = {Fusi, Stefano and Miller, Earl K and Rigotti, Mattia},
    year = {2016},
}

@article{advani2020Highdimensional,
    title = {High-dimensional dynamics of generalization error in neural networks},
    volume = {132},
    journal = {Neural Networks},
    author = {Advani, Madhu S. and Saxe, Andrew M. and Sompolinsky, Haim},
    year = {2020},
}

@InProceedings{jonesmccormick2025provable,
  title = 	 {Provable Benefits of Unsupervised Pre-training and Transfer Learning via Single-Index Models},
  author =       {Jones-Mccormick, Taj and Jagannath, Aukosh and Sen, Subhabrata},
  booktitle = 	 {Proceedings of the 42nd International Conference on Machine Learning},
  year = 	 {2025},
  volume = 	 {267},
  publisher = {PMLR}
}

@article{arora2019theoretical,
      title={A Theoretical Analysis of Contrastive Unsupervised Representation Learning}, 
      author={Sanjeev Arora and Hrishikesh Khandeparkar and Mikhail Khodak and Orestis Plevrakis and Nikunj Saunshi},
      year={2019},
      journal={arXiv preprint arXiv:1902.09229},
}

@article{pan2010survey,
  author={Pan, Sinno Jialin and Yang, Qiang},
  journal={IEEE Transactions on Knowledge and Data Engineering}, 
  title={A Survey on Transfer Learning}, 
  year={2010},
  volume={22},
}

@inproceedings{tahir2025features,
title={Features are fate: a theory of transfer learning in high-dimensional regression},
author={Javan Tahir and Surya Ganguli and Grant M. Rotskoff},
booktitle={Forty-second International Conference on Machine Learning},
year={2025},
}

@article{zhuang2020comprehensive,
author = {Zhuang, Fuzhen and Qi, Zhiyuan and Duan, Keyu and Xi, Dongbo and Zhu, Yongchun and Zhu, Hengshu and Xiong, Hui and He, Qing},
year = {2020},
title = {A Comprehensive Survey on Transfer Learning},
volume = {PP},
journal = {Proceedings of the IEEE},
}

@inproceedings{lee2021predicting,
title={Predicting What You Already Know Helps: Provable Self-Supervised Learning},
author={Jason D. Lee and Qi Lei and Nikunj Saunshi and Jiacheng Zhuo},
booktitle={Advances in Neural Information Processing Systems},
editor={A. Beygelzimer and Y. Dauphin and P. Liang and J. Wortman Vaughan},
year={2021},
}

@inproceedings{zhang-hashimoto-2021-inductive,
    title = "On the Inductive Bias of Masked Language Modeling: From Statistical to Syntactic Dependencies",
    author = "Zhang, Tianyi  and
      Hashimoto, Tatsunori B.",
    booktitle = "Proceedings of the 2021 Conference of the North American Chapter of the Association for Computational Linguistics: Human Language Technologies",
    year = "2021",
}

@inproceedings{azar2024semisupervised,
title={Semi-Supervised Sparse Gaussian Classification: Provable Benefits of Unlabeled Data},
author={Eyar Azar and Boaz Nadler},
booktitle={The Thirty-eighth Annual Conference on Neural Information Processing Systems},
year={2024},
}

@article{badre_dimensionality_2021,
    title = {The dimensionality of neural representations for control},
    volume = {38},
    journal = {Current Opinion in Behavioral Sciences},
    author = {Badre, David and Bhandari, Apoorva and Keglovits, Haley and Kikumoto, Atsushi},
    year = {2021},
}

\newpage


\appendix
\renewcommand{\thesubsection}{\Alph{subsection}}

\section*{Appendix}

\subsection{Theory} \label{app:theory}

\subsubsection{Assumptions}
\label{app:assumptions}

\begin{assumption}[High-dimensional asymptotic regime]
\label{ass:high_dim}
Let the number of features be $p$, and \( n_l , n_u, m \) be such that
\[
\frac{p}{n_l} \to \gamma_l \in (0,\infty),
\qquad
\frac{p}{n_u} \to \gamma_u \in (0,\infty),
\qquad
\frac{m}{p} \to \alpha \in [0,1].
\]
Furthermore, we define $\gamma_{\mathrm{eff}} := \frac{m}{n_l}
\;\to\;
\alpha \gamma_l\in [0, \infty)$.
\end{assumption}
In words, $\alpha$ is the fraction of dimensions to keep, $\gamma_u$ and $\gamma_l$ are the aspect ratios the prior and downstream datasets, and $\gamma_{\mathrm{eff}}$ is the effective aspect ratio of the compressed downstream dataset.

\begin{assumption}[Spiked covariance model]
\label{ass:spike_cov}
For each $p$, the population covariance matrix is
\[
\bSigma = \mathbf I_p + (\lambda - 1) \bv \bv^\top,
\]
where $\lambda > 1$ and the unit vector $\bv \in \mathbb{R}^p$ are fixed.
\end{assumption}

Note that the eigendecomposition of $\bSigma$ is 
\[ 
\bSigma = \V \mathrm{diag} (\lambda, 1, \dots ,1) \V^\top,
\]
with $\V = [\bv_{1}, \bv_{2}, \dots, \bv_{p}] \in \mathbb R^{p \times p}$. As we often refer directly to the spike eigenvector, we keep the simpler notation $\bv_{1} = \bv$.

\begin{assumption}[Data generation]
    \label{ass:data_generation}
    The input matrices $\X_l \in \mathbb{R}^{n_l \times p}$ and $\X_u \in \mathbb{R}^{n_u \times p}$ have i.i.d.\ rows distributed as
\[
\x \sim \mathcal{N}(0, \bSigma).
\]
\end{assumption}
Equivalently, there exists a matrix $\Z \in \mathbb R ^{n_l \times p}$ with i.i.d entries distributed as $\mathcal{N} (0,1)$, such that $\X_l = \Z \bSigma^{1/2}$. 

\begin{assumption}[Task weight vector]
    Let the alignment between the true signal $\wstar$ and the data covariance structure $\bSigma$ be given by the spectral measure
    \[
    \hat G_\wstar(\tau) = \frac{1}{\bar w^\ast} \sum_{i=1}^p (\wstar^\top \bv_{i})^2 \mathbf 1\{ \lambda_i(\bSigma) \leq \tau\}.
    \]
Then, there exists a limiting spectrum $G_\wstar$ such that $\hat G_\wstar \Rightarrow G_\wstar$ almost surely.
\end{assumption}

\paragraph{Sample covariance eigenspectrum.} We further note that sample covariance matrices $\mathbf S_u = \frac{1}{n_u} \X_u^\T \X_u$ and $\Ss_l =  \frac{1}{n_l} \X_l^\T \X_l$ have empirical spectral measures

\begin{align*}
    &\hat F^{\Ss_u}(\theta) = \frac{1}{p} \sum_{i=1}^p \mathbf 1 \{ \lambda_i(\Ss_u) \leq \theta \}, & \hat F^{\Ss_l}(\theta) = \frac{1}{p} \sum_{i=1}^p \mathbf 1 \{ \lambda_i(\Ss_l) \leq \theta \},
\end{align*}
which converge weakly almost surely to Marcenko-Pastur laws \cite{marcenko_distribution_1967, bai_spectral_2010} with aspect ratios $\gamma_u$ and $\gamma_l$ respectively:

\begin{align*}
    &\hat F^{\Ss_u} \Rightarrow F_{\gamma_u}, &\hat F^{\Ss_l} \Rightarrow F_{\gamma_l}.
\end{align*}
For completeness, we state the Marcenko-Pastur density with an aspect ratio $\gamma$:

\begin{equation} \label{eq:mp_definition}
\begin{aligned}
    f_{\gamma}(\lambda) &= \frac{\sqrt{(\lambda_+ - \lambda)(\lambda - \lambda_-)}}{2\pi\lambda \gamma} +  
     \mathbf{1}_{\gamma>1}\left(1-\frac{1}{\gamma}\right)\delta(\lambda), \\
     \lambda_+ &= (1+\sqrt{\gamma})^2, \\
     \lambda_- &= (1 - \sqrt{\gamma})^2, 
\end{aligned}
\end{equation}
with $\lambda_+, \lambda_-$ denoting the edges of the support of the distribution.

\subsubsection{Main ideas of the proofs}\label{app:proofideas}

The key idea of each proof is to decompose the model's estimator $\what$ into a signal term, a bias term arising from the component of $\wstar$ outside the relevant subspace, and two variance terms arising from noise and model mismatch. The Gaussian structure of the data allows us to express the component of $\X_l$ orthogonal to the prior subspace $\U_m$ as a linear function of $\X_l\U_m$ plus independent noise $\E$. This reduces the analysis to understanding various projections onto the row space of spiked Gaussian matrices. 

\paragraph{Conditional Gaussian decomposition under prior subspace projections.}
\label{sec:gaussian_decomposition}
Let $\X_l \in \mathbb{R}^{n_l \times p}$ be a random matrix whose rows $\x_i \sim \mathcal{N}(0,\bSigma)$ are i.i.d., and let $\U_m \in \mathbb{R}^{p \times m}$ be a random matrix with orthonormal columns. Let $\U_\perp \in \mathbb{R}^{p \times (p-m)}$ be such that $[\U_m \ \U_\perp]$ is an orthogonal matrix. Define
\[
\Xm = \X_l \U_m \in \mathbb{R}^{n_l \times m}, 
\qquad
\Xperp = \X_l \U_\perp \in \mathbb{R}^{n_l \times (p-m)}.
\]
Then, conditioning on $\U_m$, the rows of $(\Xm, \Xperp)$ are jointly Gaussian with covariance
\[
\begin{bmatrix}
\bSigma_{mm} & \bSigma_{m\perp} \\
\bSigma_{\perp m} & \bSigma_{\perp\perp}
\end{bmatrix}
=
\begin{bmatrix}
\U_m^\top \bSigma \U_m & \U_m^\top \bSigma \U_\perp \\
\U_\perp^\top \bSigma \U_m & \U_\perp^\top \bSigma \U_\perp
\end{bmatrix}.
\]
Moreover, due to the conditional distribution, $\Xperp$ given $\Xm$ satisfies
\begin{equation} \label{eq:xperp_decomp}
    \Xperp = \Xm \bSigma_{mm}^{-1}\bSigma_{m\perp} + \E,
\end{equation}
where $\E$ is independent of $\Xm$ and its rows are i.i.d.\ with distribution  
$\mathbf e_i \sim \mathcal{N}\!\left(0,\; \bar \bSigma\right)$, with $\bar \bSigma :=\bSigma_{\perp\perp} - \bSigma_{\perp m}\bSigma_{mm}^{-1}\bSigma_{m\perp}$.
Under the spiked covariance model of Assumption \ref{ass:spike_cov}, the population covariance $\bSigma_{mm}$ of $\X_m$  equals
\begin{equation}
    \bSigma_{mm} = \I_m + (\lamtilde-1)\tilde \bv \tilde \bv^\top,
\end{equation}
where $\lamtilde = 1 + (\lambda-1) \| \Pm\bv\|^2$ and $\tilde \bv = \frac{\U_m^\top \bv}{\| \Pm \bv\|}$. Furthermore, after some algebraic manipulations using the Sherman-Morrison formula, it follows that
\begin{align}
    \bSigma_{mm}^{-1} &= \mathbf I_m - \frac{\lamtilde - 1}{\lamtilde} \tilde \bv\tilde \bv^\top, \\
         \bSigma_{mm}^{-1}\bSigma_{m\perp} &=  \frac{(\lambda-1) \| \Pm \bv\|}{\lamtilde} \vtilde \bv^\top \U_\perp, \label{eq:sigma_mm_inv_sigma_mperp}\\
                  \bar \bSigma &=  \mathbf I_{p-m} + \frac{\lambda-1}{\lamtilde} \U_\perp^\top \bv\bv^\top \U_\perp. \label{eq:barsigma}
\end{align}
\paragraph{Decomposing the model's weight estimate.} From the definition of $\what$ in Equation~\eqref{eq:weight_estimate} and the decomposition in Equation \eqref{eq:xperp_decomp}, we can write:
\begin{equation*}
    \begin{aligned}
       \what &= (\X_l \Pm)^\dagger \X_l\Pm \wstar +(\X_l \Pm)^\dagger \X_l\Pperp \wstar  + (\X_l\Pm)^\dagger \xi \\
       &=  (\X_m \U_m^\top)^\dagger \X_m \U_m^\top\wstar + (\X_m \U_m^\top)^\dagger \X_m \bSigma_{mm}^{-1} \bSigma_{m\perp} \U_\perp^\top \wstar +   (\X_m \U_m^\top)^\dagger \E\U_\perp^\top \wstar +  (\X_m \U_m^\top)^\dagger \xi.
    \end{aligned}
\end{equation*}
Then, using Equation~\eqref{eq:sigma_mm_inv_sigma_mperp} leads to 
\begin{equation} \label{eq:transformed_wstar}
\bSigma_{mm}^{-1}\bSigma_{m\perp} \U_\perp^\top \wstar =   \underbrace{\frac{(\lambda-1)(\wstar^\top \Pperp \bv)}{\lamtilde}}_{A}\; \U_m^\top \bv 
\end{equation}
and by defining 
\[\Pi_m := \Xm^\dagger \Xm \in \mathbb{R}^{m \times m} \qquad  \Pi := (\X_l \Pm)^\dagger \X_l \Pm = \U_m \Pi_m \U_m^\top \in \mathbb R^{p \times p}\],
we obtain
\begin{equation}\label{eq:what_expression}
\begin{aligned}
    \what 
    &=\underbrace{\Pi \wstar}_{\text{signal from $\Pm$ subspace}} + \underbrace{A \cdot \Pi \bv}_{\text{signal from $\Pperp$ subspace that leaked into $\Pm$}}\\
    &+\underbrace{(\X_l \Pm)^\dagger \E \wstar_\perp}_{\text{variance from model mismatch}} + \underbrace{(\X_l\Pm)^\dagger \xi}_{\text{variance from label noise}},
\end{aligned}
\end{equation}
where $\wstar_\perp:= \U_\perp^\top \wstar$.
 Note that the component of $\wstar$ not involving label noise is 
\begin{equation}\label{eq:noiseless_what_decomp}
    (\X_l\Pm)^\dagger \X_l \wstar = \Pi \wstar + A \Pi \bv + (\X_l\Pm)^\dagger \E \wstar_{\perp}.
\end{equation}
The model's weight estimate decomposes into the four parts discussed below.
\begin{itemize}
    \item $\Pi \wstar$: part of $\wstar$ that lies in the subspace formed by $\Pm$ and is identifiable from the available data. It is equal to $\Pm\wstar$ if $m < n_l$ (i.e., $\gamma_\mathrm{eff} < 1$), since in that case $\Pi_m = \mathbf I_m$  (see Lemma~\ref{lem:projection_effective}); if $m > n_l$ (i.e., $\gamma_\mathrm{eff} > 1$), this term causes bias due to the finite sample size.
    \item $A \Pi \bv$: a part of $\wstar$ that lies outside the subspace $\Pm$ can leak into $\Pm$, causing the model to reconstruct it (in the $\Pm$ subspace). The reason is the existence of the population covariance spike which makes the rows of $\X_l \Pperp$ and $\X_l\Pm$ correlated, so the part outside the subspace leaks back into the subspace through this covariance structure. 
    \item $(\X_l \Pm)^\dagger \E \wstar_\perp$:  the part of the $\wstar$ that lies outside the subspace $\Pm$ and does not leak into $\Pm$ behaves like noise, eventually causing a double-descent-like curve that scales with $\|\wstar\|^2$.
    \item $(\X_l\Pm)^\dagger \xi$: label noise variance, causing another double-descent-like curve that scales with $\sigma^2$.
\end{itemize}

\subsubsection{Proof for the estimation error}
\label{app:proof_estimation_error}
\begin{proof}[Proof of Theorem~\ref{thm:estimation_error}]

The proof proceeds in three stages, exploiting the conditional independence structure of the data. First, we use the joint Gaussianity of $\X_m $ and $\X_\perp$ to rewrite $\X_\perp$ in terms of $\X_m$ and an independent noise matrix $\E \in \mathbb{R}^{n_l \times (p-m)}$ (see Section~\ref{sec:gaussian_decomposition}). We then average over $\E$ conditioned on $\X_m$ and $\U_m$, then average over $\X_m$ conditioned on $\U_m$, and finally over $\U_m$.

\paragraph{Step 1: Decomposition of the error.}

The estimation error splits into bias and variance:
\begin{equation*}
    \begin{aligned}
        \mathbb E_\xi \left[\| \wstar - \what\|^2 \right]&=   \left\|\left(\mathbf{I} - (\X_l\Pm)^\dagger\X_l\right)\wstar\right\|^2  + \mathbb E_\xi \left[ \xi^\top (\X_l\Pm)^{\dagger\top}(\X_l\Pm)^\dagger \xi\right] \\
        &- 2 \mathbb E_\xi \left[\xi^\top (\X_l\Pm)^{\dagger\top} \left(\mathbf{I} - (\X_l\Pm)^\dagger\X_l\right)\wstar\right] \\
        &=  \left\|\left(\mathbf{I} - (\X_l\Pm)^\dagger\X_l\right)\wstar\right\|^2  + \mathbb E_\xi \left[ \mathrm{Tr}\left[\xi\xi^\top (\X_l\Pm)^{\dagger\top}(\X_l\Pm)^\dagger\right]\right] \\
        &=  \left\|\left(\mathbf{I} - (\X_l\Pm)^\dagger\X_l\right)\wstar\right\|^2  + \sigma^2 \mathrm{Tr}\left[(\Pm \X_l^\top\X_l\Pm)^\dagger\right].
    \end{aligned}
\end{equation*}
Substituting the result in \eqref{eq:noiseless_what_decomp}  further decomposes the bias:
\begin{equation*}
\begin{aligned}
    \frac{1}{\|\wstar \|^2}\left\|\left(\mathbf{I} - (\X_l\Pm)^\dagger\X_l\right)\wstar\right\|^2 &=  \frac{1}{\|\wstar \|^2} \|\wstar - \Pi\wstar - A\Pi\bv - (\X_l\Pm)^\dagger \E \U_\perp^\top\wstar \|^2 \\
    &=  \frac{1}{\|\wstar \|^2}\|\wstar - \Pi\wstar - A\Pi\bv\|^2 +  \frac{1}{\bar w^\ast}\left\|(\X_l\Pm)^\dagger \E \U_\perp^\top\wstar\right\|^2 \\
    &\quad - 2\underbrace{\frac{1}{\|\wstar \|^2}(\wstar - \Pi\wstar - A\Pi\bv)^\top 
      (\X_l\Pm)^\dagger \E \U_\perp^\top\wstar}_{\text{cross term } T}.
\end{aligned}
\end{equation*}
\paragraph{Step 2: $\E$-independent term.}

The first term $\|\wstar - \Pi\wstar - A\Pi\bv\|^2$ does not depend on $\E$, so its conditional expectation on $\E$ is itself. We expand using the property of the projection matrix $\Pi$ that $\Pi^2 = \Pi$:
\begin{equation}
     \frac{1}{\|\wstar \|^2}\|\wstar - \Pi\wstar - A\Pi\bv\|^2 
    = 1 -  \frac{\|\Pi\wstar\|^2 }{\|\wstar \|^2}
    + \frac{A^2}{\|\wstar \|^2}\|\Pi\bv\|^2,
\end{equation}
where the cross term $-2A\wstar^\top\Pi\bv + 2A\wstar^\top \Pi\Pi\bv = 0$. Substituting $A = \frac{\lambda-1}{\tilde\lambda}(\bv^\top\Pperp\wstar)$ gives
\begin{equation}\label{eq:temp_theorem1_1}
    1 - \frac{\|\Pi\wstar\|^2 }{\|\wstar\|^2}
    + \frac{(\lambda-1)^2}{\tilde\lambda^2}\frac{(\bv^\top\Pperp\wstar)^2}{\|\wstar\|^2} \|\Pi\bv\|^2 \Rightarrow 1 - \bar\pi_{\wstar, \wstar} + \frac{(\lambda-1)^2}{\bar \lambda^2} \bar p_{\perp, \wstar, \bv} ^2 \bar \pi_{\bv, \bv} \,\, \text{almost surely},
\end{equation}
which matches the first two terms of the bias. The projections $\frac{\|\Pi\wstar\|^2 }{\|\wstar\|^2}$ and $\|\Pi\bv\|^2$ converge almost surely to $\bar \pi_{\wstar, \wstar}, \bar \pi_{\bv, \bv}$ in Equation \eqref{eq:limit_pitilde_projections} according to Lemma~\ref{lem:projection_effective}.  The quantity $\frac{\bv^\top\Pperp\wstar}{\|\wstar\|} $ converges almost surely to $\bar p_{\perp, \wstar, \bv}$ in Equation \eqref{eq:bilinear_proj} according to Lemma~\ref{lem:bilinear_projection}, and $\lamtilde = 1 + (\lambda - 1)\|\Pm \bv\|^2$ converges to $\bar \lambda$ almost surely by the continuous mapping theorem and the almost sure convergence of $\|\Pm \bv\|^2 \to \bar p_{\bv, \bv}$ from Equation \eqref{eq:pperpv} (see Lemma~\ref{lem:pperpv}). By the continuous mapping theorem, the combined expression in \eqref{eq:temp_theorem1_1} converges almost surely.

\paragraph{Step 3: Cross term linear in $\E$.}

We write the cross term as
\[
T = \mathbf c^\top \E \mathbf d,
\]
with
\begin{align*}
\mathbf c &:= \frac{1}{\|\wstar \|} (\X_l\Pm)^{\dagger\top} (\wstar - \Pi\wstar - A\Pi\bv) \in \mathbb R^{n_l}, \\
\mathbf d &:= \frac{1}{\|\wstar \|} \U_\perp^\top \wstar \in \mathbb R^{p-m}.
\end{align*}
Conditioned on $(\X_m,\U_m)$, the vectors $\mathbf c$ and $\mathbf d$ are deterministic and independent of $\E$. Since the rows of $\E$ are i.i.d.\ Gaussian with covariance $\bar\bSigma$, it follows that
\[
T \mid (\X_m,\U_m) \sim \mathcal N\!\left(0,\, (\mathbf d^\top \bar\bSigma \mathbf d)\,\|\mathbf c\|^2 \right).
\]
We now bound the variance. First,
\[
\mathbf d^\top \bar\bSigma \mathbf d
=
\frac{\| \Pperp \wstar\|^2}{\|\wstar\|^2}
+
\frac{\lambda-1}{\tilde\lambda}
\frac{(\wstar^\top \Pperp \bv)^2}{\|\wstar\|^2},
\]
which by the almost sure limits of $\|\Pperp \wstar\|^2$, $\tilde\lambda$, and $\wstar^\top \Pperp \bv$, converges almost surely to
\[
\bar\sigma^2_{\mathrm{eff}} = \bar p_{\perp, \wstar, \wstar} + \frac{\lambda-1}{\bar\lambda}\bar p_{\perp,\wstar,\bv}^2.
\] 
In particular, there exists a deterministic constant $C_1$ such that for all sufficiently large $p$,
\[
\mathbf d^\top \bar\bSigma \mathbf d \le C_1 \quad \text{almost surely.}
\]
It therefore remains to control $\|\mathbf c\|^2$. We have
\[
\|\mathbf c\|^2 \le \frac{1}{\|\wstar\|^2} \frac{\|\wstar - \Pi\wstar - A\Pi\bv\|^2}{\sigma_{\min}(\X_m)^2}.
\]
By Step~2, the numerator is bounded almost surely for all sufficiently large $p$. For the denominator, writing \( \X_m = \mathbf G \bSigma_{mm}^{1/2} \) (see Lemma~\ref{lem:esd_projected_x}) gives
\[
\sigma_{\min}(\X_m) \ge \sigma_{\min}(\mathbf G) \sigma_{\min}(\bSigma_{mm}^{1/2}).
\]
Note that $\sigma_{\min}(\bSigma_{mm}^{1/2}) = \min \{1, \sqrt{\tilde \lambda}\}$ for any $p$. By the singular value inequality for Gaussian random matrices (see e.g. Exercise 7.13 in \cite{Vershynin_2026}), for all $t>0$,
\[
\mathbb P\!\left( \sigma_{\min}(\mathbf G) \le |\sqrt{n_l}-\sqrt m|-t \right) \le 2e^{-t^2/2}.
\]
Since $\gamma_{\mathrm{eff}} = m/n_l \neq 1$, we have
\[
|\sqrt{n_l}-\sqrt m| = \sqrt{p}\frac{|1-\sqrt{\gamma_{\mathrm{eff}}}|}{\sqrt{\gamma_l}} = \sqrt{p}  \,c_\gamma.
\]
Choosing $t = \tfrac12 c_\gamma \sqrt p$ and setting $c=c_\gamma^2/8$, we obtain
\[
\mathbb P\!\left( \sigma_{\min}(\mathbf G)
\le \frac12 c_\gamma \sqrt p \right)
\le
2\exp\!\left(-\frac{c_\gamma^2}{8}p\right).
\]
Noting that $\tilde \lambda \le \lambda$, and setting \( a_\gamma := \frac14 c_\gamma^2 \min\{1,\lambda\} \), \(b_\gamma := \frac18 c_\gamma^2\),
we have
\[
\mathbb P\!\left(
\sigma_{\min}(\X_m)^2
\le
a_\gamma p
\right)
\le
2e^{-b_\gamma p}.
\]
Therefore, there exists $C_2>0$ such that
\[
\mathbb P\!\left( \|\mathbf c\|^2 \le \frac{C_2}{p} \right) \ge 1-2e^{-b_\gamma p}.
\]
Define the event
\[
\mathcal E_p := \left\{ \|\mathbf c\|^2 \le \frac{C_2}{p} \right\}.
\]
Then
\[
\mathbb P(\mathcal E_p^c) \le 2e^{-b_\gamma p}.
\]

For any $\delta>0$,
\[
\mathbb P(|T|\ge \delta) = \mathbb P(|T|\ge \delta, \mathcal E_p^c) + \mathbb P(|T|\ge \delta,\mathcal E_p) \le \mathbb P( \mathcal E_p^c) + \mathbb P(|T|\ge \delta,\mathcal E_p).
\]
Noting that there exists $C_1$ s.t. $\mathbf d^\top  \bar \bSigma \mathbf d \le C_1$ for any large $p$, on $\mathcal E_p$,
\[
\mathrm{Var}(T \mid \X_m,\U_m) \le \frac{C_1 C_2}{p}.
\]
Thus, on $\mathcal E_p$, we have
\[
\mathbb P\!\left( |T|\ge \delta \mid \X_m,\U_m \right) \le 2\exp(-c_\delta p),
\]
and since this holds for any $\Xm, \U_m$, by the law of total probability, we have
\[
\mathbb P(|T|\ge \delta,\mathcal E_p) \le 2e^{-c_\delta p}.
\]

Therefore,
\[
\mathbb P(|T|\ge \delta) \le
2e^{-b_\gamma p} + 2e^{-c_\delta p} = C e^{-c p}.
\]
Since $\sum_p  C e^{-c p} < \infty$, the Borel--Cantelli lemma implies
\[
T \xrightarrow{a.s.} 0.
\]

\paragraph{Step 4: Average of the trace terms.}
The variance term contains
\[
\mathrm {Tr} [ (\Pm \X_l^\top \X_l \Pm)^\dagger] = \Tr[(\X_m^\top \X_m)^\dagger].
\]
We note that the term $\mathrm{Tr}[(\Xm \Xm^\top)^\dagger]$ is equivalent to the above, since the non-zero eigenvalues of $\Xm^\top \Xm$ and $\Xm \Xm^\top$ are the same. By Lemma~\ref{lem:trace_pinv_det}, the trace term has a limiting value $\frac{\min\{ 1, \gamma_\mathrm{eff}\}}{|\gamma_\mathrm{eff} - 1|}$ almost surely. 

\paragraph{Step 5: Quadratic term in $\E$.}

Define
\[
Q
:=
\wstar^{\top} \U_\perp \E^\top
(\X_m\X_m^\top)^\dagger
\E \U_\perp^\top \wstar .
\]
Let
\[
\ba := \U_\perp^\top \wstar,
\qquad
\mathbf A := (\X_m\X_m^\top)^\dagger .
\]
Conditioned on $(\X_m,\U_m)$, the vector $\ba$ and the matrix $\mathbf A$
are fixed and independent of $\E$. Since the rows of $\E$ are i.i.d.\
Gaussian with covariance $\bar\bSigma$, the vector
\[
\mathbf c := \E \ba
\]
is Gaussian with
\[
\mathbf c \mid (\X_m,\U_m)
\sim
\mathcal N\!\left(
0,\,
(\ba^\top \bar\bSigma \ba)\mathbf I_{n_l}
\right).
\]
Therefore
\[
Q
=
\mathbf c^\top \mathbf A \mathbf c .
\]

First, the conditional expectation is
\[
\mathbb E_\E[Q\mid \X_m,\U_m]
=
(\ba^\top \bar\bSigma \ba)\Tr(\mathbf A).
\]
Here
\[
\ba^\top \bar\bSigma \ba
=
\wstar^\top \U_\perp \bar\bSigma \U_\perp^\top \wstar
=
\|\wstar\|^2 \sigma_{\mathrm{eff}}^2 .
\]
By the deterministic equivalents used above,
\[
\|\wstar\|^2\sigma_{\mathrm{eff}}^2
\xrightarrow{a.s.} \bar w^\ast
\bar\sigma_{\mathrm{eff}}^2 .
\]
Moreover, by the pseudo-inverse trace computation in Step~4,
\[
\Tr\!\left[(\X_m\X_m^\top)^\dagger\right]
\xrightarrow{a.s.}
\frac{\min\{1,\gamma_{\mathrm{eff}}\}}
{|\gamma_{\mathrm{eff}}-1|}.
\]
Thus
\[
\mathbb E_\E[Q\mid \X_m,\U_m]
\xrightarrow{a.s.}
\bar w^\ast
\bar\sigma_{\mathrm{eff}}^2
\frac{\min\{1,\gamma_{\mathrm{eff}}\}}
{|\gamma_{\mathrm{eff}}-1|}.
\]

It remains to show concentration around this conditional expectation. Write
\[
\mathbf c
=
\sqrt{\ba^\top \bar\bSigma \ba}\;\mathbf z,
\qquad
\mathbf z\sim \mathcal N(0,\mathbf I_{n_l}),
\]
conditionally on $(\X_m,\U_m)$. Then
\[
Q-\mathbb E_\E[Q\mid \X_m,\U_m]
=
(\ba^\top \bar\bSigma \ba)
\left(
\mathbf z^\top \mathbf A \mathbf z-\Tr(\mathbf A)
\right).
\]
By the Hanson--Wright inequality, for every fixed $\delta>0$,
\[
\mathbb P_\E\!\left(
\left|Q-\mathbb E_\E[Q\mid \X_m,\U_m]\right|
\ge \delta
\mid \X_m,\U_m
\right)
\le
2\exp\!\left[
-c\min\left(
\frac{\delta^2}
{(\ba^\top\bar\bSigma\ba)^2\|\mathbf A\|_F^2},
\frac{\delta}
{(\ba^\top\bar\bSigma\ba)\|\mathbf A\|}
\right)
\right].
\]
We now bound the quantities appearing in this inequality. As above,
\[
\ba^\top\bar\bSigma\ba
=
\|\wstar\|^2\sigma_{\mathrm{eff}}^2
\]
is bounded for all sufficiently large $p$ almost surely. Furthermore, since
$\gamma_{\mathrm{eff}}\neq 1$, the nonzero spectrum of \( \frac{1}{n_l}\X_m^\top \X_m \) is bounded away from zero almost surely. Equivalently, the nonzero singular values of $\X_m$ are of order $\sqrt p$ (see argument in Step 3). Hence
\[
\|\mathbf A\|
=
\|(\X_m\X_m^\top)^\dagger\|
=
O\!\left(\frac1p\right),
\]
and
\[
\|\mathbf A\|_F^2
=
\sum_{i:\sigma_i(\X_m)>0}
\frac{1}{\sigma_i(\X_m)^4}
=
O\!\left(\frac1p\right).
\]
Consequently, for all sufficiently large $p$,
\[
\mathbb P_\E\!\left(
\left|Q-\mathbb E_\E[Q\mid \X_m,\U_m]\right|
\ge \delta
\mid \X_m,\U_m
\right)
\le
2e^{-c_\delta p}.
\]
The right-hand side is summable in $p$, so Borel--Cantelli gives
\[
Q-\mathbb E_\E[Q\mid \X_m,\U_m]
\xrightarrow{a.s.} 0.
\]
Combining this concentration statement with the limit of the conditional
expectation yields
\[
Q
\xrightarrow{a.s.}
\bar w^\ast
\bar\sigma_{\mathrm{eff}}^2
\frac{\min\{1,\gamma_{\mathrm{eff}}\}}
{|\gamma_{\mathrm{eff}}-1|}.
\]
\paragraph{Combining.}
Summing the contributions from Steps 1--5 and applying the continuity mapping theorem concludes the proof.

\end{proof}

\subsubsection{Proof for the  generalisation error}
\label{app:proof_generalisation_error}

\begin{proof}[Proof of Theorem~\ref{thm:generalisation_error}]
We decompose the error and take averages over $\E$, $\Xm$, $\U_m$.
\paragraph{Step 1: Decomposition of the generalisation error.}
We start from the definition of the generalisation error
\begin{equation}
E^\mathrm{gen} = \mathbb{E}_{\x,y,\xi}\!\left[\|y - \x^\T \hat{\w}\|^2 \mid \w^\star, \X_l, \X_u\right].
\end{equation}
Using the data model $y = \x^\T \w^\star + \xi$ and independence of the test noise, this becomes
\begin{equation}
\begin{aligned}
E^\mathrm{gen}
&= \mathbb{E}_{\x,\xi}\!\left[(\w^\star - \hat{\w})^\T \x \x^\T (\w^\star - \hat{\w}) \mid \w^\star, \X_l, \X_u\right] + \sigma^2 \\
&= \mathbb{E}_{\xi}\!\left[(\w^\star - \hat{\w})^\T \bSigma (\w^\star - \hat{\w}) \right] + \sigma^2.
\end{aligned}
\end{equation}
Using the spiked covariance model $\bSigma_\mathrm{new}= \mathbf I_p + \sum_i (\nu_i - 1)\bv_{\mathrm{new}, i} \bv_{\mathrm{new}, i}^\T$, we obtain
\begin{equation}
E^\mathrm{gen}
= E^\mathrm{est} + \sum_i (\nu_i - 1)\,\mathbb{E}_\xi\!\left[\left((\w^\star - \hat{\w})^\T \bv_{\mathrm{new},i}\right)^2\right] + \sigma^2.
\end{equation}
Substituting $\hat{\w} = (\X_l \Pm)^\dagger \X_l \w^\star + (\X_l \Pm)^\dagger \xi$ and expanding the quadratic form, the cross term vanishes in $\xi$ expectation and we obtain a bias contribution
\begin{equation}
\left((\wstar  - (\X_l \Pm)^\dagger \X_l \wstar)^\T \bv_{\mathrm{new}, i}\right)^2
\end{equation}
and a variance contribution proportional to
\begin{equation}\label{eq:vart}
\bv_{\mathrm{new}, i}^\T (\Pm \X_l^\T \X_l \Pm)^\dagger \bv_{\mathrm{new}, i}.
\end{equation}
Using the same decomposition and conditional Gaussian argument as in Equation~\eqref{eq:noiseless_what_decomp}, $(\X_l\Pm)^\dagger \X_l \wstar = \Pi \wstar + A \Pi \bv + (\X_l\Pm)^\dagger \E \wstar_{\perp}$, we further decompose the bias 
\begin{equation}
(\wstar^\T  - (\X_l \Pm)^\dagger \X_l \wstar)^\T \bv_{\mathrm{new}, i}
=
\w^\star{}^\T \bv_{\mathrm{new}, i}
-
\bv_{\mathrm{new}, i}^\top \Pi \wstar -  A \bv_{\mathrm{new}, i}^\top \Pi \bv -  \bv_{\mathrm{new}, i}^\top(\X_l\Pm)^\dagger \E \wstar_{\perp},
\end{equation}
with the squared term being:

\begin{equation*}
\begin{aligned}
    &\underbrace{(\wstar^\T \bv_{\mathrm{new}, i} -
\bv_{\mathrm{new}, i}^\top \Pi \wstar -  \frac{\lambda-1}{\tilde\lambda} (\wstar^\top \Pperp \bv) (\bv_{\mathrm{new}, i}^\top \Pi \bv))^2}_{\E \text{--independent}} + \underbrace{(\bv_{\mathrm{new}, i}^\top(\X_l\Pm)^\dagger \E \wstar_{\perp})^2 }_{\text{quadratic in } \E}\\
& -2\underbrace{(\wstar^\T \bv_{\mathrm{new}, i} -
\bv_{\mathrm{new}, i}^\top \Pi \wstar -  \frac{\lambda-1}{\tilde\lambda} (\wstar^\top \Pperp \bv) (\bv_{\mathrm{new}, i}^\top \Pi \bv)) (\bv_{\mathrm{new}, i}^\top(\X_l\Pm)^\dagger \E \wstar_{\perp}) }_{\text{cross term}}.
\end{aligned}
\end{equation*}

\paragraph{Step 2: $\E$-independent term.}
The term linear in $\E$ is made up of terms that have almost sure limits. In particular:
\begin{align*}
    \bv_{\mathrm{new}, i}^\top \Pi \wstar \xrightarrow{a.s.} \sqrt{\bar w^\ast} \bar \pi_{\wstar, \bv_{\mathrm{new}, i}} \qquad &\text{by Lemma~\ref{lem:projection_effective_bilinear}} \\
    \bv_{\mathrm{new}, i}^\top \Pi \bv \xrightarrow{a.s.}  \bar \pi_{\bv, \bv_{\mathrm{new}, i}} \qquad &\text{by Lemma~\ref{lem:projection_effective_bilinear}} \\
    \wstar^\top \Pperp \bv \xrightarrow{a.s.} \sqrt{\bar w^\ast}\bar p_{\perp, \wstar, \bv} \qquad &\text{by Lemma~\ref{lem:bilinear_projection}} \\
     \tilde \lambda \xrightarrow{a.s.} \bar \lambda  \qquad &\text{by Lemma~\ref{lem:pperpv}} \\
     \wstar^\top \bv_{\mathrm{new}, i} \xrightarrow{a.s.} \sqrt{\bar w^\ast} \rho_{\wstar, \bv_{\mathrm{new}, i}}  \qquad &\text{by model assumptions}
\end{align*}
Thus, by the continuous mapping theorem, we have the almost sure convergence of the entire $\E$-independent term to 
\[
\bar w^\ast \left(\rho_{\wstar, \bv_{\mathrm{new}, i}} - \bar \pi_{\wstar, \bv_{\mathrm{new},i}} - \frac{\lambda-1}{\bar \lambda}\bar p_{\perp, \wstar, \bv} \bar \pi_{\bv, \bv_{\mathrm{new},i}}  \right)^2.
\]

\paragraph{Step 3: Cross term linear in $\E$.}

The linear $\E$ term has the form
\[
T = \mathbf c^\top \E \mathbf d,
\]
where
\[
\mathbf c = C\,(\X_l\Pm)^{\dagger\top}\bv_{\mathrm{new}, i},
\qquad
\mathbf d = \wstar_\perp,
\]
and $C$ is the deterministic prefactor defined in Step~2, which converges almost surely to an $O(1)$ limit.

This term has the same structure as the term linear in $\E$ in
Theorem~\ref{thm:estimation_error}, Step~3, with the vector
\[
\wstar - \Pi\wstar - A\Pi\bv
\]
replaced by the unit vector $\bv_{\mathrm{new}, i}$ and multiplied by the scalar $C = O(1)$.
The same arguments apply and we conclude that
\[
T \xrightarrow{a.s.} 0.
\]
\paragraph{Step 4: Quadratic term in $\E$.}

Define
\[
Q
:=
\wstar_\perp^\top \E^\top
(\X_l\Pm)^{\dagger\top}
\bv_{\mathrm{new}, i} \bv_{\mathrm{new}, i}^\top
(\X_l\Pm)^\dagger
\E \wstar_{\perp}.
\]
Let
\[
\ba := \wstar_\perp,
\qquad
\A
:=
(\X_l\Pm)^{\dagger\top}
\bv_{\mathrm{new}, i} \bv_{\mathrm{new}, i}^\top
(\X_l\Pm)^\dagger.
\]
Conditioned on $(\X_m,\U_m)$, the vector $\ba$ and matrix $\A$ are independent of $\E$, so $Q = \ba^\top \E^\top \A \E \ba$ is a quadratic Gaussian form.

This is exactly the same object as in Theorem~\ref{thm:estimation_error}, Step~5, with a different choice of $\A$. The concentration argument therefore carries over verbatim once we verify that the same norm bounds hold.

First, as in Theorem~\ref{thm:estimation_error}, Step~3,
\[
\ba^\top \bar\bSigma \ba = O(1)
\qquad \text{almost surely.}
\]

Next, since $\A$ is rank one,
\[
\|\A\|
=
\|(\X_l\Pm)^\dagger \bv_{\mathrm{new}, i}\|^2,
\qquad
\|\A\|_F^2
=
\|\A\|^2.
\]
Using the bounds on $\|(\X_l\Pm)^\dagger\|_{\mathrm{op}}$ established in Theorem~\ref{thm:estimation_error}, Step~3,
together with $\|\bv_{\mathrm{new}, i}\| = 1$, we obtain
\[
\|\A\| = O\!\left(\frac{1}{p}\right),
\qquad
\|\A\|_F^2 = O\!\left(\frac{1}{p^2}\right),
\qquad \text{almost surely.}
\]
The conditions required for the Hanson--Wright concentration used in Theorem~\ref{thm:estimation_error}, Step~5 are satisfied. Therefore, by the same argument as in Theorem~\ref{thm:estimation_error}, Step~5,
\[
Q - \mathbb E_\E[Q \mid \X_m,\U_m]
\xrightarrow{a.s.} 0.
\]

Finally, the conditional expectation is
\[
\mathbb E_\E[Q \mid \X_m,\U_m] = (\ba^\top \bar\bSigma \ba)\Tr(\A) = O\left(\frac{1}{p}\right) \xrightarrow{a.s.} 0
\]

\paragraph{Step 5: Variance contribution.}
Finally, we consider the term from Equation \eqref{eq:vart} and note it is the same as the norm of the matrix $\mathbf A$ from Step 4:
\[
\bv_{\mathrm{new}, i}^\T (\Pm \X_l^\T \X_l \Pm)^\dagger \bv_{\mathrm{new}, i} = \|(\X_l\Pm)^\dagger \bv_{\mathrm{new}, i}\|^2 = \| \mathbf A\| = O \left( \frac{1}{p}\right).
\]
Thus, this term converges to 0:
\[
\bv_{\mathrm{new}, i}^\T (\Pm \X_l^\T \X_l \Pm)^\dagger \bv_{\mathrm{new}, i} \xrightarrow{a.s.} 0.
\]
\paragraph{Combining.}
Thus, in the high-dimensional limit the only additional contribution relative to the estimation error is the spike bias term, and the generalisation error converges to
\begin{equation*}
E^\mathrm{gen}_\infty
=
E^\mathrm{est}_\infty
+
\bar w^\ast \sum_i (\nu_i - 1)
\left( \rho_{\wstar, \bv_{\mathrm{new}, i}} - \bar \pi_{\wstar, \bv_{\mathrm{new},i}} - \frac{\lambda-1}{\bar \lambda}\bar p_{\perp, \wstar, \bv} \bar \pi_{\bv, \bv_{\mathrm{new},i}}  \right)^2
+ \sigma^2.
\end{equation*}
\end{proof}


\subsubsection{Proof for the training error}
\label{app:proof_training_error}
\begin{proof}[Proof of Theorem~\ref{thm:training_error}]
Let
\[
\mathbf{H} = (\X_l \Pm)(\X_l \Pm)^\dagger
\]
and recall that the target vector is $\y_d = \X_l \wstar + \bxi$. Then, the residual vector is:
\begin{equation}
\begin{aligned}
    \y_d - \X_l \what &= \y_d - (\X_l \Pm) (\X_l \Pm)^\dagger \y_d - (\X_l \Pperp) (\X_l \Pm)^\dagger \y_d\\
&= (\mathbf{I}_{n_l} - \mathbf{H}) \y_d \\
&= (\mathbf{I}_{n_l} - \mathbf{H})(\X_l \wstar + \bxi),
\end{aligned}
\end{equation}
where we note that the term multiplying two projections is 0:
\[
(\X_l \Pperp) (\X_l \Pm)^\dagger = \X_l \U_\perp \underbrace{\U_\perp^\top \U_m}_0 (\X_l \U_m)^\dagger  = 0.
\]
The training error splits into two terms since the noise $\bxi$ is independent and zero-mean:
\begin{equation}
\begin{aligned}
E^{\mathrm{train}}
&= \frac{1}{n_l} \mathbb{E}_{\bxi} \left[
\left\| (\mathbf{I}_{n_l} - \mathbf{H}) \X_l \wstar + (\mathbf{I}_{n_l} - \mathbf{H}) \bxi \right\|^2
\right] \\
&= \underbrace{\frac{1}{n_l} \left\| (\mathbf{I}_{n_l} - \mathbf{H}) \X_l \wstar \right\|^2}_{\text{Bias}^2}
+ \underbrace{\frac{1}{n_l} \mathbb{E}_{\bxi} \left[
\left\| (\mathbf{I}_{n_l} - \mathbf{H}) \bxi \right\|^2
\right]}_{\text{Variance}}.
\end{aligned}
\end{equation}
We start by computing the bias. 
Using the decomposition $\X_l = \X_l \Pm + \X_l \Pperp$ and the fact that $(\mathbf{I} - \mathbf{H}) \X_l \Pm = 0$, the bias simplifies to:
\begin{equation}
\text{Bias}^2
= \frac{1}{n_l} \left\| (\mathbf{I}_{n_l} - \mathbf{H}) \X_l \Pperp \wstar \right\|^2.
\end{equation}

Substituting the conditional Gaussian decomposition, we have
\[
\X_l \Pperp
= \Xm \bSigma_{mm}^{-1} \bSigma_{m\perp} \U_\perp^\top + \E \U_\perp^\top,
\]
where $\E$ is independent of $\Xm$. Thus, we obtain
\begin{equation}
\begin{aligned}
(\mathbf{I}_{n_l} - \mathbf{H}) \X_l \Pperp \wstar
&= (\mathbf{I}_{n_l} - \mathbf{H}) \Xm \bSigma_{mm}^{-1} \bSigma_{m\perp} \U_\perp^\top \wstar
+ (\mathbf{I}_{n_l} - \mathbf{H}) \E \U_\perp^\top \wstar \\
&= (\mathbf{I}_{n_l} - \mathbf{H}) \E \wstar_\perp,
\end{aligned}
\end{equation}
where we used $(\mathbf{I}_{n_l} - \mathbf{H}) \Xm = 0$ and defined $\wstar_\perp = \U_\perp^\top \wstar$. Note that the entries of $\E \wstar_\perp$ are i.i.d.\ $\mathcal{N}(0, \|\wstar\|^2 \sigma_{\mathrm{eff}}^2)$, with 
\begin{equation}
\sigma_{\mathrm{eff}}^2
= \frac{\|\Pperp \wstar\|^2}{\|\wstar\|^2}
+ \frac{\lambda - 1}{\lamtilde} \frac{(\bv^\top \Pperp \wstar)^2}{\|\wstar\|^2}.
\end{equation}

Conditioned on $\Xm$, $\mathbf{H}$ is deterministic, and the term
\(\|(\mathbf{I}_{n_l} - \mathbf{H}) \E \wstar_\perp\|^2\)
is a quadratic form $z^\top (\mathbf{I}_{n_l} - \mathbf{H}) z$ where
$z \sim \mathcal{N}(0, \|\wstar\|^2\sigma_{\mathrm{eff}}^2 \mathbf{I}_{n_l})$. Hence, we have
\begin{equation}
\mathbb{E}_{\E} \left[ \text{Bias}^2 \mid \Xm \right]
= \|\wstar\|^2\frac{\sigma_{\mathrm{eff}}^2}{n_l} \operatorname{Tr}(\mathbf{I}_{n_l} - \mathbf{H})
= \|\wstar\|^2\sigma_{\mathrm{eff}}^2 \left( 1 - \frac{\min(m, n_l)}{n_l} \right),
\end{equation}
where the last equality holds almost surely (w.r.t.\ the randomness in $\X_m$). 

Next, to compute the variance, recall that the label noise $\bxi$ is isotropic $\mathcal{N}(0, \sigma^2 \mathbf{I}_{n_l})$. Thus, we have
\begin{equation}
\frac{1}{n_l} \mathbb{E}_{\bxi} \left[
\bxi^\top (\mathbf{I}_{n_l} - \mathbf{H}) \bxi
\right]
= \frac{\sigma^2}{n_l} \operatorname{Tr}(\mathbf{I}_{n_l} - \mathbf{H})
= \sigma^2 \left( 1 - \frac{\min(m, n_l)}{n_l} \right),
\end{equation}
where the last equality again holds almost surely (w.r.t.\ the randomness in $\X_m$).
Note that $$1 - \frac{\min(m, n_l)}{n_l}=\max(0, 1 - \gamma_{\mathrm{eff}}).$$ To conclude, it suffices to recall from the proof of Theorem \ref{thm:estimation_error} that $\sigma_{\mathrm{eff}}^2 \xrightarrow{\mathrm{a.s.}} \bar{\sigma}_{\mathrm{eff}}^2$, and the argument is complete.
\end{proof}

\subsubsection{Auxiliary lemmas}\label{app:projections}


\begin{lemma}[Limiting spectral distribution of $\Xm$]\label{lem:esd_projected_x}
Consider the setup of Section~\ref{sec:setup}, and  define
\[
\X_{m}:=\X_l\U_{m}.
\]
Then the empirical spectral measure of \(\frac1{n_l}\X_{m}^\top \X_{m}\) converges weakly almost surely to the Marchenko--Pastur law with aspect ratio $\gamma_{\mathrm{eff}}$, i.e.,
\[
\hat F^{\frac1{n_l}\X_{m}^\top \X_{m}}
\Rightarrow
F_{\gamma_{\mathrm{eff}}}
\qquad\text{almost surely.}
\]
\end{lemma}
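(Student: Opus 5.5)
Conditionally on $\U_m$ (which is independent of $\X_l$, being built from $\X_u$ and possibly auxiliary random vectors), the rows of $\X_m=\X_l\U_m$ are i.i.d.\ $\mathcal N(0,\bSigma_{mm})$ with $\bSigma_{mm}=\I_m+(\lamtilde-1)\vtilde\vtilde^\top$, as computed in the conditional Gaussian decomposition of Section~\ref{sec:gaussian_decomposition}. We can therefore write $\X_m=\mathbf G\bSigma_{mm}^{1/2}$ with $\mathbf G\in\mathbb R^{n_l\times m}$ having i.i.d.\ $\mathcal N(0,1)$ entries. Conditioning on $\U_m$ does not change the law of $\mathbf G$, so $\mathbf G$ is independent of $\U_m$.

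The plan then has two steps.

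\textbf{Step 1: the white part.} We first apply the classical Marchenko--Pastur theorem \cite{marcenko_distribution_1967,bai_spectral_2010} to $\frac1{n_l}\mathbf G^\top\mathbf G$. Since $m/n_l\to\gamma_{\mathrm{eff}}$, this gives almost sure weak convergence of its empirical spectral distribution to $F_{\gamma_{\mathrm{eff}}}$. When $\alpha=0$ the statement is degenerate, and I would interpret the limit as $\delta_1$, consistent with $F_0$.

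\textbf{Step 2: the finite-rank perturbation.} Next we pass from $\frac1{n_l}\mathbf G^\top\mathbf G$ to
\[
\tfrac1{n_l}\X_m^\top\X_m=\bSigma_{mm}^{1/2}\tfrac1{n_l}\mathbf G^\top\mathbf G\,\bSigma_{mm}^{1/2}.
\]
Note that $\bSigma_{mm}^{1/2}=\I_m+(\sqrt{\lamtilde}-1)\vtilde\vtilde^\top$ is the identity plus a rank-one matrix. Expanding the product, $\frac1{n_l}\X_m^\top\X_m-\frac1{n_l}\mathbf G^\top\mathbf G$ has rank at most $2$. By the rank inequality for empirical spectral distributions (Bai--Silverstein, Theorem A.43),
\[
\bigl\|\hat F^{\frac1{n_l}\X_m^\top\X_m}-\hat F^{\frac1{n_l}\mathbf G^\top\mathbf G}\bigr\|_\infty\le \frac{2}{m}\to0 .
\]
This gives the claim whenever $m\to\infty$. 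The bound holds deterministically for every realization of $\U_m$, so it does not matter that $\vtilde$ and $\lamtilde$ are random and depend on $\X_u$. Combining this with the almost sure statement for $\mathbf G$ from Step 1 yields almost sure convergence unconditionally.

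\textbf{Main obstacle.} The only delicate point is the joint randomness: $\bSigma_{mm}$ is random, and $\U_m$ is data-dependent through $\X_u$, so we must make sure almost sure convergence holds on the joint probability space. The independence of $\mathbf G$ from $\U_m$ settles this, because the rank bound is uniform over $\U_m$. The completion of $\U_m$ by random orthonormal vectors when $m>n_u$ plays no role, since only orthonormality of $\U_m$ is used. One should also verify that $\lamtilde\le\lambda$ stays bounded, although the rank inequality does not even require this.
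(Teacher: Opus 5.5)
Your proposal is correct and follows the paper's route: whiten via $\mathbf G=\X_m\bSigma_{mm}^{-1/2}$, apply Marchenko--Pastur to $\frac{1}{n_l}\mathbf G^\top\mathbf G$, and transfer the limit through the rank-$\le 2$ difference and Bai--Silverstein's rank inequality. The paper conditions on the whole sequence $(\U_m)_{p\ge 1}$ and then removes the conditioning by total probability, whereas you rely on the bound being deterministic; that difference is cosmetic, and your caveat that $2/m\to 0$ needs $m\to\infty$ is a real point the paper glosses over (for fixed $m=1$ the limit is $\delta_{1+(\lambda-1)c}$, not $F_0=\delta_1$).
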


\begin{proof}
Fix a realization of the full sequence $(\U_{m})_{p\ge 1}$. Conditional on this realization, the rows of \(
\X_{m}=\X_l\U_{m}
\) are i.i.d.\ Gaussian in $\mathbb R^{m}$ with covariance
\[
\bSigma_{mm}:=\U_{m}^\top \bSigma \U_{m}.
\]
Using the form of $\bSigma$ from Assumption \ref{ass:spike_cov}, we obtain
\[
\bSigma_{mm}
=
\U_{m}^\top\Bigl(\mathbf I_{p}+(\lambda-1)\bv\bv^\top\Bigr)\U_{m}
=
\mathbf I_{m}+(\lambda-1)(\U_{m}^\top\bv)(\U_{m}^\top\bv)^\top.
\]
Hence, $\bSigma_{mm}-\I_{m}$ has rank at most one. Since $\bSigma_{mm} \succ 0$, define $\mathbf G\in\mathbb R^{n_l\times m}$ and $\mathbf W\in\mathbb R^{m \times m}$ by
\[
\mathbf G := \X_{m} \bSigma_{mm}^{-1/2} \qquad \mathbf W:=\frac1{n_l}\mathbf G^\top\mathbf G.
\]
Conditional on $\U_{m}$, $\mathbf G$ has i.i.d.\ $\mathcal N(0,1)$ entries. The sample covariance can be rewritten:
\[
\frac1{n_l}\X_{m}^\top \X_{m}
=
\bSigma_{mm}^{1/2}
\left(\frac1{n_l}\mathbf G^\top\mathbf G\right)
\bSigma_{mm}^{1/2} = \bSigma_{mm}^{1/2}
\mathbf W
\bSigma_{mm}^{1/2}.
\]
Since $\bSigma_{mm}=\I_{m}+\mathbf R$ with $\operatorname{rank}(\mathbf R)\le 1$, the spectral decomposition shows that $\bSigma_{mm}^{1/2}-\I_{m}$ also has rank at most one. Consequently,
\[
\bSigma_{mm}^{1/2}\mathbf W\bSigma_{mm}^{1/2}-\mathbf W
=
(\bSigma_{mm}^{1/2}-\I_{m})\mathbf W\bSigma_{mm}^{1/2}
+
\mathbf W(\bSigma_{mm}^{1/2}-\I_{m}),
\]
and thus
\[
\operatorname{rank}\!\left(
\bSigma_{mm}^{1/2}\mathbf W\bSigma_{mm}^{1/2}-\mathbf W
\right)\le 2.
\]
Let $\hat F^\A$ denote the empirical spectral distribution function of a real symmetric matrix $\A$. By the rank inequality (e.g., see Theorem A.43 in \citep{bai_spectral_2010}),
\[
\sup_{\theta\in\mathbb R}
\left|
\hat F^{\frac1{n_l}\X_{m}^\top\X_{m}}(\theta)-\hat F^{\mathbf W}(\theta)
\right|
\le \frac{2}{m} \underset{p\to \infty}{\to} 0.
\]
Since $\mathbf W$ is a standard Wishart matrix of size $m\times m$ with aspect ratio  \(\frac{m}{n_l}\to \gamma_{\mathrm{eff}}\), by the Marchenko--Pastur theorem \citep{marcenko_distribution_1967},
\[
\hat F^{\mathbf W}\Rightarrow F_{\gamma_{\mathrm{eff}}}
\qquad\text{almost surely, conditional on } (\U_{m})_{p\ge 1}.
\]
 Let \(
A:=
\left\{
F^{\frac1{n_l}\X_{m}^\top\X_{m}}
\Rightarrow
F_{\gamma_{\mathrm{eff}}}
\right\}.
\) Taken together, the arguments above shows that:
\[
\mathbb P\!\left(A\,\middle|\,(\U_{m})_{p\ge 1}\right)=1 .
\]
We now remove the conditioning on $(\U_{m})_{p\ge 1}$. By the law of total probability,
\[
\mathbb P(A)
=
\mathbb E\!\left[
\mathbb P\!\left(A\,\middle|\,(\U_{m})_{p\ge 1}\right)
\right]
=
\mathbb E[1]
=1.
\]
This proves that
\[
\hat F^{\frac1{n_l}\X_{m}^\top\X_{m}}
\Rightarrow
F_{\gamma_{\mathrm{eff}}}
\qquad\text{almost surely.}
\]
\end{proof}


\begin{lemma}[Deterministic equivalent for the pseudoinverse trace]
\label{lem:trace_pinv_det}
Consider the setup of Section~\ref{sec:setup}, and define
\[
\X_{m}:=\X_l\U_{m}.
\]
Then
\[
\mathrm{Tr}\!\big[(\X_{m}^\top\X_{m})^\dagger\big]
\xrightarrow{a.s.}
\begin{cases}
\displaystyle \frac{\gamma_{\mathrm{eff}}}{1-\gamma_{\mathrm{eff}}}, & \gamma_{\mathrm{eff}}<1,\\[8pt]
\displaystyle \frac{1}{\gamma_{\mathrm{eff}}-1}, & \gamma_{\mathrm{eff}}>1.
\end{cases}
\]
\end{lemma}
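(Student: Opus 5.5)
Conditionally on $\U_m$, the rows of $\X_m$ are i.i.d.\ $\mathcal N(0,\bSigma_{mm})$ with $\bSigma_{mm}=\I_m+(\lamtilde-1)\tilde\bv\tilde\bv^\top$. As in Lemma~\ref{lem:esd_projected_x}, we write $\X_m=\mathbf G\bSigma_{mm}^{1/2}$, where $\mathbf G$ has i.i.d.\ standard Gaussian entries. The plan has three steps. First, compute the trace for the whitened matrix $\mathbf G$ from the Marchenko--Pastur law. Second, show that the rank-one factor $\bSigma_{mm}^{1/2}$ changes the trace only by $O(1/p)$. Third, remove the conditioning on $\U_m$ exactly as at the end of the proof of Lemma~\ref{lem:esd_projected_x}.

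\textbf{Step 1 (white case).} Let $\mathbf W=\frac1{n_l}\mathbf G^\top\mathbf G$.
\begin{itemize}
\item If $\gamma_{\mathrm{eff}}<1$, then $\mathrm{Tr}[(\mathbf G^\top\mathbf G)^{\dagger}]=\frac{m}{n_l}\cdot\frac1m\mathrm{Tr}[\mathbf W^{-1}]$.
\item If $\gamma_{\mathrm{eff}}>1$, use the nonzero spectrum of $\mathbf G\mathbf G^\top$ instead: $\mathrm{Tr}[(\mathbf G\mathbf G^\top)^{-1}]=\frac{n_l}{m}\cdot\frac1{n_l}\mathrm{Tr}[(\frac1m\mathbf G\mathbf G^\top)^{-1}]$, a Wishart matrix with aspect ratio $1/\gamma_{\mathrm{eff}}<1$.
\end{itemize}
In both cases the normalized trace converges to $\int\theta^{-1}dF_\gamma(\theta)=\frac1{1-\gamma}$ for $\gamma<1$ (the $\theta^{-1}$-moment of the MP law). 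This gives the limits $\gamma_{\mathrm{eff}}/(1-\gamma_{\mathrm{eff}})$ and $\gamma_{\mathrm{eff}}^{-1}\cdot\frac{1}{1-1/\gamma_{\mathrm{eff}}}=\frac{1}{\gamma_{\mathrm{eff}}-1}$, respectively.

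Weak convergence alone does not justify the limit, because $\theta\mapsto\theta^{-1}$ is unbounded near $0$. We handle this with the almost-sure smallest-eigenvalue bound: eventually $\lambda_{\min}\ge(1-\sqrt{\gamma})^2/2$, by Bai--Yin, or via the Gaussian singular value tail bound used in Step~3 of the proof of Theorem~\ref{thm:estimation_error} together with Borel--Cantelli. On this event $\theta^{-1}$ is bounded and continuous on the relevant support, so weak convergence applies.

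\textbf{Step 2 (removing the spike).} This is the main obstacle, since rank inequalities control the spectral distribution but not an unbounded functional like the trace of an inverse.
\begin{itemize}
\item If $\gamma_{\mathrm{eff}}<1$, then $(\X_m^\top\X_m)^{-1}=\bSigma_{mm}^{-1/2}(\mathbf G^\top\mathbf G)^{-1}\bSigma_{mm}^{-1/2}$. Since $\bSigma_{mm}^{-1}=\I_m-\frac{\lamtilde-1}{\lamtilde}\tilde\bv\tilde\bv^\top$, the trace equals
\[
\mathrm{Tr}[(\mathbf G^\top\mathbf G)^{-1}]-\tfrac{\lamtilde-1}{\lamtilde}\,\tilde\bv^\top(\mathbf G^\top\mathbf G)^{-1}\tilde\bv .
\]
The correction is bounded by $\|(\mathbf G^\top\mathbf G)^{-1}\|=O(1/p)$ almost surely, by Step~1's eigenvalue bound.
\item If $\gamma_{\mathrm{eff}}>1$, work with $\mathrm{Tr}[(\X_m\X_m^\top)^\dagger]$, which has the same nonzero spectrum. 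Here $\X_m\X_m^\top=\mathbf G\bSigma_{mm}\mathbf G^\top=\mathbf G\mathbf G^\top+(\lamtilde-1)\mathbf g\mathbf g^\top$ with $\mathbf g=\mathbf G\tilde\bv$, an invertible $n_l\times n_l$ matrix. Sherman--Morrison gives a correction of the form
\[
\frac{(\lamtilde-1)\,\mathbf g^\top\mathbf M^{-2}\mathbf g}{1+(\lamtilde-1)\mathbf g^\top\mathbf M^{-1}\mathbf g},\qquad \mathbf M=\mathbf G\mathbf G^\top .
\]
The denominator is at least $1$ because $\lamtilde\ge1$. The numerator is at most $(\lamtilde-1)\|\mathbf g\|^2\|\mathbf M^{-1}\|^2=O(p)\cdot O(p^{-2})$, since $\|\mathbf g\|^2=O(p)$ almost surely.
\end{itemize}
In both cases the correction is almost surely $O(1/p)\to0$, using $\lamtilde\le\lambda$.

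\textbf{Step 3 (unconditioning).} Everything above holds almost surely conditionally on any realization of $(\U_m)_{p\ge1}$, and the bounds are uniform because $1\le\lamtilde\le\lambda$. The law of total probability then gives the unconditional almost-sure statement, exactly as in Lemma~\ref{lem:esd_projected_x}.
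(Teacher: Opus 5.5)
Your proof is correct, but it handles the spike differently from the paper. The paper works directly with $\frac{1}{n_l}\X_m^\top\X_m$. It takes the weak convergence from Lemma~\ref{lem:esd_projected_x}, where the rank-one spike is disposed of at the level of distribution functions via the rank inequality. It then integrates $x^{-1}$ against $F_{\gamma_{\mathrm{eff}}}$, dropping the atom at zero when $\gamma_{\mathrm{eff}}>1$ and using $\int_{x>0}x^{-1}\,dF_\gamma(x)=\frac{1}{\gamma(\gamma-1)}$. You instead compute the white trace first and switch to the $n_l\times n_l$ Gram matrix (aspect ratio $1/\gamma_{\mathrm{eff}}$), so no atom ever appears. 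You then remove the spike at the level of the trace through exact identities ($\bSigma_{mm}^{-1}=\I_m-\frac{\lamtilde-1}{\lamtilde}\tilde\bv\tilde\bv^\top$, respectively Sherman--Morrison), which gives an explicit $O(1/p)$ correction.

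Your route also makes explicit a step the paper only gestures at. The paper justifies integrating the unbounded function $x^{-1}$ by the limiting support being bounded away from zero. What is actually needed is an almost-sure lower bound on the smallest nonzero \emph{empirical} eigenvalue, which your Bai--Yin/Borel--Cantelli argument supplies.

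One nuance on your Step~2 remark: once that bound is available, the rank inequality would in fact suffice. Since $\bSigma_{mm}\succeq\I_m$, the lower bound $c$ transfers from $\mathbf G$ to $\X_m$. Then $x^{-1}$ can be replaced by a bounded test function of total variation $O(1/c)$ that agrees with it on all nonzero eigenvalues. A rank-two perturbation therefore moves $\frac{1}{m}\mathrm{Tr}[(\frac{1}{n_l}\X_m^\top\X_m)^\dagger]$ by $O(1/m)$ and the unnormalized trace by $O(1/n_l)$.

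So the paper's argument is shorter and reuses Lemma~\ref{lem:esd_projected_x}, but it relies on that unstated eigenvalue bound. Yours is self-contained and quantitative.
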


\begin{proof}
We have
\begin{equation}\label{eq:trace_rescale}
\Tr\!\big[(\X_{m}^\top \X_{m})^\dagger\big]
= \frac{1}{n_l}\Tr\Bigl((\frac{1}{n_l}\X_{m}^\top \X_{m})^\dagger\Bigr)
= \frac{m}{n_l}\cdot \frac{1}{m}\Tr\Bigl((\frac{1}{n_l}\X_{m}^\top \X_{m})^\dagger\Bigr).
\end{equation}

By Lemma~\ref{lem:esd_projected_x}, the empirical spectral distribution of \(\frac{1}{n_l}\X_{m}^\top \X_{m}\) converges weakly almost surely to the Marcenko--Pastur law with aspect ratio \(\gamma_{\mathrm{eff}}\). If \(\gamma_{\mathrm{eff}}< 1\), its support is bounded away from zero, and the claim follows by continuity of \(x\mapsto 1/x\). If \(\gamma_{\mathrm{eff}} > 1\), there is an additional atom at zero; however, the pseudo-inverse removes this null space contribution, and the nonzero spectrum remains bounded away from zero, so the same argument applies on the effective support. Therefore, 
\[
\frac{1}{m}\Tr\Bigl(\frac{1}{n_l}(\X_{m}^\top \X_{m})^\dagger\Bigr)
= \int \frac{1}{x}\, dF^{\frac{1}{n_l}\X_{m}^\top \X_{m}}(x)
\xrightarrow{a.s.}
\int \frac{1}{x}\, dF_{\gamma_{\mathrm{eff}}}(x).
\]
Recall that
\[
\int \frac{1}{x}\, dF_{\gamma}
=
\begin{cases}
\displaystyle \frac{1}{1-\gamma}, & \gamma<1,\\[6pt]
\displaystyle \frac{1}{\gamma(\gamma-1)}, & \gamma>1.
\end{cases}
\]
Substituting into \eqref{eq:trace_rescale} and using \(\gamma =\gamma_{\mathrm{eff}}\) gives
\[
\Tr\!\big[(\X_{m}^\top \X_{m})^\dagger\big]
\xrightarrow{a.s.}
\begin{cases}
\displaystyle \frac{\gamma_{\mathrm{eff}}}{1-\gamma_{\mathrm{eff}}}, & \gamma_{\mathrm{eff}}<1,\\[8pt]
\displaystyle \frac{1}{\gamma_{\mathrm{eff}}-1}, & \gamma_{\mathrm{eff}}>1,
\end{cases}
\]
which concludes the proof.
\end{proof}

\begin{lemma}[Spectral measure of the overlap]
\label{lem:overlap_measure}
Consider the setup of Section~\ref{sec:setup}, and let $(\bu_i)_{i=1}^p$ be the eigenvectors of $\Ss_u = \frac{1}{n_u} \X_u^\top \X_u$. Let $\ba \in \mathbb R^p$ be a fixed vector with population mass distribution $G_\ba$ on the eigenvectors of $\bSigma$, i.e., $G_\ba = \frac{(\ba^\top \bv)^2}{\|\ba\|^2}\delta_{\lambda} + \big(1 - \frac{(\ba^\top \bv)^2}{\|\ba\|^2}\big) \delta_1$. Define the empirical CDF 
\[
  \hat G^\mathrm{sample}_\ba (\theta) = \frac{1}{\| \ba\|^2} \sum_{i=1}^p (\ba^\top \bu_i)^2 \;\mathbf 1 \{ \theta \leq \lambda_i(\Ss_u)\}.
\]
Then, $\hat G^\mathrm{sample}_\ba  \Rightarrow G^\mathrm{sample}_\ba$ almost surely, where $G^\mathrm{sample}_\ba$ is such that \emph{(i)}
    its support satisfies $\mathrm{supp}(G^\mathrm{sample}_\ba) \subseteq \mathcal S_{\gamma_u} \cup \{\theta^* \} \cup \{0\}$ with $\mathcal S_{\gamma_u}$ being the support of the Marcenko-Pastur density with  aspect ratio $\gamma_u$;
 \emph{(ii)} it has a continuous density $g^{\mathrm{sample}}_\ba(\theta)$ on the interior of $\mathcal S_{\gamma_u}$
\[
g^{\mathrm{sample}}_\ba(\theta) =\left(\int\frac{\tau\,\gamma_u}
         {\tau^2-\tau(1-\gamma_u+\theta)+\theta}
    \,dG_\ba(\tau)
    \right) f_{\gamma_u}(\theta);
\]
\emph{(iii)} it has an atom at $\{\theta^*\}$ where \( \displaystyle \theta^* = \frac{\lambda(\lambda - 1 + \gamma_u)}{\lambda - 1}
\) is the location of the spike $\lambda$ in the sample covariance spectrum
\[
 G^\mathrm{sample}_\ba(\{\theta^*\}) = \frac{1 - \frac{\gamma_u}{(\lambda - 1)^2}}{1 + \frac{\gamma_u}{(\lambda - 1)}} G_\ba(\{\lambda\});
\]
and \emph{(iv)} it may have an atom at 0 of weight 
\[
G_\ba^\mathrm{sample}(\{0\}) = \begin{cases}
    G_\ba(\{0\}) & \gamma_u < 1 \\
     \displaystyle \int \frac{\gamma_u-1}{\gamma_u-1+\tau}\,dG_\ba(\tau),
        & \gamma_u>1.
\end{cases}
\]
\end{lemma}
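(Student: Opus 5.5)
The plan is to use the resolvent (Stieltjes transform) characterisation of eigenvector overlaps for sample covariance matrices. Set $\mathbf Q(z) := (\Ss_u - z\I_p)^{-1}$ for $z\in\mathbb C^+$. The Stieltjes transform of $\hat G^\mathrm{sample}_\ba$ is the quadratic form
\[
m_\ba(z) := \frac{\ba^\top \mathbf Q(z)\ba}{\|\ba\|^2} = \int \frac{d\hat G^\mathrm{sample}_\ba(\theta)}{\theta - z}.
\]
Weak a.s.\ convergence of $\hat G^\mathrm{sample}_\ba$ will follow from pointwise a.s.\ convergence of $m_\ba(z)$ on a countable dense subset of $\mathbb C^+$, together with tightness. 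Tightness holds because the total mass is one and the support is a.s.\ eventually contained in a compact set, since $\|\Ss_u\|$ is a.s.\ bounded.

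\textbf{Step 1 (deterministic equivalent).} I will invoke the anisotropic local law / deterministic-equivalent result for generalised sample covariance matrices, as in Rubio--Mestre (\cite{rubio2011Spectral}) and Bai--Silverstein. It states that $\mathbf Q(z)\asymp -\bigl(z\I + z\, \underline m(z)\bSigma\bigr)^{-1}$ in the sense of bounded-norm quadratic forms, a.s. Here $\underline m(z)$ is the companion Stieltjes transform, solving $z = -1/\underline m + \gamma_u\int \tau/(1+\tau \underline m)\, dH(\tau)$ with $H=\delta_1$. Since $\bSigma$ is a rank-one perturbation of the identity, $H$ is the Marchenko--Pastur population law and the spike does not affect $\underline m$. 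This gives
\[
m_\ba(z)\to \int \frac{-1}{z(1+\tau\underline m(z))}\,dG_\ba(\tau),
\]
so the limit is a mixture over the two atoms $\tau\in\{1,\lambda\}$ of $G_\ba$.

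\textbf{Step 2 (inversion).} Next I apply Stieltjes inversion $g(\theta)=\pi^{-1}\lim_{\epsilon\downarrow0}\Im m(\theta+i\epsilon)$ to each term.
\begin{itemize}
\item For $\tau=1$ the term reduces to the ordinary MP transform, which gives density $f_{\gamma_u}$ together with the atom at $0$.
\item For $\tau=\lambda$, I compute $\Im\,[-1/(z(1+\lambda\underline m))]$ on the bulk using the explicit MP formula for $\underline m$. Rationalising should give the factor $\tau\gamma_u/(\tau^2-\tau(1-\gamma_u+\theta)+\theta)$ times $f_{\gamma_u}(\theta)$; it is $1$ at $\tau=1$, as a consistency check. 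This yields (ii).
\item Outside the bulk, poles arise where $1+\lambda\underline m(\theta)=0$. Solving with the MP equation gives $\theta^*=\lambda(\lambda-1+\gamma_u)/(\lambda-1)$, which is admissible iff $\lambda-1>\sqrt{\gamma_u}$ (the BBP condition). The residue, computed as $-1/(\theta^*\lambda\,\underline m'(\theta^*))$, should simplify to $(1-\gamma_u/(\lambda-1)^2)/(1+\gamma_u/(\lambda-1))$, matching Paul's overlap formula. This gives (iii) and the indicator.
\item The atom at $0$ for $\gamma_u>1$ comes from $\lim_{z\to0} -z\,m(z)$ using $\underline m(z)\sim (\gamma_u-1)/z$ (up to the appropriate normalisation). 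This gives $\int(\gamma_u-1)/(\gamma_u-1+\tau)\,dG_\ba$, which is (iv).
\end{itemize}
The support statement (i) follows because $m$ is analytic off $\mathcal S_{\gamma_u}\cup\{\theta^*,0\}$.

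\textbf{Main obstacle.} I expect the outlier atom to be the hard step. Deterministic equivalents valid for fixed $z\in\mathbb C^+$ only control the limit measure weakly, and an atom at $\theta^*$ is consistent with this. However, I must make sure that the mass near $\theta^*$ in $\hat G^\mathrm{sample}_\ba$ comes from the single outlier eigenvector and does not leak from bulk eigenvectors. For this I would cite a.s.\ convergence of the top sample eigenvalue to $\theta^*$ and no outliers elsewhere (Baik--Silverstein, Bai--Yao). Then I would take a contour around $\theta^*$ separated from the bulk and use that the resolvent quadratic form converges uniformly on that contour. Finally, the algebraic simplification of the residue is routine but error-prone, so I would verify it against Paul's known formula $|\langle \bu_1,\bv\rangle|^2\to(1-\gamma_u/(\lambda-1)^2)/(1+\gamma_u/(\lambda-1))$.
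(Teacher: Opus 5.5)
Your route is the one the paper relies on. The paper specialises Lemma 3 of \cite{green_high-dimensional_2025}, which rests on the same Rubio--Mestre deterministic equivalent $m_\ba(z)\to -z^{-1}\int(1+\tau\underline m(z))^{-1}\,dG_\ba(\tau)$ (restated in the proof of Lemma~\ref{lem:bilinear_projection}), followed by Stieltjes inversion.

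Parts (ii) and (iii) go through as you describe. On the bulk, $\underline m$ and $\overline{\underline m}$ are the two roots of $\theta u^2+(\theta+1-\gamma_u)u+1=0$, so $|\underline m|^2=1/\theta$ and $2\,\mathrm{Re}\,\underline m=-(\theta+1-\gamma_u)/\theta$. This yields the factor $\tau\gamma_u/(\tau^2-\tau(1-\gamma_u+\theta)+\theta)$ at once. For the outlier, $dz/d\underline m=\lambda^2\bigl(1-\gamma_u/(\lambda-1)^2\bigr)$ at $\underline m=-1/\lambda$, which gives the stated weight.

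Two small corrections:
\begin{itemize}
\item With $m_\ba(z)=\int d\hat G^\mathrm{sample}_\ba(\theta)/(\theta-z)$, the atom weight is \emph{minus} the residue. Your residue $-1/(\theta^*\lambda\underline m'(\theta^*))$ is negative because $\underline m'>0$ off the support.
\item Your ``main obstacle'' is not needed for this lemma. Weak convergence follows from convergence of the Stieltjes transforms plus tightness, and the atom at $\theta^*$ is just a feature of the limit read off from its transform. Localising that mass on $\bu_1$ (via Baik--Silverstein or Paul) only matters for stronger statements such as $(\bv^\top\bu_1)^2\to c$, which the paper uses later.
\end{itemize}

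The step that actually fails is (iv). For $\gamma_u>1$ the companion transform is \emph{regular} at the origin, not singular. The limiting law of $\frac{1}{n_u}\X_u\X_u^\top$ is $(1-\gamma_u)\delta_0+\gamma_u F_{\gamma_u}$, whose atom at $0$ cancels, so it is supported on $[\lambda_-,\lambda_+]$ with $\lambda_->0$. Correspondingly, the physical root of $z\underline m^2+(z+1-\gamma_u)\underline m+1=0$ satisfies $\underline m(0)=1/(\gamma_u-1)>0$.

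The behaviour $\underline m(z)\sim(\gamma_u-1)/z$ that you quote is the non-physical branch when $\gamma_u>1$: it is negative at $z=0^-$, whereas a Stieltjes transform of a measure on $(0,\infty)$ is positive there. It is the physical behaviour only when $\gamma_u<1$. No choice of normalisation rescues it. Inserting a pole of $\underline m$ into $-z\,m_\ba(z)=\int(1+\tau\underline m(z))^{-1}dG_\ba(\tau)$ sends the right-hand side to $0$, i.e.\ no atom at $0$. That is correct for $\gamma_u<1$ but contradicts (iv) for $\gamma_u>1$.

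Using instead $\underline m(0)=1/(\gamma_u-1)$ gives $\int(\gamma_u-1)/(\gamma_u-1+\tau)\,dG_\ba(\tau)$, as claimed. As a check, at $\tau=1$ this reduces to the Marchenko--Pastur atom $1-1/\gamma_u$.
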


\begin{proof}
    The result follows by specializing Lemma 3 in \cite{green_high-dimensional_2025} to the spiked covariance model of Assumption \ref{ass:spike_cov}.
\end{proof}

\begin{lemma}[Deterministic limit of $\|\Pm \ba \|^2, \| \Pperp \ba\|^2$]
\label{lem:pperpv}
Consider the setup of Section~\ref{sec:setup}, and let $\ba \in \mathbb{R}^p$ be a fixed vector with population mass distribution $G_\ba$ on the eigenvectors of $\bSigma$. Let $\Pm$ be the projection onto the top $m$ eigenvectors of the prior sample covariance $\Ss_u$, with $m/p \to \alpha$.
Let $G^\mathrm{sample}_\ba$ denote the deterministic limiting weighted spectral measure defined in Lemma~\ref{lem:overlap_measure}. Let $\lambda_t$ be the deterministic bulk threshold corresponding to retaining an asymptotic fraction $\alpha$ of principal components
\[
 F_{\gamma_u}(\lambda_t) = 1 - \alpha.
\]
Then, almost surely,
\begin{equation} \label{eq:pperpv}
\frac{\|\Pperp \ba\|^2}{\|\ba\|^2}
\;\xrightarrow{a.s.}\;
\begin{cases}
\displaystyle
\int_{\lambda_-}^{\lambda_t} g_\ba^\mathrm{sample}(\lambda),
& \gamma_u < 1, \\[10pt]
\displaystyle
G_\ba^\mathrm{sample}(\{0\}) + \int_{\lambda_-}^{\lambda_t} g_\ba^\mathrm{sample}(\lambda),
& \gamma_u > 1,\;\alpha < \tfrac{1}{\gamma_u}, \\[10pt]
\displaystyle
\frac{1-\alpha}{1-\frac{1}{\gamma_u}}\, G_\ba^\mathrm{sample}(\{0\}),
& \gamma_u > 1,\;\alpha \ge \tfrac{1}{\gamma_u},
\end{cases}
\end{equation}
where $\lambda_- = (1-\sqrt{\gamma_u})^2$ is the infimum of the support of the Marcenko-Pastur distribution as defined in Equation \eqref{eq:mp_definition} and
\begin{equation}
\|\Pm \ba\|^2 = \|\ba\|^2 - \|\Pperp \ba\|^2.
\end{equation}
\end{lemma}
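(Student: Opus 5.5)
The plan is to write the squared projection as an integral of a step function against the weighted spectral measure of Lemma~\ref{lem:overlap_measure}, and then pass to the limit. Concretely, since $\Pperp$ projects onto the bottom $p-m$ eigenvectors of $\Ss_u$ (completed by random orthonormal vectors when $m>n_u$), we have
\[
\frac{\|\Pperp \ba\|^2}{\|\ba\|^2} = \frac{1}{\|\ba\|^2}\sum_{i>m} (\ba^\top \bu_i)^2 ,
\]
where the eigenvalues of $\Ss_u$ are ordered decreasingly. When the bottom $p-m$ eigenvalues are strictly positive, this equals $\int \mathbf 1\{\theta \le \hat\lambda_{(m)}\}\, d\hat G^{\mathrm{sample}}_\ba(\theta)$, up to a tie correction at the threshold, with $\hat\lambda_{(m)}$ the $(m+1)$-th largest eigenvalue. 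The relation $\|\Pm\ba\|^2=\|\ba\|^2-\|\Pperp\ba\|^2$ is immediate from orthogonality.

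\emph{Case $\gamma_u<1$, or $\gamma_u>1$ with $\alpha<1/\gamma_u$.} First show that the empirical threshold converges almost surely to the deterministic threshold $\lambda_t$ defined by $F_{\gamma_u}(\lambda_t)=1-\alpha$. This follows from $\hat F^{\Ss_u}\Rightarrow F_{\gamma_u}$ a.s.: the MP law has a continuous, strictly increasing CDF on the interior of its support, so quantiles converge. The single spike eigenvalue is irrelevant at the $O(1/p)$ level. When $\gamma_u>1$ and $\alpha<1/\gamma_u$, the quantile $1-\alpha>1-1/\gamma_u$ lies strictly above the atom at zero, so $\lambda_t$ lies in the interior of the bulk.

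Next, use $\hat G^{\mathrm{sample}}_\ba \Rightarrow G^{\mathrm{sample}}_\ba$ a.s. together with the fact that $G^{\mathrm{sample}}_\ba$ has a continuous density near $\lambda_t$, so it puts no mass at $\lambda_t$. A standard sandwich argument applies: bound $\mathbf 1\{\theta\le\hat\lambda_{(m)}\}$ between $\mathbf 1\{\theta\le\lambda_t\pm\epsilon\}$ eventually, take limits using the portmanteau theorem, and let $\epsilon\to0$. This gives $G^{\mathrm{sample}}_\ba([0,\lambda_t])$. The spike atom at $\theta^*>\lambda_+$ is excluded since $\theta^*>\lambda_t$. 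The result is $\int_{\lambda_-}^{\lambda_t} g^{\mathrm{sample}}_\ba$, plus the zero atom $G^{\mathrm{sample}}_\ba(\{0\})$ when $\gamma_u>1$.

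\emph{Case $\gamma_u>1$, $\alpha\ge 1/\gamma_u$.} Now all $n_u$ nonzero eigendirections are retained, along with $m-n_u$ directions chosen uniformly at random within the null space $\mathcal N$ of $\Ss_u$, which has dimension $p-n_u$. Then $\Pperp$ projects onto a Haar-random subspace of $\mathcal N$ of dimension $p-m$, independent of $\X_u$ given $\mathcal N$. Let $\mathcal P_{\mathcal N}$ denote the orthogonal projection onto $\mathcal N$. Conditional on $\mathcal N$, $\|\Pperp\ba\|^2$ is the squared norm of a random $(p-m)$-dimensional projection of the vector $\mathcal P_{\mathcal N}\ba$. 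Concentration on the sphere (a Beta-type law with exponential tails, plus Borel--Cantelli) gives
\[
\|\Pperp\ba\|^2 = \frac{p-m}{p-n_u}\|\mathcal P_{\mathcal N}\ba\|^2 + o(1).
\]
Since $\|\mathcal P_{\mathcal N}\ba\|^2/\|\ba\|^2 \to G^{\mathrm{sample}}_\ba(\{0\})$ by Lemma~\ref{lem:overlap_measure} and $\frac{p-m}{p-n_u}\to\frac{1-\alpha}{1-1/\gamma_u}$, this yields the third formula.

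The main obstacle is the threshold step: justifying that the empirical cutoff converges and that no $G^{\mathrm{sample}}_\ba$-mass sits at $\lambda_t$. One also has to handle the zero atom carefully as $\alpha\to1/\gamma_u$, and check that eigenvector weights near the cutoff cannot concentrate. The latter follows from the weak convergence of $\hat G^{\mathrm{sample}}_\ba$ combined with the absolute continuity of the limit.
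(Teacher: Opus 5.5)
Your proposal is correct and follows the same route as the paper: write $\|\Pperp\ba\|^2/\|\ba\|^2$ as the mass of $\hat G^{\mathrm{sample}}_\ba$ on the discarded eigenvalues, integrate $G^{\mathrm{sample}}_\ba$ below $\lambda_t$ (adding the whole zero atom when $\gamma_u>1$, $\alpha<1/\gamma_u$), and use the random null-space completion fraction $(p-m)/(p-n_u)$ in the last case. Your quantile-convergence, portmanteau-sandwich and Beta-concentration steps make rigorous what the paper asserts in one line each.

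One minor caveat concerns the endpoint $\alpha=0$, which is used later in Corollary~\ref{cor:sharp_transition}. There $\lambda_t=\lambda_+$ is a support edge, so ESD convergence alone does not bound $\hat\lambda_{(m+1)}$ from above. You also need the standard Baik--Silverstein fact that only the spike eigenvalue of $\Ss_u$ can stay above $\lambda_++\epsilon$. This gives $\hat\lambda_{(m+1)}\to\lambda_+$ and, above the BBP threshold, ensures that the retained top $m\ge 1$ eigenvectors capture the atom at $\theta^*$; the paper's proof also leaves this implicit.
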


\begin{proof}
Let
\[
\hat G_\ba^\mathrm{sample} (\theta)
=
\frac{1}{\|\ba\|^2}\sum_{i=1}^p (\ba^\top  \bu_i)^2 \,\mathbf 1 \{ \theta \leq \lambda_i (\Ss_u)\}
\]
be the weighted empirical spectral measure of $\Ss_u$. By Lemma~\ref{lem:overlap_measure}, $\hat G_\ba^\mathrm{sample}$ converges almost surely to the deterministic measure $G_\ba^\mathrm{sample}$.

Since $\|\Pperp \ba\|^2/\|\ba\|^2$ is exactly the mass of $\hat G_\ba^\mathrm{sample}$ on the discarded eigenspaces, its limit is obtained by integrating $G_\ba^\mathrm{sample}$ over the discarded part of the spectrum.

If $\gamma_u<1$, there is no nullspace, so only bulk directions are discarded. This gives
\[
\frac{\|\Pperp \ba\|^2}{\|\ba\|^2}
\xrightarrow{a.s.}
\int_{\lambda_-}^{\lambda_t} g_\ba^\mathrm{sample}(\lambda).
\]

If $\gamma_u>1$ and $\alpha<1/\gamma_u$, fewer than all positive-eigenvalue directions are retained, so the entire nullspace is discarded as well. Hence
\[
\frac{\|\Pperp \ba\|^2}{\|\ba\|^2}
\xrightarrow{a.s.}
G_\ba^\mathrm{sample}(\{0\})+\int_{\lambda_-}^{\lambda_t} g_\ba^\mathrm{sample}(\lambda).
\]

Finally, if $\gamma_u>1$ and $\alpha\ge 1/\gamma_u$, all positive-eigenvalue directions are retained, and the retained subspace is completed by adding
\[
m-\operatorname{rank}(\Ss_u) = m - n_u
\]
orthonormal directions chosen uniformly at random in the nullspace, independently of $\ba$. Therefore only the fraction
\[
\frac{p-m}{p-n_u}
\;\xrightarrow{a.s.}\;
\frac{1-\alpha}{1-\frac{1}{\gamma_u}}
\]
of the nullspace remains discarded. Since the total asymptotic mass of $\ba$ in the nullspace is $G_\ba^\mathrm{sample}(\{0\})$, we obtain
\[
\frac{\|\Pperp \ba\|^2}{\|\ba\|^2}
\xrightarrow{a.s.}
\frac{1-\alpha}{1-\frac{1}{\gamma_u}}\,G_\ba^\mathrm{sample}(\{0\}).
\]
The identity \(
\|\Pm \ba\|^2 = \|\ba\|^2-\|\Pperp \ba\|^2
\) is immediate.
\end{proof}


\begin{lemma}[Deterministic limit of $\ba^\top \Pm \bb, \ba^\top \Pperp \bb$]
\label{lem:bilinear_projection}
Consider the setup of Section~\ref{sec:setup}, let $\ba,\bb\in\mathbb R^p$ be fixed vectors and let $\Pm$ denote the projection onto the top $m$ principal components of $\Ss_u$, with $m/p\to\alpha$.
Define the signed \emph{population} spectral measure
\begin{equation}
G_{\ba, \bb}(\tau)
= \frac{1}{\| \ba\|\| \bb\|}
\sum_j (\ba^\top \bv_j)(\bb^\top \bv_j)\,\mathbf 1\{\lambda_j\le \tau\},
\end{equation}
where $(\bv_j,\lambda_j)$ are the population eigenpairs of $\bSigma$.
Let $G^\mathrm{sample}_{\ba, \bb}$ denote the corresponding deterministic limiting signed \emph{sample} spectral measure onto the eigenpairs of $\Ss_u$. Then,
\begin{equation}
\ba^\top \Pm \bb
=
\ba^\top \bb-\ba^\top \Pperp \bb,
\end{equation}
and, almost surely,
\begin{equation}\label{eq:bilinear_proj}
\frac{\ba^\top \Pperp \bb}{\|\ba\| \| \bb\|}
\xrightarrow{a.s.}
\begin{cases}
\displaystyle \int_{\lambda_-}^{\lambda_t} g^\mathrm{sample}_{\ba, \bb}(\lambda),
& \gamma_u<1,\\[8pt]
\displaystyle G^\mathrm{sample}_{\ba, \bb}(\{0\})+\int_{\lambda_-}^{\lambda_t} g^\mathrm{sample}_{\ba, \bb}(\lambda),
& \gamma_u>1,\ \alpha<1/\gamma_u,\\[8pt]
\displaystyle \frac{1-\alpha}{1-\frac1{\gamma_u}}\,G^\mathrm{sample}_{\ba, \bb}(\{0\}),
& \gamma_u>1,\ \alpha\ge 1/\gamma_u.
\end{cases}
\end{equation}
Moreover, $G^\mathrm{sample}_{\ba, \bb}(\{0\})$ and the bulk density of $g^\mathrm{sample}_{\ba, \bb}$ are given by the formulas of Lemma~\ref{lem:overlap_measure} with $dG_\ba$ replaced by the signed measure $dG_{\ba,\bb}$.
\end{lemma}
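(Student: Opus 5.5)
The plan is to reduce the bilinear statement to the quadratic one (Lemma~\ref{lem:pperpv}) by polarization, so that all the random-matrix input comes from Lemma~\ref{lem:overlap_measure}. The identity $\ba^\top\Pm\bb=\ba^\top\bb-\ba^\top\Pperp\bb$ is immediate from $\Pm+\Pperp=\I$, so only the limit of $\ba^\top\Pperp\bb$ needs work.

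\textbf{Step 1 (polarization).} Normalise $\hat\ba=\ba/\|\ba\|$ and $\hat\bb=\bb/\|\bb\|$, and use
\[
\hat\ba^\top\Pperp\hat\bb=\tfrac14\bigl(\|\Pperp(\hat\ba+\hat\bb)\|^2-\|\Pperp(\hat\ba-\hat\bb)\|^2\bigr).
\]
The vectors $\hat\ba\pm\hat\bb$ are deterministic. Their population mass distributions on the eigenbasis of $\bSigma$ are
\[
G_{\hat\ba\pm\hat\bb}=G_{\hat\ba}+G_{\hat\bb}\pm2G_{\ba,\bb},
\]
with the normalisations absorbed. These converge because the overlaps of $\ba$ and $\bb$ with $\bv$ converge by assumption, and the remaining mass sits at eigenvalue $1$.

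\textbf{Step 2 (apply Lemma~\ref{lem:pperpv} to each term).} Lemma~\ref{lem:pperpv} gives the almost sure limit of each $\|\Pperp(\hat\ba\pm\hat\bb)\|^2$ as an integral of $G^{\mathrm{sample}}_{\hat\ba\pm\hat\bb}$ over the discarded region. That region is the bulk below $\lambda_t$, possibly together with the atom at zero, and in the third regime a random fraction of the nullspace. The case split on $\gamma_u$ and $\alpha$ is identical for both terms.

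If a vector $\hat\ba\pm\hat\bb$ vanishes or has vanishing norm in the limit, its contribution tends to zero trivially. Otherwise I rescale it to a unit vector before applying the lemma.

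\textbf{Step 3 (linearity of the limits).} The formulas in Lemma~\ref{lem:overlap_measure} are linear in the population measure $dG$. This holds for the bulk density $g^{\mathrm{sample}}$, which is an integral against $dG$ multiplied by $f_{\gamma_u}$; for the zero-atom weight; and for the spike atom. Since $\theta^*$ lies above $\lambda_t$ whenever $\lambda$ is above the BBP threshold, the spike atom is never discarded anyway.

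Taking one quarter of the difference of the two limits, the $G_{\hat\ba}$ and $G_{\hat\bb}$ parts cancel, leaving exactly the formulas of Lemma~\ref{lem:overlap_measure} evaluated at the signed measure $G_{\ba,\bb}$. This gives the three cases of \eqref{eq:bilinear_proj}.

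In the regime $\gamma_u>1$, $\alpha\ge1/\gamma_u$, the completion of the nullspace by uniformly random directions is handled inside Lemma~\ref{lem:pperpv}. There I need the same random completion for both polarized vectors, which holds because there is a single $\Pm$. The fraction $(1-\alpha)/(1-1/\gamma_u)$ of the nullspace mass is retained for each vector, and it passes through the polarization linearly.

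\textbf{Main obstacle.} The delicate point is justifying Lemma~\ref{lem:pperpv} itself at the level used here. Integrating the weak limit $\hat G^{\mathrm{sample}}\Rightarrow G^{\mathrm{sample}}$ up to the threshold $\lambda_t$ requires that the limit measure carry no atom at $\lambda_t$; this holds because the density is continuous in the bulk. It also requires that the random index cutoff, namely the $m$-th eigenvalue of $\Ss_u$, converges to $\lambda_t$, which follows from the Marčenko–Pastur convergence of $\hat F^{\Ss_u}$ and the strict monotonicity of $F_{\gamma_u}$ on its support.

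I would restate these two facts explicitly for the polarized vectors. A Portmanteau argument then shows that the mass of $\hat G$ below the empirical cutoff converges to the mass of $G$ below $\lambda_t$.
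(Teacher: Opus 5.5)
Your proof is correct, but it takes a different route from the paper.

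\textbf{The paper's route.} The paper works with the signed empirical measure $\hat G^{\mathrm{sample}}_{\ba,\bb}$ directly. It identifies its Stieltjes transform with the bilinear resolvent form $\ba^\top(\Ss_u-z\I)^{-1}\bb$. It then invokes the Rubio--Mestre deterministic equivalent with the rank-one $\Theta=\ba\bb^\top$ and re-runs the Green--Romanov argument. This yields weak convergence to a signed limit whose Stieltjes transform $-\frac1z\int(1+\tau\underline m_{\gamma_u}(z))^{-1}\,dG_{\ba,\bb}(\tau)$ is linear in $G_{\ba,\bb}$. The atom at zero and the bulk density are read off by the same substitution. Finally it repeats the discarding argument of Lemma~\ref{lem:pperpv} for this signed measure.

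\textbf{Your route.} Polarization writes $\hat\ba^\top\Pperp\hat\bb$ as a difference of two quadratic forms in deterministic vectors. All the random-matrix input is then the nonnegative case already covered by Lemma~\ref{lem:pperpv}, and linearity in $G$ is used only on the final formulas of Lemma~\ref{lem:overlap_measure}.

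\textbf{What each buys.} Your reduction avoids having to establish, or threshold, weak convergence of a \emph{signed} measure. That step is genuinely more delicate than the paper's ``repeating the argument'' suggests. For signed measures, weak convergence does not by itself give convergence of the mass below $\lambda_t$, even when the limit has no atom there, because of cancellation. For example, $\delta_{t-1/n}-\delta_{t+1/n}\Rightarrow 0$, yet each assigns mass $1$ to $(-\infty,t]$. Writing the signed measure as a difference of two positive measures removes this problem, and your Portmanteau remarks then apply as stated. The paper's route gives a more general tool: a deterministic equivalent for $\mathrm{Tr}[\Theta(\Ss_u-z\I)^{-1}]$ for any $\Theta$ of bounded trace norm. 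For the rank-one bilinear forms needed here, though, your argument is simpler and cleaner.

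\textbf{One small omission.} For the population measures of $\hat\ba\pm\hat\bb$ to converge, you also need $\hat\ba^\top\hat\bb$ to converge, since it fixes the mass at eigenvalue $1$. Convergence of the overlaps with $\bv$ alone is not enough. This is implicit in the statement's assumption that $G_{\ba,\bb}$ has a deterministic limit, but you should say so explicitly.
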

\begin{proof}
Define the signed mass density onto the \emph{sample} eigenvectors of $\Ss_u$ 
\begin{equation}
\hat G_{\ba,\bb}^\mathrm{sample} (\theta)
:=
\frac{1}{\|\ba\| \| \bb\|}\sum_{i=1}^p (\ba^\top \bu_i)(\bb^\top \bu_i)\,\mathbf 1 \{\theta \leq \lambda_i(\Ss_u)\},
\end{equation}
so that its Stieltjes transform is
\begin{equation}
m_{\hat G_{\ba,\bb}^\mathrm{sample}}(z)
=
\int \frac{1}{\lambda-z}\,d\hat G_{\ba,\bb}^\mathrm{sample}(\lambda)
=
\ba^\top(\Ss_u-zI)^{-1}\bb
=
\Tr\!\big[\ba\bb^\top(\Ss_u-zI)^{-1}\big].
\end{equation}
By \citet[Theorem 1]{rubio2011Spectral}, for any deterministic matrix $\Theta$ with bounded trace norm (i.e., $\Tr\!\big[(\Theta^\top \Theta)^{1/2}\big] = \| \Theta \|_\mathrm{tr}$),
\[
\Tr\!\big[\Theta(\Ss_u-zI)^{-1}\big]-\delta_\Theta(z)\to 0
\qquad\text{a.s.}
\]
where $\delta_\Theta(z)$ is a deterministic function of $z$. Applying this with $\Theta=\ba\bb^\top$ and repeating the argument from \cite{green_high-dimensional_2025} (in their paper, see Lemma 1 proven in Appendix A) yields the following almost sure weak convergence
\[
\hat G_{\ba,\bb}^\mathrm{sample} \Rightarrow G_{\ba,\bb}^\mathrm{sample},
\]
where the Stieltjes transform of $G_{\ba,\bb}^\mathrm{sample}$ is
\begin{equation}
m_{G_{\ba,\bb}^\mathrm{sample}}(z)
=
-\frac1z\int \frac{1}{1+\tau\,\underline m_{\gamma_u}(z)}\,dG_{\ba,\bb}(\tau).
\end{equation}
Here, the companion Stieltjes transform is defined as $\underline m_{\gamma_u}(z) = \gamma_u m_{\gamma_u}(z) - (1-\gamma_u)\frac{1}{z}$, and $m_{\gamma_u}$ is a Stieltjes transform of a Marcenko-Pastur density $f_{\gamma_u}$. 
This is exactly the same expression as in Lemma 1 in \cite{green_high-dimensional_2025} with the population measure $G_\ba$ replaced by the signed measure $G_{\ba,\bb}$. Hence, the atom at $0$ and the bulk density of $g_{\ba,\bb}^\mathrm{sample}(\{0\})$ are obtained from the formulas of Lemma~\ref{lem:overlap_measure} by the same substitution. Now the mass of $\hat G_{\ba,\bb}^\mathrm{sample}$ on the discarded eigenspaces is 
\begin{equation}
\ba^\top \Pperp \bb
=
\sum_{i = m+1}^{p} (\ba^\top \bu_i)(\bb^\top \bu_i)
\end{equation}
noting that the eigenvectors $\bu_i$ are sorted according to their eigenvalues, such that the set $\{m+1, \dots, p\}$ contains the indices of the $p-m$ smallest eigenvalues. 
Therefore, the same geometric argument as in Lemma~\ref{lem:pperpv} gives the three regimes above: discarded bulk only when $\gamma_u<1$; bulk plus the full atom at $0$ when $\gamma_u>1$ and $\alpha<1/\gamma_u$; and, when $\gamma_u>1$ and $\alpha\ge 1/\gamma_u$, only the fraction
\[
\frac{1-\alpha}{1-\frac1{\gamma_u}}
\]
of the nullspace mass remains discarded, by the same random nullspace-completion argument. Finally,
\[
\ba^\top \Pm \bb=\ba^\top \bb-\ba^\top \Pperp \bb,
\]
which proves the result.
\end{proof}


\begin{lemma}[Deterministic limit of the effective projection $\|\Pi \ba\|^2$]
\label{lem:projection_effective}
Consider the setup of Section~\ref{sec:setup}, let \(\X_m = \X_l\U_m \in \mathbb{R}^{n_l \times m}\) and define
\[
\Pi_m := \X_m^\dagger \X_m \in \mathbb{R}^{m \times m},
\qquad
\Pi := \U_m \Pi_m \U_m^\top \in \mathbb{R}^{p \times p}.
\]
Let the eigendecomposition of $\bSigma_{mm}$ be \(\bSigma_{mm} := \U_m^\top \bSigma \U_m\) with \((\tilde \bv_i,\tilde\lambda_i)\) denoting the eigenpairs. For any fixed vector \(\ba \in \mathbb{R}^p\) with \(\Pm \ba \neq 0\), define
\[
\tilde \ba := \frac{\U_m^\top \ba}{\|\Pm \ba\|} \in \mathbb{R}^m,
\]
and let \(\tilde G_{\tilde \ba}\) be the mass distribution of \(\tilde \ba\) onto the eigenvectors of \(\bSigma_{mm}\):
\[
\tilde G_{\tilde \ba}(\tilde\tau)
=
\sum_i (\tilde \ba^\top \tilde \bv_i)^2\,\mathbf{1}\{\tilde\lambda_i \le \tilde\tau\}.
\]
Then, it holds almost surely
\[
\tilde G_{\tilde \ba}(\tau) \Rightarrow \left(1 - \frac{\bar p_{\ba, \bv}^2}{\bar p_{\ba, \ba} \bar p_{\bv, \bv}}\right) \delta_{1} +  \frac{\bar p_{\ba, \bv}^2}{\bar p_{\ba, \ba} \bar p_{\bv, \bv}} \delta_{\bar \lambda},
\]
and
\begin{equation} \label{eq:limit_pitilde_projections}
\frac{\ba^\top \Pi \ba}{\|\ba\|^2} \xrightarrow{a.s.}
\bar p_{\ba, \ba}\cdot
\begin{cases}
1, & \gamma_\mathrm{eff} < 1, \\[6pt]
1 - \displaystyle\int
\frac{\gamma_{\mathrm{eff}} - 1}{\gamma_{\mathrm{eff}} - 1 + \tilde \tau}
\, d\tilde G_{\tilde \ba}(\tilde \tau), & \gamma_\mathrm{eff} > 1.
\end{cases}
\end{equation}
with $\bar p_{\ba, \ba}, \bar p_{\bv, \bv}, \bar p_{\ba, \bv}$ defined via Equations~\eqref{eq:pperpv} and \eqref{eq:bilinear_proj}.
\end{lemma}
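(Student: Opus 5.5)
The plan is to work conditionally on $\U_m$ and reduce everything to the $m$-dimensional regression problem with design $\Xm=\X_l\U_m$, whose rows are i.i.d.\ $\mathcal N(0,\bSigma_{mm})$ with $\bSigma_{mm}=\I_m+(\lamtilde-1)\vtilde\vtilde^\top$. Since $\Pi=\U_m\Pi_m\U_m^\top$, we have $\ba^\top\Pi\ba/\|\ba\|^2 = (\|\Pm\ba\|^2/\|\ba\|^2)\,\tilde\ba^\top\Pi_m\tilde\ba$. The prefactor converges a.s.\ to $\bar p_{\ba,\ba}$ by Lemma~\ref{lem:pperpv}. It therefore remains to identify the limit of $\tilde\ba^\top\Pi_m\tilde\ba$ and the limiting mass distribution $\tilde G_{\tilde\ba}$.

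\textbf{Mass distribution.} The matrix $\bSigma_{mm}$ has eigenvalue $\lamtilde$ on $\vtilde$ and eigenvalue $1$ on its orthogonal complement. Hence $\tilde G_{\tilde\ba}$ places mass $(\tilde\ba^\top\vtilde)^2$ at $\lamtilde$ and the remainder at $1$. We compute
\[
(\tilde\ba^\top\vtilde)^2=\frac{(\ba^\top\Pm\bv)^2}{\|\Pm\ba\|^2\|\Pm\bv\|^2}.
\]
Numerator and denominators converge a.s.\ to $\bar p_{\ba,\bv}^2$, $\bar p_{\ba,\ba}$ and $\bar p_{\bv,\bv}$ by Lemmas~\ref{lem:bilinear_projection} and~\ref{lem:pperpv}. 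Together with $\lamtilde\to\bar\lambda$, the continuous mapping theorem gives weak convergence of this two-atom measure to the stated limit.

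\textbf{Projection.} For $\gamma_\mathrm{eff}<1$, $\Xm$ has full column rank a.s.\ for large $p$, by the smallest singular value bound from Step~3 of the proof of Theorem~\ref{thm:estimation_error}. Thus $\Pi_m=\I_m$, and the limit is $\bar p_{\ba,\ba}$.

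For $\gamma_\mathrm{eff}>1$, write $\Xm=\mathbf G\bSigma_{mm}^{1/2}$ with $\mathbf G$ standard Gaussian. Then $\Pi_m$ is the ridgeless limit
\[
\Pi_m=\lim_{\mu\downarrow0}\bigl(\I-\mu(\Xm^\top\Xm/n_l+\mu\I)^{-1}\bigr).
\]
So $\tilde\ba^\top(\I-\Pi_m)\tilde\ba=\lim_\mu \mu\,\tilde\ba^\top(\Xm^\top\Xm/n_l+\mu)^{-1}\tilde\ba$. We then invoke the anisotropic deterministic equivalent of \citet{rubio2011Spectral} (as in \cite{green_high-dimensional_2025}, or the ridgeless results of \cite{hastie2020Surprises}):
\[
\mu(\hat\bSigma+\mu)^{-1}\approx (\I+\kappa(\mu)\bSigma_{mm})^{-1},
\]
where $\kappa$ solves the self-consistent equation. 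As $\mu\to0$ with $\gamma_\mathrm{eff}>1$, $\kappa$ tends to the solution of $\frac1{\gamma_\mathrm{eff}}=\int\frac{\kappa t}{1+\kappa t}dH(t)$, with $H$ the spectrum of $\bSigma_{mm}$. Because $\bSigma_{mm}$ is a rank-one perturbation of the identity, $H\Rightarrow\delta_1$, so $\kappa=1/(\gamma_\mathrm{eff}-1)$. The quadratic form then becomes
\[
\int\frac{1}{1+\tilde\tau/(\gamma_\mathrm{eff}-1)}d\tilde G_{\tilde\ba}=\int\frac{\gamma_\mathrm{eff}-1}{\gamma_\mathrm{eff}-1+\tilde\tau}d\tilde G_{\tilde\ba}(\tilde\tau),
\]
which gives the claim. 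The outer randomness of $\U_m$ is removed exactly as in Lemma~\ref{lem:esd_projected_x}, by conditioning on the whole sequence and applying total probability.

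\textbf{Main obstacle.} The hardest step is interchanging the limits $\mu\downarrow0$ and $p\to\infty$ a.s.\ for a quadratic form in a direction $\tilde\ba$ that is not isotropic with respect to the spike. The deterministic equivalent must hold for the anisotropic functional $\tilde\ba\tilde\ba^\top$, and it must also hold with $\tilde\ba,\vtilde$ random but independent of $\X_l$. Conditioning on $\U_m$ handles the independence, and the trace-norm-one condition of Rubio--Mestre is satisfied.

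For the interchange, we use that the nonzero spectrum of $\Xm^\top\Xm/n_l$ is bounded below by a constant a.s.\ (as in Step~3). This gives $|\mu\tilde\ba^\top(\hat\bSigma+\mu)^{-1}\tilde\ba-\tilde\ba^\top(\I-\Pi_m)\tilde\ba|\le\mu/c$ uniformly in $p$. The same kind of bound holds for the deterministic equivalent, and together these justify the swap.
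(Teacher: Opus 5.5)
Your proposal is correct, and its skeleton matches the paper's: conditioning on $\U_m$, the factorization $\ba^\top\Pi\ba/\|\ba\|^2=(\|\Pm\ba\|^2/\|\ba\|^2)\,\tilde\ba^\top\Pi_m\tilde\ba$, the explicit two-atom form of $\tilde G_{\tilde\ba}$, and $\Pi_m=\I_m$ for $\gamma_{\mathrm{eff}}<1$.

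The real difference is the case $\gamma_{\mathrm{eff}}>1$. The paper writes $\tilde\ba^\top\Pi_m\tilde\ba=1-\tilde\ba^\top\Pi_{m,\mathrm{null}}\tilde\ba$. It then applies the null-space mass formula of Lemma~\ref{lem:overlap_measure}(iv) (taken from \cite{green_high-dimensional_2025}) as a black box to the $m$-dimensional model with aspect ratio $\gamma_{\mathrm{eff}}$ and population measure $\tilde G_{\tilde\ba}$. You instead rederive that formula in three steps:
\begin{enumerate}
\item the Rubio--Mestre deterministic equivalent of the ridge resolvent of $\hat{\bSigma}=\Xm^\top\Xm/n_l$;
\item the $\mu\downarrow 0$ fixed point $\kappa=1/(\gamma_{\mathrm{eff}}-1)$, obtained from $H=\delta_1$;
\item a spectral-gap argument that justifies the ridgeless limit.
\end{enumerate}

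The paper's route is a one-liner. Yours costs a double-limit argument, but it makes explicit two points the paper leaves implicit.
\begin{enumerate}
\item The cited lemma is stated for a fixed spiked covariance and a fixed vector. Here $\lamtilde$ and $\tilde\ba$ depend on $p$ and are random through $\U_m$. Rubio--Mestre needs only $\|\bSigma_{mm}\|\le\lambda$ and a trace-norm-one test matrix, and you check both after conditioning.
\item Weak convergence of spectral measures alone gives only $\limsup$ of the empirical atom at $0$ bounded by the limiting atom. Equality requires a uniform lower bound on the nonzero eigenvalues of $\hat{\bSigma}$. That is exactly what gives your estimate $|\mu\,\tilde\ba^\top(\hat{\bSigma}+\mu\I)^{-1}\tilde\ba-\tilde\ba^\top(\I-\Pi_m)\tilde\ba|\le\mu/c$.
\end{enumerate}
The remaining bookkeeping is standard, and you outline it correctly: countably many $\mu$, and continuity of $\kappa(\mu)$ at $\mu=0$.
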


\begin{proof}
We start by computing explicitly $\tilde G_{\tilde \ba}$. Recall that
\[
\bSigma_{mm}
=
\U_m^\top \bSigma \U_m
=
\mathbf I_m+(\lambda-1)(\U_m^\top \bv)(\U_m^\top \bv)^\top.
\]
Then, $\bSigma_{mm}$ has $m-1$ eigenvalues equal to $1$ and the remaining one is
\[
\tilde\lambda
= 1+(\lambda-1)\|\Pm \bv\|^2 \quad\Rightarrow \quad \bar\lambda = 1 + (\lambda-1)\bar p_{\bv, \bv},
\]
where the convergence occurs almost surely. The corresponding unit eigenvector is \(\tilde \bv
=
\frac{\U_m^\top \bv}{\|\U_m^\top \bv\|}
=
\frac{\U_m^\top \bv}{\|\Pm \bv\|}.
\)
Therefore the spectral measure \(\tilde G_{\tilde \ba}\) of
\[
\tilde \ba=\frac{\U_m^\top \ba}{\|\Pm \ba\|}
\]
with respect to the eigenbasis of \(\bSigma_{mm}\) is supported on \(\{1,\tilde\lambda\}\), and hence
\[
\tilde G_{\tilde \ba}
=
\bigl(1-(\tilde \ba^\top \tilde \bv)^2\bigr)\delta_1
+
(\tilde \ba^\top \tilde \bv)^2\delta_{\tilde\lambda},
\]
with
\[
(\tilde \ba^\top \tilde \bv)^2
=
\frac{((\U_m^\top \ba)^\top(\U_m^\top \bv))^2}
{\|\Pm \ba\|^2\,\|\Pm \bv\|^2}
=
\frac{(\ba^\top \U_m\U_m^\top \bv)^2}
{\|\Pm \ba\|^2\,\|\Pm \bv\|^2}
=
\frac{(\ba^\top \Pm \bv)^2}
{\|\Pm \ba\|^2\,\|\Pm \bv\|^2}.
\]
Thus, \(\tilde G_{\tilde \ba}\) is a two-point measure, with the original atom at \(\lambda\) replaced by an atom at \(\tilde\lambda\), and with weight determined by the projected overlap. Almost surely, it holds that:
\[
\tilde G_{\tilde \ba}(\tau) \Rightarrow \left(1 - \frac{\bar p_{\ba, \bv}^2}{\bar p_{\ba, \ba} \bar p_{\bv, \bv}}\right) \delta_{1} +  \frac{\bar p_{\ba, \bv}^2}{\bar p_{\ba, \ba} \bar p_{\bv, \bv}} \delta_{\bar \lambda}.
\]

\paragraph{Case \(m < n_l\).}
In this regime, \(\X_m \in \mathbb{R}^{n_l \times m}\) has full column rank almost surely, so \(\Pi_m = \mathbf I_m\). Hence \(\tilde \ba^\top \Pi_m \tilde \ba = 1\), giving
\[
\frac{\ba^\top \Pi \ba}{\|\ba\|^2} \xrightarrow{a.s.} \bar p_{\ba, \ba},
\]
as $\frac{\| \Pm \ba\|^2}{\|\ba\|^2} \xrightarrow{a.s.} \bar p_{\ba, \ba}$ from Lemma~\ref{lem:pperpv}.
\paragraph{Case \(m > n_l\).}
In this regime, \(\X_m\) has a nullspace of dimension \(m - n_l\), and
\[
\tilde \ba^\top \Pi_m \tilde \ba
=
1 - \tilde \ba^\top \Pi_{m,\mathrm{null}} \tilde \ba.
\]
Conditioned on \(\U_m\), the rows of \(\X_m\) are Gaussian with covariance \(\bSigma_{mm} = \U_m^\top \bSigma \U_m\). Therefore, by the nullspace mass formula from Lemma~\ref{lem:overlap_measure} applied to the \(m\)-dimensional model with aspect ratio
\(\gamma_{\mathrm{eff}} = m/n_l > 1\),
\begin{equation}
\tilde \ba^\top \Pi_{m,\mathrm{null}} \tilde \ba
\xrightarrow{a.s.}
\int
\frac{\gamma_{\mathrm{eff}} - 1}{\gamma_{\mathrm{eff}} - 1 + \tilde\tau}
\, d\tilde G_{\tilde \ba}(\tilde\tau).
\end{equation}
Finally, noting that $\frac{\| \Pm \ba\|^2}{\|\ba\|^2} \xrightarrow{a.s.} \bar p_{\ba, \ba}$ according to Lemma~\ref{lem:pperpv} and combining the two terms via the continuous mapping theorem gives the required result.

\end{proof}


\begin{lemma}[Deterministic limit of the bilinear effective projection $\ba^\top \Pi \bb$]
\label{lem:projection_effective_bilinear}
Consider the setup of  Section~\ref{sec:setup}, let \(\ba, \bb \in \mathbb{R}^p\), with \(\Pm \ba \neq 0, \Pm \bb \neq 0\), and define
\begin{align*}
    &\tilde \ba := \frac{\U_m^\top \ba}{\|\Pm \ba\|} \in \mathbb{R}^m, \qquad \tilde \bb := \frac{\U_m^\top \bb}{\|\Pm \bb\|} \in \mathbb{R}^m .
\end{align*}
Let \(\tilde G_{\tilde \ba, \tilde \bb}\) be the signed joint mass distribution of \(\tilde \ba, \tilde \bb\) onto the eigenvectors $\tilde \bv_i$ of \(\bSigma_{mm}\):
\[
\tilde G_{\tilde \ba, \tilde \bb}(\tilde\tau)
= \sum_i (\tilde \ba^\top \tilde \bv_i)(\tilde \bb^\top \tilde \bv_i)\,\mathbf{1}\{\tilde\lambda_i \le \tilde\tau\}.
\]
Then, it holds almost surely
\[
\tilde G_{\tilde \ba, \tilde \bb}(\tau) \Rightarrow \left(\frac{\bar p_{\ba, \bb}}{\bar{p}_{\ba,\ba}^{1/2} \bar{p}_{\bb,\bb}^{1/2}} - \frac{\bar p_{\ba, \bv} \bar p_{\bb, \bv}}{\bar p_{\ba, \ba}^{1/2} \bar p_{\bb, \bb}^{1/2}\bar p_{\bv, \bv}}\right) \delta_{1} +  \frac{\bar p_{\ba, \bv} \bar p_{\bb, \bv}}{\bar p_{\ba, \ba}^{1/2} \bar p_{\bb, \bb}^{1/2}\bar p_{\bv, \bv}} \delta_{\bar \lambda},
\]
and
\begin{equation} \label{eq:biliear_effective}
\frac{\ba^\top \Pi \bb}{\|\ba\| \| \bb \|} \xrightarrow{a.s.}
\begin{cases}
\bar p_{\ba,\bb}, & \gamma_\mathrm{eff} < 1, \\[6pt]
\bar p_{\ba, \bb} - \bar p_{\ba, \ba}^{1/2} \bar p_{\bb, \bb}^{1/2}\displaystyle\int
\frac{\gamma_{\mathrm{eff}} - 1}{\gamma_{\mathrm{eff}} - 1 + \tilde \tau}
\, d\tilde G_{\tilde \ba, \tilde \bb}(\tilde \tau), & \gamma_\mathrm{eff} > 1.
\end{cases}
\end{equation}
with $\bar p_{\ba, \ba}, \bar p_{\bb, \bb}, \bar p_{\bv, \bv}, \bar p_{\ba, \bv}, \bar p_{\bb, \bv}$ defined via Eq.~\eqref{eq:pperpv} and \eqref{eq:bilinear_proj}.
\end{lemma}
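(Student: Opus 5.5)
The plan follows the proof of Lemma~\ref{lem:projection_effective}, replacing quadratic forms by bilinear ones. First I compute the signed measure $\tilde G_{\tilde \ba,\tilde \bb}$ explicitly. Since $\bSigma_{mm}=\I_m+(\lamtilde-1)\vtilde\vtilde^\top$, its eigenvalues are $1$ with multiplicity $m-1$ and $\lamtilde$ on $\vtilde=\U_m^\top\bv/\|\Pm\bv\|$. So $\tilde G_{\tilde\ba,\tilde\bb}$ is supported on $\{1,\lamtilde\}$. The atom at $\lamtilde$ has weight $(\tilde\ba^\top\vtilde)(\tilde\bb^\top\vtilde)=\frac{(\ba^\top\Pm\bv)(\bb^\top\Pm\bv)}{\|\Pm\ba\|\|\Pm\bb\|\|\Pm\bv\|^2}$. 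The total mass is $\tilde\ba^\top\tilde\bb=\frac{\ba^\top\Pm\bb}{\|\Pm\ba\|\|\Pm\bb\|}$, so the atom at $1$ carries the difference. After normalising by $\|\ba\|,\|\bb\|$, each factor converges a.s.: $\ba^\top\Pm\bb/(\|\ba\|\|\bb\|)\to\bar p_{\ba,\bb}$ by Lemma~\ref{lem:bilinear_projection}, $\|\Pm\ba\|^2/\|\ba\|^2\to\bar p_{\ba,\ba}$ by Lemma~\ref{lem:pperpv}, and $\lamtilde\to\bar\lambda$. The continuous mapping theorem, together with $\bar p_{\ba,\ba},\bar p_{\bb,\bb},\bar p_{\bv,\bv}>0$ (implicit in $\Pm\ba,\Pm\bb\neq0$ asymptotically), gives the stated weak limit. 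Weak convergence of two-atom signed measures follows from convergence of atom locations and weights.

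Next I write $\ba^\top\Pi\bb=(\U_m^\top\ba)^\top\Pi_m(\U_m^\top\bb)=\|\Pm\ba\|\|\Pm\bb\|\,\tilde\ba^\top\Pi_m\tilde\bb$. If $\gamma_\mathrm{eff}<1$, then eventually $m<n_l$ and $\X_m$ has full column rank a.s., so $\Pi_m=\I_m$. In that case $\ba^\top\Pi\bb=\ba^\top\Pm\bb$, and Lemma~\ref{lem:bilinear_projection} gives the limit $\bar p_{\ba,\bb}$. If $\gamma_\mathrm{eff}>1$, I write $\Pi_m=\I_m-\Pi_{m,\mathrm{null}}$, so that $\tilde\ba^\top\Pi_m\tilde\bb=\tilde\ba^\top\tilde\bb-\tilde\ba^\top\Pi_{m,\mathrm{null}}\tilde\bb$. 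The first term, multiplied by the prefactors, again tends to $\bar p_{\ba,\bb}$. The prefactor of the second term tends to $\bar p_{\ba,\ba}^{1/2}\bar p_{\bb,\bb}^{1/2}$.

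The main step is the nullspace bilinear form. Conditionally on $\U_m$, $\X_m$ has i.i.d.\ $\mathcal N(0,\bSigma_{mm})$ rows. $\Pi_{m,\mathrm{null}}$ is the projector onto the zero eigenspace of $\frac1{n_l}\X_m^\top\X_m$, so $\tilde\ba^\top\Pi_{m,\mathrm{null}}\tilde\bb$ is the mass at $0$ of the signed sample spectral measure of $(\tilde\ba,\tilde\bb)$. I apply the signed version of Lemma~\ref{lem:overlap_measure}, as established in Lemma~\ref{lem:bilinear_projection} via \cite{rubio2011Spectral} and \cite{green_high-dimensional_2025}, in dimension $m$ with aspect ratio $\gamma_\mathrm{eff}$. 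This gives the limit $\int\frac{\gamma_\mathrm{eff}-1}{\gamma_\mathrm{eff}-1+\tilde\tau}d\tilde G_{\tilde\ba,\tilde\bb}(\tilde\tau)$.

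The obstacle is that $\tilde\ba,\tilde\bb$ and $\bSigma_{mm}$ are random (they depend on $\U_m$) and their spectral measure only converges rather than being fixed. I would handle this by conditioning on the whole sequence $(\U_m)_p$, exactly as in Lemma~\ref{lem:esd_projected_x}. On the probability-one event where $\tilde G_{\tilde\ba,\tilde\bb}$ converges, the vectors are deterministic. The deterministic-equivalent statement $\tilde\ba^\top(\frac1{n_l}\X_m^\top\X_m-z)^{-1}\tilde\bb-\delta(z)\to0$ holds with $\delta$ depending continuously on the (converging) population signed measure with uniformly bounded total variation. The nullspace mass is then extracted as $\lim_{z\to0}(-z)\times$ the Stieltjes transform. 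Justifying that this limit commutes with $p\to\infty$ relies on the gap between $0$ and the nonzero MP bulk when $\gamma_\mathrm{eff}\neq1$ (the singular value bounds from Step 3 of Theorem~\ref{thm:estimation_error}). Continuity of the integrand in $\tilde\tau$ near $\{1,\bar\lambda\}$ then passes to the limit measure. Finally I remove the conditioning by total probability and combine the factors by continuous mapping.
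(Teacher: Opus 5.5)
Your proposal is correct and takes essentially the same route as the paper: write $\tilde G_{\tilde\ba,\tilde\bb}$ as a two-atom measure on $\{1,\lamtilde\}$, factor $\ba^\top\Pi\bb=\|\Pm\ba\|\|\Pm\bb\|\,\tilde\ba^\top\Pi_m\tilde\bb$, use $\Pi_m=\mathbf I_m$ when $\gamma_\mathrm{eff}<1$, and handle $\gamma_\mathrm{eff}>1$ with the signed nullspace-mass formula of Lemma~\ref{lem:overlap_measure} applied to the $m$-dimensional model. You also add care the paper omits: conditioning on $(\U_m)_p$, extracting the atom at $0$ from the resolvent, and using the spectral gap to interchange $z\to0$ with $p\to\infty$. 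These fill in steps the paper only asserts by reference to Lemma~\ref{lem:projection_effective}. One correction: positivity of $\bar p_{\ba,\ba},\bar p_{\bb,\bb},\bar p_{\bv,\bv}$ is an implicit assumption of the statement, not a consequence of $\Pm\ba,\Pm\bb\neq0$ at finite $p$; in particular, $\bar p_{\bv,\bv}>0$ does not depend on $\ba,\bb$ at all.
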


\begin{proof}
    The proof follows the steps of the proof of Lemma~\ref{lem:projection_effective}. The mass distribution $\tilde G_{\tilde \ba, \tilde \bb}$ can be written as
    \[
\tilde G_{\tilde \ba, \tilde \bb}
=
\bigl(\tilde \ba^\top \tilde \bb-(\tilde \ba^\top \tilde \bv)(\tilde \bb^\top \tilde \bv)\bigr)\delta_1
+
(\tilde \ba^\top \tilde \bv)(\tilde \bb^\top \tilde \bv)\delta_{\tilde\lambda},
\]
with 
\[
(\tilde \ba^\top \tilde \bv)(\tilde \bb^\top \tilde \bv)
=
\frac{((\U_m^\top \ba)^\top(\U_m^\top \bv))((\U_m^\top \bb)^\top(\U_m^\top \bv))}
{\|\Pm \ba\|\,\|\Pm \bb\|\,\|\Pm \bv\|^2}
=
\frac{(\ba^\top \Pm \bv)(\bb^\top \Pm \bv)}
{\|\Pm \ba\|\,\|\Pm \bb\|\,\|\Pm \bv\|^2},
\]
which converges almost surely to $\frac{\bar p_{\ba, \bv} \bar p_{\bb, \bv}}{\bar p_{\ba, \ba}^{1/2} \bar p_{\bb, \bb}^{1/2}\bar p_{\bv, \bv}}$ and 
\[
\tilde \ba^\top \tilde \bb = \frac{(\U_m^\top \ba)^\top(\U_m^\top \bb)}{\|\Pm \ba\| \|\Pm \bb\|} \xrightarrow{a.s.} \frac{\bar p_{\ba, \bb}}{\bar{p}_{\ba,\ba}^{1/2} \bar{p}_{\bb,\bb}^{1/2}}
\]
This readily gives the almost sure limit of  $\tilde G_{\tilde \ba, \tilde \bb}(\tau)$.
Next, observe
\begin{equation}
\frac{\ba^\top \Pi \bb}{\| \ba\| \| \bb \|}
=
\frac{1}{\|\ba\| \| \bb \|}(\U_m^\top \ba)^\top \Pi_m (\U_m^\top \bb)
=
\frac{\|\Pm \ba\|}{\|\ba\|}\frac{\|\Pm \bb\|}{\|\bb\|} \, \tilde \ba^\top \Pi_m \tilde \bb.
\end{equation}
Combining this with the almost sure limits from Lemma~\ref{lem:pperpv}, with the same steps as in the proof of  Lemma~\ref{lem:projection_effective} and with the mass distribution of $\tilde G_{\tilde \ba, \tilde \bb}$ proves the lemma.
\end{proof}

\subsubsection{Phase transition for \texorpdfstring{$\gamma_l > 1$}{gammal > 1}}
\label{app:sharp_transition}
Here, we derive parametric equations for the two phase transitions we observe in the phase diagrams for $\alpha$ in Figure~\ref{fig:fig2}. For small downstream tasks, $\gamma_l > 1$, there is a sharp phase transition due to competing local minima at different sides of the double descent (local minimum at large vs small $\alpha$), which we tackle first. We find a proxy of the parametric curve of this phase transition by comparing the values of the error at the endpoints in $\alpha$, and check when $E^\mathrm{gen}_\infty (\alpha \to 0) = E^\mathrm{gen}_\infty(\alpha=1)$. Note that the left endpoint is not exactly 0, as we always retain the first principal component of the data, but in the high-dimensional regime $p \to \infty$, $\alpha$ approaches 0. 

We define signed normalized overlaps as
\[
\rho_{\ba,\bb}:=\frac{\ba^\top \bb}{\|\ba\|\,\|\bb\|},
\qquad
\eta_{\ba,\bb}:=\rho_{\ba,\bb}^2 .
\]
In particular, \(\eta_{\ba,\bv}\) denotes the squared population-spike overlap of a fixed vector \(\ba\) with the spike eigenvector $\bv$.
For bilinear quantities, the signs of the corresponding \(\rho\)'s must be retained.

Define the limiting value of the population and sample spike eigenvector overlap $(\bv^\top \bu_1)^2 \to c$, which depends on whether the spike is below or above the BBP transition \cite{green_high-dimensional_2025}:
\[
c := \begin{cases} \frac{1-\frac{\gamma_u}{(\lambda-1)^2}}
     {1 + \frac{\gamma_u}{\lambda-1}} & \gamma_u < (\lambda-1)^2 \\
     0 & \gamma_u \geq (\lambda-1)^2
     \end{cases}.
\]

\paragraph{Endpoint values for the eigenvalues and projections.}

At the left endpoint, meaning \(m=1\) and hence \(\alpha\to 0\), the retained direction is the empirical spike direction. Therefore, the limiting value of projections for any fixed vectors \(\ba,\bb\) are given by
\begin{align*}
&\bar p_{\ba, \ba}(0) = \eta_{\ba,\bv} c,
&\bar p_{\perp,\ba, \ba}(0) = 1-\eta_{\ba,\bv} c, \\
&\bar p_{\ba,\bb}(0) = \rho_{\ba,\bv}\rho_{\bb,\bv} c,
&\bar p_{\perp,\ba,\bb}(0)= \rho_{\ba,\bb} -\rho_{\ba,\bv}\rho_{\bb,\bv} c .
\end{align*}
At the right endpoint, \(\alpha=1\), the projection is the identity, \(\Pm=\mathbf I_p\). Hence, we have
\begin{align*}
&\bar p_{\ba, \ba}(1)=1, &\bar p_{\perp,\ba, \ba}(1)=0 \\
&\bar p_{\ba,\bb}(1)=\rho_{\ba,\bb}, &\bar p_{\perp,\ba,\bb}(1)=0 .
\end{align*}

Next, let us analyse the effective limiting projections $\bar \pi$. At \(\alpha=0\), the effective aspect ratio satisfies \(\gamma_{\mathrm{eff}}=0\). Hence the compressed task design is overdetermined, and the task projection is exact inside the retained subspace:
\begin{align*}
    &\bar\pi_{\ba, \ba}(0)=\bar p_{\ba, \ba}(0) = \eta_{\ba,\bv}c, \\
    &\bar\pi_{\ba,\bb}(0)=\bar p_{\ba,\bb}(0) = \rho_{\ba,\bv}\rho_{\bb,\bv}c .
\end{align*}
At \(\alpha=1\), we have \(\bar\lambda=\lambda\) and \(\gamma_{\mathrm{eff}}=\gamma_l\). Therefore
\begin{align*}
    \bar\pi_{\ba, \ba}(1) &= \begin{cases}
1, & \gamma_l<1, \\[6pt]
\displaystyle
\eta_{\ba,\bv}
\frac{\lambda}{\gamma_l-1+\lambda}
+
(1-\eta_{\ba,\bv})
\frac{1}{\gamma_l},
& \gamma_l>1 .
\end{cases} \\
\bar\pi_{\ba,\bb}(1)
&=
\begin{cases}
\rho_{\ba,\bb}, & \gamma_l<1, \\[6pt]
\displaystyle
\rho_{\ba,\bv}\rho_{\bb,\bv}
\frac{\lambda}{\gamma_l-1+\lambda}
+
\left(\rho_{\ba,\bb}
-
\rho_{\ba,\bv}\rho_{\bb,\bv}\right)
\frac{1}{\gamma_l},
& \gamma_l>1 .
\end{cases}
\end{align*}
Furthermore, note that at $\alpha = 1$ we have $\bar \lambda = \lambda$, and for $\alpha=0$ define: 
\begin{align*}
\bar \lambda_0 &= 1 + (\lambda - 1) c.
\end{align*}

\paragraph{Endpoint values for the generalisation error.}

At \(\alpha=1\), we have \(\Pm=\mathbf I_p\), hence
\[
\bar p_{\perp, \wstar, \wstar}=0,
\qquad
\bar p_{\perp,\wstar,\bv}=0,
\qquad
\bar\sigma_{\mathrm{eff}}^2=0 .
\]
Therefore,
\[
E^\mathrm{gen}_\infty(1)
=
\bar w^\ast \left(1-\bar\pi_{\wstar, \wstar}(1)\right)
+
\frac{\sigma^2}{|\gamma_l-1|}
+
\bar w^\ast
\sum_{i=1}^T
(\nu_i-1)
\left(
\rho_{\wstar,\bv_{\mathrm{new},i}}
-
\bar\pi_{\wstar,\bv_{\mathrm{new},i}}(1)
\right)^2
+
\sigma^2 ,
\]
where for the relevant regime for this phase transition ($\gamma_l > 1$)
\begin{align*}
\bar\pi_{\wstar, \wstar}(1) &=  \eta_{\wstar,\bv} \frac{\lambda}{\gamma_l-1+\lambda}
+
(1-\eta_{\wstar,\bv})\frac{1}{\gamma_l}, \\
\bar\pi_{\wstar,\bv_{\mathrm{new},i}}(1) &= 
\rho_{\wstar,\bv}\rho_{\bv_{\mathrm{new},i},\bv}
\frac{\lambda}{\gamma_l-1+\lambda}
+
\left(
\rho_{\wstar,\bv_{\mathrm{new},i}}
-
\rho_{\wstar,\bv}\rho_{\bv_{\mathrm{new},i},\bv}
\right)
\frac{1}{\gamma_l}.
\end{align*}

At \(\alpha=0\) (\((\gamma_{\mathrm{eff}}=0\)), the variance term in \(E^\mathrm{est}_\infty\) vanishes. Moreover,
\[
\bar\pi_{\wstar, \wstar}(0)=\eta_{\wstar,\bv}c,
\qquad
\bar\pi_{\bv, \bv}(0)=c,
\qquad
\bar p_{\perp,\wstar,\bv}(0)
=
\rho_{\wstar,\bv}(1-c).
\]
Thus, we have
\begin{align*}
E^\mathrm{gen}_\infty(0)
&=
\bar w^\ast
\left[
1-\eta_{\wstar,\bv}c
+
\frac{(\lambda-1)^2}{\bar\lambda_0^2}
\eta_{\wstar,\bv}(1-c)^2c
\right]
\quad \\
&+
\bar w^\ast
\sum_{i=1}^T
(\nu_i-1)
\left[
\rho_{\wstar,\bv_{\mathrm{new},i}}
-
\rho_{\wstar,\bv}\rho_{\bv,\bv_{\mathrm{new},i}}c
-
\frac{\lambda-1}{\bar\lambda_0}
\rho_{\wstar,\bv}(1-c)
\rho_{\bv,\bv_{\mathrm{new},i}}c
\right]^2
+
\sigma^2 .
\end{align*}
Noting that $\bar \lambda_0 =1 + (\lambda-1) c$, this simplifies further:
\begin{align*}
E^\mathrm{gen}_\infty(0)
&=
\bar w^\ast
\left[
1-\eta_{\wstar,\bv}c \frac{\lambda}{\bar \lambda_0} \left(2 - \frac{\lambda}{\bar \lambda_0} \right) +
\sum_{i=1}^T
(\nu_i-1) \left(
\rho_{\wstar,\bv_{\mathrm{new},i}} - \rho_{\wstar,\bv}\rho_{\bv,\bv_{\mathrm{new},i}}c \frac{\lambda}{\bar \lambda_0}
\right)^2\right]
+
\sigma^2.
\end{align*}

Let \(
S:=\frac{\bar w^\ast}{\sigma^2} 
\), and define
\[
r_i:=\rho_{\wstar,\bv_{\mathrm{new},i}},
\qquad
q_i:=\rho_{\wstar,\bv}\rho_{\bv,\bv_{\mathrm{new},i}},
\qquad
\eta:=\eta_{\wstar,\bv}.
\]
Then the normalized \(\alpha=0\) endpoint is
\[
B_0(c)
:=
1-\eta c \frac{\lambda}{\bar \lambda_0} \left(2 - \frac{\lambda}{\bar\lambda_0} \right)
+
\sum_{i=1}^T
(\nu_i-1)
\left(
r_i
-
q_i\frac{\lambda c}{\bar\lambda_0}
\right)^2 .
\]
Thus
\[
\frac{E^\mathrm{gen}_\infty(0)}{\bar w^\ast}
=
B_0(c)+\frac{1}{S} .
\]
For \(\alpha=1\) and \(\gamma_l>1\), define
\[
A(\gamma_l):=\frac{\lambda}{\gamma_l-1+\lambda},
\qquad
B(\gamma_l):=\frac{1}{\gamma_l}.
\]
Then
\[
B_1(\gamma_l)
:=
1-
\left[
\eta A(\gamma_l)+(1-\eta)B(\gamma_l)
\right]
+
\sum_{i=1}^T
(\nu_i-1)
\left[
r_i
-
q_i A(\gamma_l)
-
(r_i-q_i)B(\gamma_l)
\right]^2 .
\]
The phase boundary in the \((\gamma_u,\gamma_l)\)-plane is therefore
\[
\boxed{
B_0(c(\gamma_u))
=
B_1(\gamma_l)
+
\frac{1}{S(\gamma_l-1)}
}
\qquad
(\gamma_l>1).
\]
\paragraph{Specialization to $\bSigma_{\rm new}=\bSigma$.}
In this case, $T=1$, $\nu_1=\lambda$, and
\[
r_1=q_1=\rho_{\wstar,\bv},\qquad \eta=\rho_{\wstar,\bv}^2 .
\]
Hence
\begin{equation*}
\begin{aligned}
    B_0(c) &=
1+ \frac{\eta}{\bar \lambda_0^2} \left(
c\lambda^2 - 2c\lambda\bar\lambda_0
+
(\lambda-1)\left(\bar\lambda_0^2-2\bar\lambda_0\lambda c + \lambda^2 c^2\right)\right) \\
&= 1+ \eta(\lambda-1) -\frac{\eta\lambda^2 c}{1 + (\lambda-1)c},
\end{aligned}
\end{equation*}
while
\[
B_1 = (1-\eta)\frac{\gamma_l-1}{\gamma_l} + \eta \frac{\lambda \gamma_l (\gamma_l-1)}{(\gamma_l - 1 + \lambda)^2}.
\]
Thus, the phase transition is:
\[
1+ \eta(\lambda-1) -\frac{\eta\lambda^2 c}{1 + (\lambda-1)c} = (1-\eta)\frac{\gamma_l-1}{\gamma_l} + \eta \frac{\lambda \gamma_l (\gamma_l-1)}{(\gamma_l - 1 + \lambda)^2} + \frac{1}{S(\gamma_l - 1)}
\]
\paragraph{Specialization to $\bSigma_{\rm new}=\bSigma$ and $\wstar = \sqrt{\bar w^\ast}\bv$.}
Now everything simplifies even further. We have
\[
B_0 = \frac{\lambda (1 - c)}{1 + (\lambda-1)c},
\]
and 
\[
B_1 = \frac{\lambda \gamma_l (\gamma_l-1)}{(\gamma_l - 1 + \lambda)^2}.
\]
Due to the two regimes of $c$, the proxy transition splits into two components:
\[
\frac{\lambda \gamma_l (\gamma_l-1)}{(\gamma_l - 1 + \lambda)^2} + \frac{1}{S(\gamma_l-1)} = \begin{cases}
    \lambda & \lambda < 1 + \sqrt{\gamma_u} \\
    \frac{\lambda}{(\lambda-1)^2} \gamma_u  & \lambda \geq 1 + \sqrt{\gamma_u}
\end{cases}
\]

\subsubsection{Phase transition for  \texorpdfstring{$\gamma_l < 1$}{gammal < 1}}
\label{app:smooth_transition}
Next, we study the regime of larger downstream tasks $\gamma_l < 1$, and find the phase transition between the phases $\alpha = 1$ and $\alpha < 1$ by finding a parametric curve that describes the loss of stability of $\alpha=1$, i.e., when $\frac{d}{d\alpha}E^\mathrm{gen}_\infty \mid_{\alpha=1} = 0$.

\paragraph{Derivatives of the pretraining projection at \(\alpha=1\).}

Let \(\lambda_\pm=(1\pm\sqrt{\gamma_u})^2\), and define
\[
H_{\ba}(\theta)
:=
(1-\eta_{\ba,\bv})
+
\eta_{\ba,\bv}
\frac{\lambda \gamma_u}
{\lambda^2-\lambda(1-\gamma_u+\theta)+\theta}.
\]
Since
\[
g_{\ba}^{\mathrm{sample}}(\theta)
=
H_{\ba}(\theta) f_{\gamma_u}(\theta),
\]
the derivative of \(\bar p_{\perp,\ba,\ba}\) at \(\alpha=1\) is
\[
\left.\frac{d}{d\alpha}\bar p_{\perp,\ba,\ba}(\alpha)\right|_{\alpha=1^-}
=
\begin{cases}
\displaystyle
-H_{\ba}(\lambda_-),
& \gamma_u<1, \\[10pt]
\displaystyle
-\frac{G_{\ba}^{\mathrm{sample}}(\{0\})}
       {1-\frac{1}{\gamma_u}},
& \gamma_u>1 .
\end{cases}
\]
Equivalently,
\[
\left.\frac{d}{d\alpha}\bar p_{\perp,\ba,\ba}(\alpha)\right|_{\alpha=1^-}
=
\begin{cases}
\displaystyle
-
\left[
(1-\eta_{\ba,\bv})
+
\eta_{\ba,\bv}
\frac{\lambda \gamma_u}
{\lambda^2-\lambda(1-\gamma_u+\lambda_-)+\lambda_-}
\right],
& \gamma_u<1, \\[14pt]
\displaystyle
-
\left[
(1-\eta_{\ba,\bv})
+
\eta_{\ba,\bv}
\frac{\gamma_u}{\gamma_u-1+\lambda}
\right],
& \gamma_u>1 .
\end{cases}
\]
Therefore
\[
\left.\frac{d}{d\alpha}\bar p_{\ba, \ba}(\alpha)\right|_{\alpha=1^-}
=
-
\left.\frac{d}{d\alpha}\bar p_{\perp,\ba,\ba}(\alpha)\right|_{\alpha=1^-}.
\]
For signed bilinear projections, replace
\[
\eta_{\ba,\bv}
\quad\text{by}\quad
\rho_{\ba,\bv}\rho_{\bb,\bv},
\]
and replace the constant bulk contribution \(1-\eta_{\ba,\bv}\) by
\[
\rho_{\ba,\bb}-\rho_{\ba,\bv}\rho_{\bb,\bv}.
\]

\paragraph{Derivative of the effective projection at \(\alpha=1\).}

For \(\gamma_l<1\), the compressed task design remains overdetermined in a neighbourhood of \(\alpha=1\). Thus
\[
\bar\pi_{\ba, \ba}(\alpha)=\bar p_{\ba, \ba}(\alpha),
\qquad
\gamma_l<1,
\]
and hence
\[
\left.
\frac{d}{d\alpha}\bar\pi_{\ba, \ba}(\alpha)
\right|_{\alpha=1^-}
=
\left.
\frac{d}{d\alpha}\bar p_{\ba, \ba}(\alpha)
\right|_{\alpha=1^-}.
\]
\paragraph{Stability of $\alpha=1$ for $\gamma_l<1$.}
Assume $\gamma_l<1$ and let $\alpha\to 1^-$. Define
\[
D_{\wstar}:=
\left.\frac{d}{d\alpha}\bar p_{\wstar, \wstar}(\alpha)\right|_{\alpha=1^-}.
\]
At $\alpha=1$, $\bar p_{\perp, \ba, \bb} = 0$ and $\sigma^2_\mathrm{eff}=0$, thus simplifying
\[
\left.\frac{d}{d\alpha}E^\mathrm{gen}_\infty(\alpha)\right|_{\alpha=1^-}
=
\frac{\gamma_l\sigma^2}{(1-\gamma_l)^2}
-
\frac{\bar w^\ast}{1-\gamma_l}D_{\wstar}.
\]
Consequently, $\alpha=1$ is locally stable iff
\[
\left.\frac{d}{d\alpha}E^\mathrm{gen}_\infty(\alpha)\right|_{\alpha=1^-}\le 0,
\]
or equivalently
\[
\gamma_l \le \frac{S D_{\wstar}}{1+S D_{\wstar}},
\qquad
S:=\frac{\bar w^\ast}{\sigma^2}.
\]
The equality gives the local stability boundary of the full representation:
\[
D_{\wstar}
=
\begin{cases}
(1-\eta)
+
\eta\displaystyle\frac{\lambda\gamma_u}{(\lambda-1+\sqrt{\gamma_u})^2},
& \gamma_u<1,\\[12pt]
(1-\eta)
+
\eta\displaystyle\frac{\gamma_u}{\gamma_u-1+\lambda},
& \gamma_u>1,
\end{cases}
\]

\[
\boxed{
\gamma_l^{\mathrm{crit}}
=
\frac{
S\left[
(1-\eta)+\eta D_\bv
\right]
}{
1+
S\left[
(1-\eta)+\eta D_\bv
\right]
},
}
\]
where
\[
D_\bv=
\begin{cases}
\displaystyle\frac{\lambda\gamma_u}{(\lambda-1+\sqrt{\gamma_u})^2},
& \gamma_u<1,\\[12pt]
\displaystyle\frac{\gamma_u}{\gamma_u-1+\lambda},
& \gamma_u>1.
\end{cases}
\]

\subsubsection{Infinite data limits}
\label{app:theory_inf_limits}
We consider what happens to the errors and optimal $\alpha$ when $\gamma_u \to 0$ or $\gamma_l \to 0$. 

\paragraph{Prior data limit $\gamma_u \to 0$.}

In this limit, the empirical spike aligns with the population spike, and the retained subspace contains the spike direction together with an $\alpha$-fraction of the isotropic bulk.

For any fixed vectors $\ba,\bb$, we have
\begin{align*}
\bar p_{\ba, \ba}
&\to
\eta_{\ba,\bv} + (1-\eta_{\ba,\bv})\alpha, \\
\bar p_{\perp,\ba,\ba}
&\to
(1-\eta_{\ba,\bv})(1-\alpha), \\
\bar p_{\ba,\bb}
&\to
\rho_{\ba,\bv}\rho_{\bb,\bv}
+
\alpha\left(
\rho_{\ba,\bb}
-
\rho_{\ba,\bv}\rho_{\bb,\bv}
\right), \\
\bar p_{\perp,\ba,\bb}
&\to
(1-\alpha)\left(
\rho_{\ba,\bb}
-
\rho_{\ba,\bv}\rho_{\bb,\bv}
\right).
\end{align*}
In particular,
\[
\bar p_{\bv, \bv}\to1,
\qquad
\bar p_{\perp, \bv, \bv}\to0,
\qquad
\bar\lambda \to \lambda.
\]
For the effective projections $\bar\pi$, the behaviour depends on $\gamma_{\mathrm{eff}}$.

If $\gamma_{\mathrm{eff}}<1$, then
\[
\bar\pi_{\ba, \ba} \to \bar p_{\ba, \ba},
\qquad
\bar\pi_{\ba,\bb} \to \bar p_{\ba,\bb}.
\]

If $\gamma_{\mathrm{eff}}>1$, define
\[
A_\lambda := \frac{\lambda}{\gamma_{\mathrm{eff}}-1+\lambda},
\qquad
A_1 := \frac{1}{\gamma_{\mathrm{eff}}},
\]
then
\begin{align*}
\bar\pi_{\ba, \ba}
&\to
\eta_{\ba,\bv} A_\lambda
+
(1-\eta_{\ba,\bv})\alpha A_1, \\
\bar\pi_{\ba,\bb}
&\to
\rho_{\ba,\bv}\rho_{\bb,\bv} A_\lambda
+
\alpha\left(
\rho_{\ba,\bb}
-
\rho_{\ba,\bv}\rho_{\bb,\bv}
\right) A_1.
\end{align*}

\paragraph{Errors in the limit $\gamma_u\to0$.}

In the infinite-prior-data limit, the empirical spike is recovered exactly. Hence
\[
\bar p_{\bv, \bv}\to1,\qquad 
\bar p_{\perp,\wstar,\bv}\to0,\qquad 
\bar\lambda\to\lambda,
\]
and
\[
\bar p_{\wstar, \wstar}\to \eta+(1-\eta)\alpha,
\qquad
\bar p_{\perp, \wstar, \wstar}\to (1-\eta)(1-\alpha).
\]
Therefore
\[
\bar\sigma_{\mathrm{eff}}^2
\to
(1-\eta)(1-\alpha).
\]

Since the leak term vanishes,
\[
\boxed{
E^\mathrm{est}_\infty
\to
\bar w^\ast \left(1-\bar\pi_{\wstar, \wstar}\right)
+
\frac{\min\{\gamma_{\mathrm{eff}},1\}}
{|\gamma_{\mathrm{eff}}-1|}
\left[
\bar w^\ast(1-\eta)(1-\alpha)+\sigma^2
\right],
}
\]
where
\[
\bar\pi_{\wstar, \wstar}
=
\begin{cases}
\eta+(1-\eta)\alpha,
& \gamma_{\mathrm{eff}}<1,\\[6pt]
\displaystyle
\eta\frac{\lambda}{\gamma_{\mathrm{eff}}-1+\lambda}
+
(1-\eta)\frac{1}{\gamma_l},
& \gamma_{\mathrm{eff}}>1.
\end{cases}
\]

Moreover, taking $\bSigma_\mathrm{new} = \bSigma$, we obtain
\[
\boxed{
E^\mathrm{gen}_\infty \to E^\mathrm{est}_\infty
+
\bar w^\ast (\lambda -1) \eta
\left(\frac{\gamma_\mathrm{eff} - 1}{\gamma_\mathrm{eff} - 1 + \lambda}
\right)^2
+\sigma^2.
}
\]
Finally, the training error satisfies
\[
\boxed{
E^\mathrm{train}_\infty
\to
\left[
\sigma^2+\bar w^\ast(1-\eta)(1-\alpha)
\right]
\max\{0,1-\gamma_{\mathrm{eff}}\}.
}
\]

\paragraph{Downstream-data limit $\gamma_l \to 0$.}

For any fixed $\alpha>0$, we have $\gamma_{\mathrm{eff}}=\alpha\gamma_l\to0$, hence
\[
\bar\pi_{\ba, \ba} \to \bar p_{\ba, \ba},
\qquad
\bar\pi_{\ba,\bb} \to \bar p_{\ba,\bb}.
\]

Moreover, the variance factor satisfies
\[
\frac{\min\{\gamma_{\mathrm{eff}},1\}}{|\gamma_{\mathrm{eff}}-1|}
=
\frac{\alpha\gamma_l}{1-\alpha\gamma_l}
\to 0,
\]
so all downstream finite-sample variance contributions vanish.

The pretraining projections $\bar p$ remain determined by $\gamma_u$.

At $\alpha=1$,
\[
\bar p_{\ba, \ba}=1,
\qquad
\bar p_{\perp,\ba,\ba}=0,
\qquad
\bar\pi_{\ba, \ba}\to1,
\]
and
\[
\bar p_{\ba,\bb}=\rho_{\ba,\bb},
\qquad
\bar p_{\perp,\ba,\bb}=0,
\qquad
\bar\pi_{\ba,\bb}\to\rho_{\ba,\bb}.
\]

At the spike-retaining endpoint $\alpha\to0$,
\[
\bar\pi_{\ba, \ba}(0)\to\bar p_{\ba, \ba}(0),
\qquad
\bar\pi_{\ba,\bb}(0)\to\bar p_{\ba,\bb}(0),
\]
with $\bar p$ determined by the prior overlap $c(\gamma_u)$.

\paragraph{Errors in the limit $\gamma_l\to0$.}

For fixed $\alpha>0$, $\gamma_{\mathrm{eff}}=\alpha\gamma_l\to0$. Hence
\[
\bar\pi_{\ba, \ba}\to \bar p_{\ba, \ba},
\qquad
\bar\pi_{\ba,\bb}\to \bar p_{\ba,\bb},
\]
and
\[
\frac{\min\{\gamma_{\mathrm{eff}},1\}}{|\gamma_{\mathrm{eff}}-1|}
\to0.
\]

Therefore the estimation error satisfies
\[
\boxed{
E^\mathrm{est}_\infty
\to
\bar w^\ast
\left[
1-\bar p_{\wstar, \wstar}
+
\frac{(\lambda-1)^2}{\bar\lambda^2}
\bar p_{\perp,\wstar,\bv}^{\,2}\bar p_{\bv, \bv}
\right].
}
\]

The generalisation error satisfies
\begin{equation}\label{eq:inf_downstream_gen}
\boxed{
E^\mathrm{gen}_\infty
\to
E^\mathrm{est}_\infty
+
\bar w^\ast
\sum_{i=1}^T(\nu_i-1)
\left(
\rho_{\wstar,\bv_{\mathrm{new},i}}
-
\bar p_{\wstar,\bv_{\mathrm{new},i}}
-
\frac{\lambda-1}{\bar\lambda}
\bar p_{\perp,\wstar,\bv}
\bar p_{\bv,\bv_{\mathrm{new},i}}
\right)^2
+\sigma^2.
}
\end{equation}

The training error satisfies
\begin{equation}
    \boxed{
E^\mathrm{train}_\infty
\to
\sigma^2
+
\bar w^\ast
\left[
\bar p_{\perp, \wstar, \wstar}
+
\frac{\lambda-1}{\bar\lambda}
\bar p_{\perp,\wstar,\bv}^{\,2}
\right].
}
\end{equation}

\textbf{Specialization to $\bSigma_\textrm{new} = \bSigma$.} We further specialize the result in the limit to the setting where the validation data is the same as pretraining and downstream data. 
\begin{corollary}
\label{cor:infinite_unlabelled_limit}
Assume $\gamma_u=p/n_u\to0$, so that the empirical spike direction converges to the population spike $\bv$. Let \(
\eta=\eta_{\wstar,\bv}, \;
\gamma_{\mathrm{eff}}=\alpha\gamma_l, \;
S=\bar w^\ast/\sigma^2
\). Then the estimation error becomes
\[
E^\mathrm{est}_\infty
\to
\bar w^\ast(1-\bar\pi_{\wstar, \wstar})
+
\frac{\min\{\gamma_{\mathrm{eff}},1\}}
{|\gamma_{\mathrm{eff}}-1|}
\left[
\bar w^\ast(1-\eta)(1-\alpha)+\sigma^2
\right],
\]
where
\[
\bar\pi_{\wstar, \wstar}
=
\begin{cases}
\eta+(1-\eta)\alpha,
& \gamma_{\mathrm{eff}}<1,\\[6pt]
\displaystyle
\eta\frac{\lambda}{\gamma_{\mathrm{eff}}-1+\lambda}
+
(1-\eta)\frac{\alpha}{\gamma_{\mathrm{eff}}},
& \gamma_{\mathrm{eff}}>1.
\end{cases}
\]

For matched test covariance $\bSigma_{\mathrm{new}}=\bSigma$,
\[
E^\mathrm{gen}_\infty
\to
E^\mathrm{est}_\infty
+
\bar w^\ast(\lambda-1)\eta
\begin{cases}
0,
& \gamma_{\mathrm{eff}}<1,\\[6pt]
\displaystyle
\left(
\frac{\gamma_{\mathrm{eff}}-1}
{\gamma_{\mathrm{eff}}-1+\lambda}
\right)^2,
& \gamma_{\mathrm{eff}}>1,
\end{cases}
+\sigma^2 .
\]
Finally,
\[
E^\mathrm{train}_\infty
\to
\left[
\sigma^2+\bar w^\ast(1-\eta)(1-\alpha)
\right]
\max\{0,1-\gamma_{\mathrm{eff}}\}.
\]
\end{corollary}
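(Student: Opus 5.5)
The plan is to substitute the $\gamma_u\to0$ limits of the pretraining projections, derived just above, into Theorems~\ref{thm:estimation_error}, \ref{thm:generalisation_error} and \ref{thm:training_error}, and to simplify. First, take $\gamma_u\to0$ in Lemma~\ref{lem:overlap_measure}. The atom weight at $\theta^\ast$ tends to $G_\ba(\{\lambda\})=\eta_{\ba,\bv}$, the bulk factor multiplying $f_{\gamma_u}$ vanishes on the spike part, and the Marcenko--Pastur bulk collapses to a point mass at $1$. The retained fraction $\alpha$ of the bulk therefore carries exactly a fraction $\alpha$ of the non-spike mass.

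This gives $\bar p_{\ba,\ba}=\eta_{\ba,\bv}+(1-\eta_{\ba,\bv})\alpha$ and $\bar p_{\perp,\wstar,\bv}=(1-\alpha)(\rho_{\wstar,\bv}-\rho_{\wstar,\bv}\cdot1)=0$. Hence $\bar p_{\bv,\bv}=1$, $\bar\lambda=\lambda$, the leak term vanishes, and $\bar\sigma^2_{\mathrm{eff}}=\bar p_{\perp,\wstar,\wstar}=(1-\eta)(1-\alpha)$. Inserting these values into Theorem~\ref{thm:training_error} gives the training error immediately. Inserting them into the variance and missing-signal terms of Theorem~\ref{thm:estimation_error} gives the estimation error, once $\bar\pi_{\wstar,\wstar}$ is identified.

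To compute $\bar\pi_{\wstar,\wstar}$, use Lemma~\ref{lem:projection_effective}. If $\gamma_{\mathrm{eff}}<1$, it equals $\bar p_{\wstar,\wstar}=\eta+(1-\eta)\alpha$. If $\gamma_{\mathrm{eff}}>1$, $\tilde G_{\tilde\wstar}$ puts weight $\eta/\bar p_{\wstar,\wstar}$ at $\lambda$ and the remainder at $1$. Therefore
\[
\bar\pi_{\wstar,\wstar}=\bar p_{\wstar,\wstar}-\eta\tfrac{\gamma_{\mathrm{eff}}-1}{\gamma_{\mathrm{eff}}-1+\lambda}-(1-\eta)\alpha\tfrac{\gamma_{\mathrm{eff}}-1}{\gamma_{\mathrm{eff}}}
=\eta\tfrac{\lambda}{\gamma_{\mathrm{eff}}-1+\lambda}+(1-\eta)\tfrac{\alpha}{\gamma_{\mathrm{eff}}}.
\]

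For the generalisation error with $\bSigma_{\mathrm{new}}=\bSigma$, set $T=1$, $\nu_1=\lambda$ and $\bv_{\mathrm{new},1}=\bv$. Since the leak vanishes, the extra term in Theorem~\ref{thm:generalisation_error} is $\bar w^\ast(\lambda-1)(\rho_{\wstar,\bv}-\bar\pi_{\wstar,\bv})^2$. Lemma~\ref{lem:projection_effective_bilinear} with $\bb=\bv$ puts all the signed mass at $\lambda$, because $\bar p_{\bv,\bv}=1$ and $\bar p_{\wstar,\bv}=\rho_{\wstar,\bv}$. This gives $\bar\pi_{\wstar,\bv}=\rho_{\wstar,\bv}$ for $\gamma_{\mathrm{eff}}<1$, and $\rho_{\wstar,\bv}\lambda/(\gamma_{\mathrm{eff}}-1+\lambda)$ for $\gamma_{\mathrm{eff}}>1$. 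Squaring the difference produces $\eta\bigl(\frac{\gamma_{\mathrm{eff}}-1}{\gamma_{\mathrm{eff}}-1+\lambda}\bigr)^2$ in the second case and $0$ in the first.

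The main obstacle is justifying the interchange of the limit $\gamma_u\to0$ with the formulas, which are stated for $\gamma_u\in(0,\infty)$. The threshold $\lambda_t$ and the bulk integral in Lemma~\ref{lem:pperpv} must be shown to converge continuously to the point-mass values. I would handle this by noting that the density factor $H_\ba(\theta)\to1-\eta_{\ba,\bv}$ uniformly on the shrinking support $[\lambda_-,\lambda_+]\to\{1\}$. It then follows that $\int_{\lambda_-}^{\lambda_t}g^{\mathrm{sample}}_\ba\to(1-\eta_{\ba,\bv})(1-\alpha)$, while the spike atom weight $c(\gamma_u)\to1$.
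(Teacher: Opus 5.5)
Your proposal is correct and follows the same route as the paper. You take the $\gamma_u\to0$ limits of the pretraining overlaps ($c(\gamma_u)\to1$, $\bar p_{\perp,\wstar,\bv}\to0$, $\bar\lambda\to\lambda$, $\bar\sigma^2_{\mathrm{eff}}\to(1-\eta)(1-\alpha)$), substitute them into Theorems~\ref{thm:estimation_error}--\ref{thm:training_error}, and evaluate $\bar\pi_{\wstar,\wstar}$ and $\bar\pi_{\wstar,\bv}$ through Lemmas~\ref{lem:projection_effective} and~\ref{lem:projection_effective_bilinear}, all correctly; your continuity justification for the bulk integral, which the paper leaves implicit, can even be simplified, because the non-spike part of $\int_{\lambda_-}^{\lambda_t} g^{\mathrm{sample}}_\ba$ equals $(1-\eta_{\ba,\bv})(1-\alpha)$ exactly for every $\gamma_u<1$, so only the $O(\gamma_u)$ spike contribution needs a limit.
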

Consequently, the phase transitions in this limit can either be derived by taking derivatives and limiting values $\alpha$, or by substituting the limits directly in Corollaries \ref{cor:sharp_transition} and \ref{cor:smooth_transition}. In this regime, the phase boundaries become independent of $\gamma_u$ and depend only on $\eta$, $\mathrm{SNR}$, $\lambda$, and $\gamma_l$. 

\begin{corollary}[Phase transitions for $\gamma_u\to 0$]
\label{cor:phases_infinite_unlabelled_limit}
In the limit $\gamma_u\to0$, the spike direction is recovered exactly. Hence,
$c(\gamma_u)\to1$ and $D_\bv(\gamma_u)\to0$. Thus,  the transition curve when
$\gamma_l>1$ simplifies to
\[
(1-\eta)\frac{1}{\gamma_l} =
\eta\frac{\lambda \gamma_l(\gamma_l-1)}
{(\gamma_l-1+\lambda)^2} + \frac{1}{S(\gamma_l-1)}.
\]
Furthermore, when $\gamma_l<1$, the stability boundary reduces to
\[
\gamma_l^{\mathrm{crit}}
=
\frac{S(1-\eta)}
{1+S(1-\eta)}.
\]
\end{corollary}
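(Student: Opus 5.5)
The plan is to substitute the limiting values $c(\gamma_u)\to 1$ and $D_\bv(\gamma_u)\to 0$ into the two transition conditions already derived: the endpoint-comparison curve of Corollary~\ref{cor:sharp_transition} (equivalently the boxed relation $B_0(c)=B_1(\gamma_l)+\frac{1}{S(\gamma_l-1)}$ of Appendix~\ref{app:sharp_transition}) and the stability threshold of Corollary~\ref{cor:smooth_transition}. Both are continuous in $\gamma_u$ near $0$, so taking the limit is legitimate. I would cross-check each answer against Corollary~\ref{cor:infinite_unlabelled_limit}.

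\textbf{Step 1 (limits of the ingredients).} From the definition
$c(\gamma_u)=\mathbf 1_{\gamma_u<(\lambda-1)^2}\frac{1-\gamma_u/(\lambda-1)^2}{1+\gamma_u/(\lambda-1)}$,
for $\gamma_u$ small the indicator equals $1$ and both numerator and denominator tend to $1$, so $c\to1$. For $D_\bv$, only the branch $\gamma_u<1$ is relevant, and $\frac{\lambda\gamma_u}{(\lambda-1+\sqrt{\gamma_u})^2}\to 0$ because $\lambda>1$ keeps the denominator near $(\lambda-1)^2>0$. These are exactly the statements $c(\gamma_u)\to1$ and $D_\bv(\gamma_u)\to0$ claimed in the corollary.

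\textbf{Step 2 ($\gamma_l>1$).} Set $c=1$ in the left side of \eqref{eq:curve}. Then $\bar\lambda_0=\lambda$, and
\[
1+\eta(\lambda-1)-\eta\frac{\lambda^2}{\lambda}=1-\eta .
\]
Moving the term $(1-\eta)\frac{\gamma_l-1}{\gamma_l}$ from the right side gives
\[
(1-\eta)\Bigl(1-\frac{\gamma_l-1}{\gamma_l}\Bigr)=(1-\eta)\frac{1}{\gamma_l}
\]
on the left, with the remaining right-hand terms unchanged. This is the stated curve.

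As a sanity check, I would compare with Corollary~\ref{cor:infinite_unlabelled_limit}. At $\alpha\to0$ with $\gamma_{\mathrm{eff}}<1$, we get $E^{\mathrm{gen}}_\infty/\bar w^\ast=(1-\eta)+1/S$ after normalising by $\bar w^\ast$. The $\alpha=1$ endpoint $B_1+\frac{1}{S(\gamma_l-1)}+\frac1S$ is unaffected by $\gamma_u$ anyway, since $\Pm=\mathbf I$ there.

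\textbf{Step 3 ($\gamma_l<1$).} Substitute $D_\bv=0$ into $\gamma_l^{\mathrm{crit}}$. This gives $S(1-\eta)/(1+S(1-\eta))$. As a consistency check, I would differentiate the $\gamma_{\mathrm{eff}}<1$ branch of Corollary~\ref{cor:infinite_unlabelled_limit} at $\alpha=1^-$. Using $\bar\pi_{\wstar,\wstar}=\eta+(1-\eta)\alpha$ and the variance $\frac{\alpha\gamma_l}{1-\alpha\gamma_l}\bigl[\bar w^\ast(1-\eta)(1-\alpha)+\sigma^2\bigr]$, the derivative is
\[
-\bar w^\ast(1-\eta)+\frac{\gamma_l\sigma^2}{(1-\gamma_l)^2}-\frac{\gamma_l}{1-\gamma_l}\bar w^\ast(1-\eta)
=\frac{\gamma_l\sigma^2}{(1-\gamma_l)^2}-\frac{\bar w^\ast(1-\eta)}{1-\gamma_l}.
\]
This quantity is $\le0$ exactly when $\gamma_l\le\frac{S(1-\eta)}{1+S(1-\eta)}$, which agrees with the substitution.

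\textbf{Main obstacle.} The only delicate point is justifying that the limit $\gamma_u\to0$ commutes with the transition conditions. Those conditions were derived for fixed $\gamma_u\in(0,\infty)$, while the assumptions require $\gamma_u>0$. I would handle this by treating the corollary as a statement about the limiting curves, which are explicit rational or algebraic functions of $c$ and $D_\bv$ and hence continuous. The direct derivation via Corollary~\ref{cor:infinite_unlabelled_limit} then serves as independent confirmation.
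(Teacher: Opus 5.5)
Your proposal is correct and follows the paper's route. The paper also obtains this corollary by substituting $c(\gamma_u)\to1$ and $D_\bv(\gamma_u)\to0$ into Corollaries~\ref{cor:sharp_transition} and~\ref{cor:smooth_transition}, and it names evaluating and differentiating the $\gamma_u\to0$ errors of Corollary~\ref{cor:infinite_unlabelled_limit} as the alternative derivation, which is exactly your cross-check; your algebra checks out, including the left side of \eqref{eq:curve} reducing to $1-\eta$ and the derivative $\frac{\gamma_l\sigma^2}{(1-\gamma_l)^2}-\frac{\bar w^\ast(1-\eta)}{1-\gamma_l}$ at $\alpha=1^-$.
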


\subsection{Theory-experiment validation}
Below, we present additional plots validating the theoretical predictions through numerical experiments. We introduce an overlap measure for the task-spike alignment $\theta$, which can be defined through the overlap $\eta$ as
\[
\frac{(\wstar^\top \bv)^2}{\|\wstar\|^2} = \eta = (\cos \theta) ^2.
\]

\label{app:Theory-Experiment Validation}
\begin{figure}[t]
    \centering
    \includegraphics[width=\linewidth]{figures/theory_vs_experiment_gen_error_sweep_2x4.png}
    \caption{Generalisation error of PCR as a function of the number of retained components $\alpha= \frac{m}{p}$. Solid lines show the theoretical prediction; dashed lines show the empirical mean $\pm$ one standard deviation across trials. Each column varies one parameter while holding the others fixed: \textbf{(a,e)} varying the pre-training dataset size $n_u$ with $n_l$ fixed; \textbf{(b,f)} varying the downstream dataset size $n_l$ with $n_u$ fixed; \textbf{(c,g)} varying the signal-to-noise ratio SNR; \textbf{(d,h)} varying the spike eigenvalue $\lambda$. The two rows correspond to two regimes: smaller (top) and larger (bottom) values of $n_u$ and $n_l$. Results are shown for a rank-1 spiked covariance model with perfectly aligned signal ($\eta = 1$).}
    \label{fig:gen_error_full}
\end{figure}
\subsubsection{Error curves}\label{app:errcurves}
Figure~\ref{fig:gen_error_full} illustrates the range of generalisation error curves across parameter settings. When the downstream sample size is limited ($n_l < p$), the error exhibits a double descent profile, with the peak occurring at $\alpha \approx 1/\gamma_l$. Increasing the size of the pretraining dataset $n_u$ uniformly reduces the overall error scale (panels \textbf{(a,e)}), while varying $n_l$ shifts the location of the interpolation threshold (panels \textbf{(b,f)}).
Increasing the SNR raises the generalisation error (panels \textbf{(c,g)}), as it amplifies the contribution of signal-dependent bias. This increase is non-uniform across $\alpha$: smaller values of $\alpha$ are affected more strongly, since a larger fraction of the signal lies outside the retained subspace. A similar non-uniform effect is observed when varying the spike strength $\lambda$ (panels \textbf{(d,h)}).

We further decompose these effects by examining the estimation error components in Figure~\ref{fig:app_fig_estimations}. The bias due to missing signal is determined by the downstream sample regime. In the overparametrized regime ($n_l < p$), this bias admits a strictly positive lower bound (panel \textbf{(a)}), reflecting the inability to fully reconstruct the signal. In contrast, when $n_l > p$, this component can vanish (panel \textbf{(e)}). Independently of this constraint, the bias decreases with $\alpha$, since increasing $\alpha$ enlarges the subspace $\Pm$ and thus increases the fraction of $\wstar$ that can be represented.
The leak-induced bias follows a similar decreasing trend in $\alpha$, as it is controlled by $\bar p_{\perp,\wstar,\bv}$, which shrinks as the orthogonal complement $\Pperp$ becomes smaller. However, its dependence on $\lambda$ is non-monotonic (panels \textbf{(b,f)}): it is negligible for weak spikes (e.g.\ $\lambda=1.1$), increases at intermediate values (e.g.\ $\lambda=2,5$), and decreases again for strong spikes (e.g.\ $\lambda=10$). This reflects a trade-off between decreasing leak size $\bar p_{\perp,\wstar,\bv}$ and increasing the effective component of the leak vector $\bv$ in the subspace $\X_l \Pm$ through $\bar\pi_{\bv}$. For the parameter ranges considered, the magnitude of the leak term remains small relative to the other components.
The variance behaves as expected: in the regime $n_l < p$, it decreases monotonically with $\lambda$, reflecting improved signal-to-noise separation (panels \textbf{(c,g)}).

Finally, the generalisation error (Figure~\ref{fig:app_fig_generlisation}) includes two additional contributions relative to the estimation error. The first is the additive noise variance $\sigma^2$ (panels \textbf{(c,g)}). The second is an additional bias term induced by the spike structure in the test distribution. This term combines a residual missing signal component with a correction from the leak, and it is amplified by $\lambda$. Its structure leads to a discontinuity at $\alpha = \gamma_l$ (panel \textbf{(b)}), inherited from the behaviour of the ``missing signal'' bias. Overall, the generalisation error closely tracks the estimation error, with the main differences appearing at small $\alpha$, where these additional bias contributions are most pronounced (panels \textbf{(e,f)}).

\begin{figure}[h!]
    \centering
    \includegraphics[width=\linewidth]{figures/app_fig_estimations.png}
   \caption{Components of the theoretical estimation error as a function of $\alpha$ for different values of $\lambda$. The total error (panels \textbf{(d, h)}) is decomposed into the missing signal term (panels \textbf{(a, e)}), the leak term (panels \textbf{(b, f)}), the variance term (panels \textbf{(c, g)}). All plots assume fixed parameters $n_u=311$, $n_l=[311, 724]$, $\mathrm{SNR} = \frac{\|\wstar \|^2}{\sigma^2}$, and a target signal misaligned with the spike eigenvector ($\eta = 0.75$). The number of features is set to $p=500$.}
    \label{fig:app_fig_estimations}
\end{figure}

\begin{figure}[h!]
    \centering
    \includegraphics[width=\linewidth]{figures/app_fig_generalisation.png}
    \caption{Components of the theoretical generalisation error as a function of $\alpha$ for different values of $\lambda$. The total error (panels \textbf{(d, h)}) is decomposed into the estimation error (panels \textbf{(a, e)}), the missing signal (panels \textbf{(b, f)}), and the leak term \textbf{(c, g)}. All plots assume fixed parameters $n_u=311$, $n_l=[311, 724]$, $\mathrm{SNR} = \frac{\|\wstar \|^2}{\sigma^2}$, and a target signal misaligned with the spike eigenvector ($\eta = 0.75$). The number of features is set to $p=500$.}
    \label{fig:app_fig_generlisation}
\end{figure}

\subsubsection{Optimal dimensionality}
\label{app:Optimal}
In Figures \ref{fig:app_fig_alpha_SNR}, \ref{fig:app_fig_alpha_SNR_2},  \ref{fig:app_fig_alpha_lambda} and \ref{fig:app_fig_alpha_lambda_2}, we present various phase portraits in $\alpha$, reflecting their dependence on the signal-to-noise ratio (SNR), the spike strength ($\lambda$), and the task overlap. 
For the optimal $\alpha$ minimizing the training error (Figures \ref{fig:app_fig_alpha_SNR_2} and \ref{fig:app_fig_alpha_lambda_2}), the SNR, task overlap, and spike strength have little effect.
In contrast, for the optimal $\alpha$ minimizing the generalisation error, the phase-transition boundaries and the size of the bulk region are highly sensitive to the task overlap, SNR, and spike strength ($\lambda$). Figures \ref{fig:app_fig_alpha_SNR} and \ref{fig:app_fig_alpha_lambda} show that the phase-transition boundary flattens into a horizontal line at a fixed $n_l$ \emph{(i)} as the task-spike overlap decreases, \emph{(ii)} as the SNR increases or \emph{(iii)} as the spike strength decreases. At the same time, the bulk region expands along the $n_u$ axis, thereby reducing the region of parameter space in which compression is used.
This behaviour can be explained by the fact that a weaker signal is harder to detect, leading to a less accurate estimate of the prior. Consequently, exploiting this poorly-estimated prior through compression is not advantageous across many regimes, even when the amount of new data is large.

\begin{figure}[h]
\begin{center}
\includegraphics[width=1\textwidth]{./figures/app_fig_alpha_SNR.png}
\end{center}
\caption{Optimal value of $\alpha$ that minimizes the generalisation error, shown as a function of SNR and spike alignment $\eta$. We fix $\lambda = 2$. Red lines indicate the approximate phase transitions.} 
\label{fig:app_fig_alpha_SNR}
\end{figure}

\begin{figure}[h]
\begin{center}
\includegraphics[width=1\textwidth]{./figures/app_fig_alpha_SNR_2.png}
\end{center}
\caption{Optimal value of $\alpha$ that minimizes the training error, shown as a function of SNR and spike alignment $\eta$. We fix $\lambda = 2$.} 
\label{fig:app_fig_alpha_SNR_2}
\end{figure}

\begin{figure}[h!]
\begin{center}
\includegraphics[width=1\textwidth]{./figures/app_fig_alpha_lambda.png}
\end{center}
\caption{Optimal value of $\alpha$ that minimizes the generalisation error, shown as a function of spike strength $\lambda$ and spike alignment $\theta$ with $\w^*$. We fix $\mathrm{SNR} =9$. Red lines indicate the approximate phase transitions.} \label{fig:app_fig_alpha_lambda}
\end{figure}

\begin{figure}[h!]
\begin{center}
\includegraphics[width=1\textwidth]{./figures/app_fig_alpha_lambda_2.png}
\end{center}
\caption{Optimal value of $\alpha$ that minimizes the training error, shown as a function of spike strength $\lambda$ and spike alignment $\theta$ with $\w^*$. We fix $\mathrm{SNR}=9$.} \label{fig:app_fig_alpha_lambda_2}
\end{figure}

\newpage

\subsection{Additional experiments}
\label{app:experiments}

We provide additional figures from the experiments outlined in the main text in Section \ref{sec:experiements}, as well as an additional experiment directly comparing the effectiveness of pretrained regression.

\subsubsection{Comparison with regression and principal component regression}
\label{app:compare_regressions}

We compare three approaches: standard regression, pretrained regression (PR) where PCA is learned on prior data and transferred to the downstream task, and principal component regression (PCR), where PCA is learned directly on the downstream data.

In the fully aligned setting ($\wstar \parallel \bv$, $\cos\theta = 1$), Figure~\ref{fig:compare_regressions} (left) shows that incorporating a PCA step consistently improves performance over standard regression. Moreover, when the pretraining dataset is larger than the downstream dataset ($n_u > n_l$), PR outperforms PCR, reflecting the more accurate estimation of the signal subspace from additional data.

In the misaligned setting ($\cos\theta = 1/\sqrt{2}$, right panel in Figure~\ref{fig:compare_regressions}), the advantage of PR is restricted to regimes with large $n_u$ and small $n_l$. As $n_l$ increases, PCR becomes comparable to PR, and both methods approach the performance of standard regression (within a small error gap). This reflects the diminishing benefit of transferring a pretrained subspace when sufficient downstream data is available to estimate the relevant structure directly.

\begin{figure}[h!]
\begin{center}
\includegraphics[width=0.8\textwidth]{figures/compare_models_aligned_vs_misaligned.pdf}
\end{center}
\caption{Heatmaps over the $(n_u, n_l)$ grid ($p=500$, $\mathrm{SNR}=1.8$, $\lambda=5$) showing which method -- standard regression (violet), pretrained regression (PR, blue) or principal component regression (PCR, orange) -- achieves lower generalisation error. Forward hatching (//) marks regions where PR and PCR are tied (within 2\% error); cross hatching (×) marks regions where all three methods (PR, PCR, and standard regression) are tied. When the task is aligned with the pretraining signal, PR wins for $n_u > n_l$ and PCR wins otherwise, with the boundary tracking the diagonal. Under misalignment, PCR dominates across most of the grid; PR retains an advantage when $n_u \gg n_l$.} \label{fig:compare_regressions}
\end{figure}

\subsubsection{Additional autoencoder experiments}\label{app:autoencoder}

Figure~\ref{fig:app_autoencoders_larger_snr} extends panels \textbf{(a-d)} of Figure~\ref{fig:fig3}. 
The top row shows simulations that directly match our theoretical setting (PCA followed by regression), while the middle and bottom rows replace PCA with unsupervised representation learning using a linear and a nonlinear one-hidden-layer autoencoder, respectively. The linear autoencoder closely reproduces the theoretical predictions in both the spiked identity and general covariance settings, confirming that it effectively recovers the same subspace as PCA in these regimes.

We next examine the effect of reducing the SNR in Figure~\ref{fig:app_autoencoders_smaller_snr}. The phase boundaries shift in the same qualitative manner as predicted by the theory, where the optimal $\alpha$ is low across a larger part of the $(n_u, n_l)$ space, indicating that the dependence on SNR is captured beyond the exact PCA setting.

Finally, we analyse deviations from PCA in low-sample regimes in Figure~\ref{fig:app_autoencoders_overlaps}. We compare the theoretical overlap $\Pm \bv$ with the empirical overlap obtained by passing $\bv$ through trained linear and nonlinear autoencoders with varying bottleneck dimension. The linear autoencoder deviates from PCA only when $n_u < p$, while the nonlinear autoencoder exhibits larger discrepancies across a broader range of $n_u$. 

Despite these deviations, the optimal bottleneck dimension that minimizes both training and generalisation error remains close to $\alpha \approx 1$ when $n_u < p$. As a result, the performance of linear autoencoder-based pipelines is still well predicted by the theory across all $n_u$. In contrast, the nonlinear autoencoder exhibits a persistent mismatch for $n_u > p$, which only diminishes for sufficiently large pretraining datasets (empirically around $n_u \approx 3p$). The magnitude of this mismatch also depends on SNR, as seen by comparing Figures~\ref{fig:app_autoencoders_larger_snr} and~\ref{fig:app_autoencoders_smaller_snr}.

\begin{figure}[h!]
\begin{center}
\includegraphics[width=\textwidth]{figures/app_fig_autoencoders_larger_snr.pdf}
\end{center}
\caption{Extended version of panels \textbf{(a-d)} in Figure~\ref{fig:fig3}: optimal bottleneck size in autoencoders on inputs with different covariance structure. We compare PCA on pretraining data (top row) corresponding to our theoretical setup, a linear autoencoder (middle row) and a non-linear autoencoder (bottom row) trained on Gaussian data, with varying bottleneck size $m$ ($\alpha = m / p$). The input covariance on the first two columns is $\bSigma= \mathbf I + \lambda \bv \bv^\top$, while in the last two columns the covariance is a Toeplitz matrix $\mathbf H$ with a spike, i.e., $\bSigma = \mathbf H + \lambda \bv \bv^\top$, where $H_{i,j} = \rho^{|i-j|}$ with $\rho=0.5$. The autoencoder results match those of PCA, and the identity vs general covariance exhibit a similar phenomenology. We fix $\mathrm{SNR}=9.0, \lambda=5.0$. } 
\label{fig:app_autoencoders_larger_snr}
\end{figure}

\begin{figure}[h!]
\begin{center}
\includegraphics[width=\textwidth]{figures/app_fig_autoencoders_smaller_snr.pdf}
\end{center}
\caption{Effect of SNR on optimal bottleneck size in autoencoders on inputs with different covariance structure. We compare PCA on pretraining data (top row) corresponding to our theoretical setup, a linear autoencoder (middle row) and a non-linear autoencoder (bottom row) trained on Gaussian data, with varying bottleneck size $m$ ($\alpha = m / p$). The input covariance on the first two columns is $\bSigma= \mathbf I + \lambda \bv \bv^\top$, while in the last two columns the covariance is a Toeplitz matrix $\mathbf H$ with a spike,  i.e., $\bSigma = \mathbf H + \lambda \bv \bv^\top$, where $H_{i,j} = \rho^{|i-j|}$ with $\rho=0.5$. The autoencoder results match those of PCA, and the identity vs general covariance exhibit a similar phenomenology. We fix $\mathrm{SNR}=1.8, \lambda=5.0$. } 
\label{fig:app_autoencoders_smaller_snr}
\end{figure}

\begin{figure}[h!]
    \centering
    \includegraphics[width=\linewidth]{figures/app_fig_autoencoders_overlaps.pdf}
    \caption{Spike direction recovery as a function of sample ratio $n_u / p$ for linear and nonlinear autoencoders. Each panel shows the overlap between the singular vectors of the encoder weights and the population spike $\bv$ as a function of the sample ratio $n_u/p$, for a fixed latent dimension $m \in \{5, 25, 50, 75, 100\}$. Data are generated from a spiked covariance model in $p = 100$ dimensions with spike strength $\lambda = 5$. Solid lines show empirical means over 5 random seeds; shaded bands show $\pm 1$ standard deviation. The dotted line shows the theoretical PCA overlap $\bar p_{\bv, \bv} $ derived from Lemma~\ref{lem:overlap_measure}. Both linear and nonlinear autoencoders show a mismatch in low sample regimes, with a more evident mismatch in nonlinear autoencoders and in autoencoders with larger bottlenecks. For large sample regimes, linear and nonlinear autoencoders match PCA well, and show the phase transitions predicted by our theory (see panels \textbf{(a-d)} in Figure~\ref{fig:fig3} discussed in Section \ref{sec:experiements_1}, and Figures~\ref{fig:app_autoencoders_larger_snr}-\ref{fig:app_autoencoders_smaller_snr} discussed in Appendix \ref{app:autoencoder}).}
    \label{fig:app_autoencoders_overlaps}
\end{figure}

\subsubsection{Additional transformer experiments}
\label{app:experiments_real}

We first consider the setting where PCA is applied to representations extracted from the downstream task (Figure~\ref{fig:app_transformers_downstream_pca}), rather than from pretraining data as in Figure~\ref{fig:fig3}\textbf{(e)}. In this regime, compression remains beneficial for generalisation when the downstream dataset is small, with an optimal intermediate dimension $m$. The dependence on the pretraining dataset size $n_u$ is weaker than in the pretraining-PCA setting, but still observable. This likely reflects the evolution of the representation spectrum during training, with increasing spike size (effective $\lambda$) as shown in Figure~\ref{fig:app_transformers_eigenvalues}.

We next return to applying PCA to pretraining data and report validation accuracy across a broader range of downstream dataset sizes (Figure~\ref{fig:app_transformers_val_accuracy}). For small downstream datasets, an intermediate compression level is optimal, particularly for models trained for longer durations. In contrast, for large downstream datasets, performance is maximized without compression ($m \approx p$). At the smallest dataset sizes, the optimal $m$ is less stable across runs.

Neither experimental setup provides a direct match to the theoretical model. In particular, the pretraining and downstream datasets differ not only in semantic structure but also in sequence length, which affects the learned representations and limits the transferability of PCA computed on pretraining data. Despite this mismatch, both PCA-based approaches match or outperform the no-compression baseline (Figure~\ref{fig:app_transformers_compare}).

\begin{figure}[h!]
\begin{center}
\includegraphics[width=\textwidth]{figures/app_transformers_downstream_pca.pdf}
\end{center}
\caption{Optimal $m$ using PCA on representations of the downstream task. We use the same layout as Figure~\ref{fig:fig3}\textbf{(e)}, but the PCA basis is fit on the downstream labelled set rather than on the pretraining corpus. \textbf{(a)} Optimal $m$ for validation accuracy; \textbf{(b)} optimal $m$ for training accuracy. Compared to doing PCA on the pretraining corpus (Figure~\ref{fig:fig3}\textbf{(a)}), downstream PCA tends to favor larger $m$ and the dependence on the checkpoint is weaker.} 
\label{fig:app_transformers_downstream_pca}
\end{figure}

\begin{figure}[h!]
\begin{center}
\includegraphics[width=\textwidth]{figures/app_transformers_val_acc.pdf}
\end{center}
\caption{Validation accuracy vs number of PCA components $m$ across five downstream task sizes. Note that the values of $m$ are on a log scale. Each panel shows a different value of $n_l$ spanning the full range from $n_l=100$ to $n_l=67{,}349$; curves are coloured by pretraining checkpoint. Shaded bands show the standard deviation over three random seeds. For smaller tasks, the curves have a clear peak and become progressively more monotone as $n_l$ grows, reflecting that small tasks benefit from aggressive compression while large tasks can exploit finer-grained directions in representation space.} 
\label{fig:app_transformers_val_accuracy}
\end{figure}

\begin{figure}[h!]
\begin{center}
\includegraphics[width=\textwidth]{figures/app_transformers_eigenvalue_histograms.pdf}
\end{center}
\caption{Eigenvalue spectra of \texttt{Pythia-70M-deduped} last-token last-hidden-layer representations at five pretraining checkpoints. Each panel shows the empirical distribution of covariance eigenvalues computed from the representation of the \texttt{sst2} task. Throughout pretraining, the spectrum shows a clear bulk and a few outlying eigenvalues. As pretraining progresses (steps 128 to 143,000), the outlying eigenvalues become more and more prominent.} 
\label{fig:app_transformers_eigenvalues}
\end{figure}

\begin{figure}[h!]
\begin{center}
\includegraphics[width=\textwidth]{figures/app_transformers_compare_models.png}
\end{center}
\caption{Validation accuracy vs pretraining checkpoint for three probing conditions across five downstream task sizes. Each panel shows a different downstream task size $n_l$; the $x$-axis is the pretraining step on a log scale. Three curves are compared: Pretrain PCA (pink, solid) fits the PCA basis on \texttt{Pile} representations from the same checkpoint and selects the best $m$; Downstream PCA (blue, dashed) fits PCA on the downstream task representations (\texttt{sst2}) and selects the best $m$; No PCA (grey, dotted) uses the full-dimensional representations (largest available $m$). For each curve, the displayed value is the mean over three seeds at the per-seed optimal $m$, with the shaded area corresponding to 1 standard deviation. The gap between PCA and the no-PCA baseline is largest at small and intermediate $n_l$, where dimensionality reduction acts as regularization. Pretrain PCA matches downstream PCA for intermediate and large $n_l$, with the mismatch coming from the difference in the type of data available in the pretraining and downstream datasets.} 
\label{fig:app_transformers_compare}
\end{figure}

\subsection{Implementation details}
\label{app:implementation}

\subsubsection{Figure \ref{fig:fig0}}

\paragraph{Panel (a)} Diagram of the model made with \texttt{figma}.

\paragraph{Model.}
We define the signal-to-noise ratio as \( \mathrm{SNR} = \frac{\|\wstar\|^2}{\sigma^2} \) and the alignment between the regression target and the spike as \( \eta = \cos\theta \). Throughout, we fix \( p = 500 \) and \( \sigma^2 = 1 \).

\paragraph{Panels (b-c) (generalisation and training error vs \( \alpha \)).}
We fix \( n_u \approx n_l \approx 300 \), \( \mathrm{SNR} = 9.0 \), and \( \eta = 1.0 \). The parameter \( \alpha \) is swept over 30 values in \( [0.01, 1.0] \) for each of four spike strengths \( \lambda \in \{1.1, 2.0, 5.0, 10.0\} \). Theoretical predictions (solid curves) are compared against empirical averages over multiple realisations (dashed curves, shading corresponds to 1 standard deviation).

\subsubsection{Figure \ref{fig:fig1}}

\paragraph{Model.}
We define the signal-to-noise ratio as \( \mathrm{SNR} = \frac{\|\wstar\|^2}{\sigma^2} \) and the alignment between the regression target and the spike as \( \eta = \cos\theta \). Throughout, we fix \( p = 500 \) and \( \sigma^2 = 1 \).

\paragraph{Panel (a) (minimum generalisation error).}
For $\mathrm{SNR} = 9.0$, $\lambda^\star = 5.0$, and $\eta = 1.0$, we plot
$\min_\alpha E_{\rm gen}(\alpha)$ over a $100 \times 100$ grid with
$n_u, n_l \in [1, 3000]$.
Dashed white lines mark $n_u = p$ and $n_l = p$ ($p = 500$).
The colour scale is logarithmic.

\paragraph{Panel (b) (gain from tuning $\alpha$ relative to $\alpha = 1$).}
Under the same setting, we compute $E_{\rm gen}(\alpha\!=\!1) - E_{\rm gen}(\alpha^\star)$
over the same grid.
Positive values indicate regions where using the full prior representation is suboptimal
and tuning $\alpha$ reduces generalisation error. The colour scale is logarithmic.

\paragraph{Panel (c) (gain from tuning $\alpha$ relative to $\alpha_{\min}$).}
Under the same setting, we compute
$E_{\rm gen}(\alpha\!=\!\alpha_{\min}) - E_{\rm gen}(\alpha^\star)$,
where $\alpha_{\min} \approx 0.01$ is the smallest value in the sweep
($m = 1$ principal component retained).
This quantifies the benefit of tuning $\alpha$ upward from the minimal representation.
The colour scale is logarithmic.

\paragraph{Panel (d) (marginal substitution rate).}
Under the same setting, we compute the ratio
$(\partial E_{\rm gen} / \partial n_u) / (\partial E_{\rm gen} / \partial n_l)$
evaluated at the optimal $\alpha^\star$, with gradients approximated numerically
via central differences on the $100\times100$ grid.
Values above~1 (red) indicate that collecting an additional unlabelled sample $n_u$
reduces generalisation error more than collecting a labelled sample $n_l$;
values below~1 (blue) indicate the converse. The colour scale is centred at~1.

\paragraph{Panel (e) (generalisation-optimal $\alpha^\star$).}
Under the same setting, we plot the value of $\alpha^\star = \arg\min_\alpha E_{\rm gen}(\alpha)$
over the same grid, normalised so that $\alpha \in [0,1]$.

\subsubsection{Figure \ref{fig:fig2}}

\paragraph{Model.}
We define the signal-to-noise ratio as \( \mathrm{SNR} = \frac{\|\wstar\|^2}{\sigma^2} \) and the alignment between the regression target and the spike as \( \eta = \cos\theta^2 \). Throughout, we fix \( p = 500 \) and \( \sigma^2 = 1 \).

\paragraph{Panel (a) (optimal \( \alpha^\star \) heatmap).}
For \( \mathrm{SNR} = 9.0 \), \( \lambda = 5.0 \), and \( \eta = 1.0 \), we compute the theoretical optimal representation size \( \alpha^\star \) that minimises generalisation error over a \( 100 \times 100 \) grid, where \( n_u \) and \( n_l \) each vary from 1 to 3000. Analytical phase-transition boundaries (smooth transition and sharp transition) are overlaid.

\paragraph{Panel (b) (phase portrait SNR).}
At fixed \( \lambda = 5.0 \) and \( \eta = 1.0 \), we plot the smooth and sharp transition boundaries in the \( (n_u, n_l) \) plane for each of the six SNR values \( \{0.1, 0.5, 1.0, 3.0, 5.0, 9.0\} \). Increasing SNR progressively reduces the region in which full-rank representations (\( \alpha \approx 1 \)) are suboptimal.

\paragraph{Panel (c) (phase portrait spike strength).}
At fixed \( \mathrm{SNR} = 9.0 \) and \( \eta = 1.0 \), we show the corresponding transition curves for \( \lambda \in \{2, 3, 5, 7, 10\} \). Stronger spikes shift the phase boundaries.

\paragraph{Panel (d) (phase portrait alignment).}
At fixed \( \mathrm{SNR} = 9.0 \) and \( \lambda = 5.0 \), we plot the transition curves for \( \eta \in \{0.0, 0.25, 0.5, 0.75, 1.0\} \).  All panels are evaluated over \( n_u, n_l \in [1, 3000] \).

\subsubsection{Figure \ref{fig:fig3}}

\paragraph{Autoencoders}

\paragraph{Data generation.}
Both covariances use $p = 200$ features, noise standard deviation $\sigma = 1$, $\mathrm{SNR} = 9, \|\wstar \|=3$.
\begin{itemize}
    \item \textit{Spiked identity ($\lambda=5.0$)}: $\bSigma = \mathbf I_p + (\lambda-1)\bv \bv^\top$; true weight vector aligned with the spike $\wstar = \|\wstar\| \bv$, where $\bv$ is the top eigenvector of $\bSigma$.
    \item \textit{Spiked Toeplitz} ($\lambda = 5.0$): $\bSigma$ is a Toeplitz matrix with $\rho = 0.5$, with $\lambda$ added to the top eigenvalue; $\wstar$ again aligned with the top eigenvector.
\end{itemize}
Each grid point $(n_u, n_l)$ averages five independent replications, with $n_u, n_l \in [1, 1200]$ each swept over $20$ uniform values.

\paragraph{Models.}
For each $(n_u, n_l)$ pair, $\alpha \in \{0.01, 0.05, \ldots, 1.0\}$ (20 values) sets the latent dimension $m = \lfloor \alpha p \rfloor$.  The three models below are evaluated.
\begin{enumerate}
    \item \textit{PCA + Regression (PR)}: Eigenvectors of $\Ss_u = \X_u^\top \X_u / n_u$ are computed; $\X_l$ is projected onto the top-$m$ eigenvectors and OLS is applied in the latent space.
    \item \textit{Linear AE}: Encoder ($\mathbb{R}^p \to \mathbb{R}^m$) and linear decoder ($\mathbb{R}^m \to \mathbb{R}^p$) trained by minimizing MSE on $\X_u$; the frozen encoder maps $\X_l$ to features for OLS.
    \item \textit{Nonlinear AE}: Same as the linear AE but with a sigmoid nonlinearity applied at the encoder, $\mathbf z = \sigma(\mathbf W_{\mathrm{enc}}\, \x)$.
\end{enumerate}
All autoencoders are implemented in \texttt{JAX/Equinox} and trained with Adam ($\mathrm{lr} = 10^{-3}$, 20{,}000 epochs, batch size 500). Weights are initialized as $\mathbf W_{i,j} \sim \mathcal{N}(0, 1/\sqrt{d_{\mathrm{in}}})$.

\paragraph{Panel descriptions.}
Each panel is a $20 \times 20$ heatmap with $n_u$ on the $x$-axis, $n_l$ on the $y$-axis, and color encoding $\alpha^\star \in [0,1]$ (the value minimizing mean loss over the five replications).

The autoencoder component of the figure is organized as 2 rows $\times$ 2 columns:

\begin{center}
\begin{tabular}{lcccc}
\toprule
 & Linear AE & Nonlinear AE \\
\midrule
Spiked identity   & (a) & (b)  \\
Spiked Toeplitz & (c) & (d) \\
\bottomrule
\end{tabular}
\end{center}

Columns compare linear vs nonlinear AEs; rows contrast spiked identity and spiked Toeplitz covariance structures; each panel shows generalisation-optimal $\alpha^\star$. A single shared colorbar applies to all panels.

\paragraph{LLMs}
\paragraph{Model and representations.}
All experiments use \texttt{Pythia-70M-deduped} \cite{biderman2023pythia} (70M parameters, trained on the deduplicated Pile \cite{gao2020pile, biderman2022datasheet}). We evaluate 10 checkpoints at steps 2, 8, 32, 128, 512, 2000, 5000, 20000, 80000, and 143{,}000. Representations are last-token hidden states from the final layer of 64-token truncated inputs, yielding vectors in $\mathbb{R}^{512}$.

\paragraph{Downstream task.}
Binary sentiment classification on SST-2 from the GLUE benchmark \cite{wang_glue_2019}. Fixed 872-sentence validation set; training sizes $n_l \in \{100, 250, 500, 1000, 2000, 4000, 10000, 20000, 40000, 67349\}$, $n_l$ training examples are randomly chosen out of the maximum 67349 for each seed. 

\paragraph{PCA.}
We sweep $m \in \{2, 3, 4, 7, 11, 16, 25, 39, 60, 92, 142, 218, 334, 512\}$ (approximate $\log_2$ grid), and consider the two conditions below.
\begin{enumerate}
    \item \textit{Downstream PCA} (Figure~\ref{fig:app_transformers_downstream_pca}): basis fit on the $n_l$ labelled training representations;
    \item \textit{Pretrain-PCA} (Figure~\ref{fig:fig3}\textbf(e-g)): basis fit on up to 51{,}200 Pile sequences (50 batches $\times$ 1{,}024 sequences) from the same checkpoint, using \texttt{torch.linalg.eigh}. For earlier checkpoints, the basis is fit on $1024 \times n_\mathrm{checkpoint}$ sequences.
\end{enumerate}

\paragraph{Linear probe.}
Projected representations are fed to a 2-output linear layer trained with cross-entropy loss using Adam ($\text{lr} = 10^{-3}$), for up to 10{,}000 epochs with early stopping. Linear probes were randomly initialized with small weights for each seed.

\paragraph{Seeds and aggregation.}
Each (checkpoint, $n_l$, $m$) configuration is run with 3 random seeds (affecting probe initialization and downstream data shuffling). Figures show mean $\pm$ std; optimal-$m$ heatmaps report the mean per-seed argmax.

\paragraph{Panel descriptions.} 
\begin{itemize}
    \item \textbf{(e)} Heatmap for optimal $m$ under pretrain-PCA, selected by validation accuracy. 
    \item \textbf{(f)} Eigenvalue histogram at step 143{,}000, computed from the 872-sentence SST-2 validation covariance; log-density on the $y$-axis. The full set of 5 checkpoints is shown in Figure~\ref{fig:app_transformers_eigenvalues}.
    \item \textbf{(g)} Validation accuracy vs $m$ ($x$-axis in $\log_2$ scale) under pretrain-PCA for $n_l = 2000$, with curves colored by pretraining progress.
\end{itemize}

\subsubsection{Compute}

\paragraph{Theory curves.}
Theory curves were computed on a single CPU (Intel Xeon E5-2680 v4 @ 2.40 GHz,
no GPU required). Each configuration sweeps a $100 \times 100$ grid of
$(n_u, n_l)$ values over 30 values of $\alpha$, amounting to approximately
$300{,}000$ evaluations of closed-form expressions. The full sweep covers
24 configurations,
$\lambda^* \in \{1.1, 2.0, 5.0, 10.0\}
\quad \times \quad
\mathrm{SNR} \in \{0.1, 0.5, 1.0, 3.0, 5.0, 9.0\},
$ with alignment $\eta = 1.0$, totaling approximately $7.2$ million evaluations.
When executed in parallel, the full sweep completes in approximately 2h. Peak RAM usage was approximately 2 GB per configuration.

\paragraph{Simulations.}
Simulations were distributed over a SLURM cluster. Each job was allocated
4 CPUs and 30 GB of RAM. The full grid
$ 6\ \mathrm{SNR} \times 4\ \lambda_1 \times 4\ \lambda_2 \times 30\ n_b $
produced 2,880 SLURM jobs, each processing 30 values of $n_l$ with 100
independent replications at $p = 500$. Each replication involves an
eigendecomposition and pseudoinverse of a $p \times p$ matrix for each of the 30 values of $\alpha$. When executed in parallel, the full sweep completes in approximately 10h.

\paragraph{Autoencoder experiment.}
Simulations were distributed over a SLURM cluster. Each job was allocated
1 GPU, 8 CPUs, and 16 GB of RAM. The full grid
$20\ n_{\mathrm{base}} \times 20\ n_{\mathrm{task}}$
produced 400 SLURM jobs, each processing 20 values of $\alpha$ with 5
independent replications at $p = 200$. Each replication trains a linear and a
nonlinear autoencoder for 20,000 epochs for each value of $\alpha$, followed by
OLS regression in the learned latent space. The individual jobs completed in 10-50min depending on parameters.

\paragraph{LLM probe experiment.}
Simulations were distributed over a SLURM cluster. Each job was allocated
1 GPU, 16 CPUs, and 32 GB of RAM. Two sets of 10 SLURM jobs were submitted:
one per checkpoint of Pythia-70M-deduped for the downstream-PCA condition and
one for the pretrain-PCA condition. Each job sweeps over 10 training-set sizes
and 3 seeds, fitting a linear probe for up to 10,000 epochs with early stopping
across up to 14 PCA dimensionalities $m$.

\subsubsection{LLM Usage }
\label{llm}

Large language models (LLMs) were used throughout this research project to assist with writing, editing, and formatting the manuscript, including improving clarity and refining \LaTeX{} expressions. LLMs were also used as an aid for understanding and discussing technical concepts, as well as for debugging and drafting portions of the experimental and simulation code. All scientific results, analyses, and conclusions were verified and validated by the authors.

\end{document}